\newsavebox{\measurebox}
\newtheorem*{theorem*}{Theorem}
\newtheorem*{remark*}{Remark}
\newcommand{\deepmethod}{\textsc{DeepCAC}}
\newcommand{\blue}[1]{\textcolor{blue}{#1}}
\begin{document}

%

%

\setcounter{secnumdepth}{2} 

\twocolumn[

\aistatstitle{Clustering Aware Classification for Risk Prediction \& Subtyping in Clinical Data}

\aistatsauthor{Shivin Srivastava\textsuperscript{1}\And
Siddharth Bhatia\textsuperscript{1} \And
Lingxiao Huang\textsuperscript{2} \And
Jun Heng Lim\textsuperscript{3} \And
Kenji Kawaguchi\textsuperscript{1} \And
Vaibhav Rajan\textsuperscript{1}}

\aistatsaddress{\textsuperscript{1}School of Computing, National University of Singapore, Singapore \\
\textsuperscript{2} Huawei TCS Labs, Shanghai, China \\
\textsuperscript{3} Nanyang Institute of Technology, Singapore} ]

\begin{abstract}
\looseness=-1
In data containing heterogeneous subpopulations, classification performance benefits from incorporating the knowledge of cluster structure in the classifier. Previous methods for such combined clustering and classification either 1) are classifier-specific and not generic, or 2) independently perform clustering and classifier training, which may not form clusters that can potentially benefit classifier performance. The question of how to perform clustering to improve the performance of classifiers trained on the clusters has received scant attention in previous literature, despite its importance in several real-world applications. In this paper, first, we theoretically analyze the generalization performance of classifiers trained on clustered data and find conditions under which clustering can potentially aid classification. This motivates the design of a simple $k$-means-based classification algorithm called \textbf{C}lustering \textbf{A}ware \textbf{C}lassification (CAC) and its neural variant {\bf \deepmethod}. \deepmethod\ effectively leverages deep representation learning to learn latent embeddings and finds clusters in a manner that make the clustered data suitable for training classifiers for each underlying subpopulation. Our experiments on synthetic and real benchmark datasets demonstrate the efficacy of \deepmethod\ over previous methods for combined clustering and classification.
\end{abstract}

\section{Introduction}
\label{sec:intro}
\looseness=-1
Many real datasets have complex underlying structures such as clusters and intrinsic manifolds.
For classification tasks on such data, linear classifiers fail to learn well because of the inherent non-linear data distribution.
Non-linear classifiers such as (deep) neural networks perform well in such cases, but often require large training data sets to achieve good generalization. When clusters are found or suspected in the data, it is well known that classification performance benefits from incorporating knowledge of such cluster structure in the model, for both linear and non-linear classifiers \cite{qian2012simultaneous,gu2013clustered,chakraborty2017ec3,li2020predicting}.

\looseness=-1
As an example, consider the problem of predicting the risk of disease using patients' clinical data.
Patient populations, even for a single disease, show significant clinical heterogeneity.
As a result, subpopulations (called subtypes) having relatively homogeneous clinical characteristics can often be found in the data.
Models that do not consider underlying subtypes may be consistently underestimating or overestimating the risks in specific subtypes
\cite{snyderman2012personalized}, and
risk models that account for subtypes are more effective  \cite{alaa2016personalized,suresh2018learning,li2020predicting,ibrahim2020classifying}.
Similar examples illustrate the benefit of clustering to build classifiers in, e.g., image processing \cite{kim2008mcboost,peikari2018cluster} and natural language processing \cite{alsmadi2015clustering}.

Previous works on using clustering for classification broadly fall into two categories.
The first group of techniques modify specific classification techniques to account for the underlying clusters, e.g., Clustered SVMs \cite{gu2013clustered}.
These techniques are closely tied to the modified classifier and inherit both its modeling strengths and weaknesses.
The second and more common approach is to first cluster the data and then independently train classifiers on each cluster.
Although simple and intuitive, such a `cluster-then-predict' approach
is not designed to form clusters
optimized to improve
the performance of classifiers trained on those clusters. 
To the best of our knowledge, the question of performing clustering to improve the performance of classifiers trained on the clusters has not been addressed by previous literature.

\looseness=-1
Thus, in this paper, we first investigate the fundamental question of when and how clustering can help in obtaining accurate classifiers.
Our analysis yields novel insights into the benefits of using simple
classifiers trained on clusters compared to simple or complex 
(in terms of Rademacher complexity \cite{mohri2012foundations}) classifiers trained without clustering. It provides clues on when such a clustering-based approach may or may not improve the subsequent classification.
{
This leads to the design of our first
clustering-based algorithm for classification, called Clustering Aware Classification (CAC). CAC
finds 
clusters that are intended to be used as training datasets by the (base) classifiers of choice e.g., logistic regression, SVM, or neural networks. 
The key idea of our approach is to induce class separability (polarization) within each cluster, for better downstream classification, in addition to partitioning the data.
This is effected by designing a cost function that can be used within the optimization framework of $k$-means.}

Theoretical and empirical analysis of CAC demonstrates the benefits of inducing class separability in clusters for downstream classification.
However, linear separability in CAC and clustering in the data space do not yield good quality clusters.
To address this limitation we turn to recent deep clustering methods that have shown the efficacy of clustering on low-dimensional embeddings obtained from deep representation learning, through autoencoders.
Adopting this approach for simultaneous clustering and classification is not straightforward as we face two hurdles.
First, the class-separability loss in CAC cannot be optimized directly during network training. 
Second, we often obtain degenerate clustering solutions with empty clusters.
We solve these two problems in our second algorithm \deepmethod, a neural variant of CAC, that uses an additive margin softmax loss to induce class separability in clusters and a normalization term to avoid degenerate solutions. In summary, our contributions are:

\vspace{-5mm}

\begin{enumerate}[noitemsep,topsep=0pt,labelindent=0em,leftmargin=*]
    \item {Our theoretical analysis provides insights into the fundamental question of how clustering can aid in potentially improving the performance of classifiers trained on the clusters}.
    
    \item We propose the concept of class separation aware clustering and provide a theoretical result showing how class separation bounds the log-loss of clusters.
    We design a simple, proof-of-concept algorithm, Clustering Aware Classification (CAC) based on $k$-means algorithm.
    We conduct 
    extensive empirical 
    studies 
    on synthetic data and real data to evaluate its efficacy and validate our theoretical results.

    \item We propose \deepmethod, a novel model for simultaneous clustering and classification that realizes cluster-specific class separation through additive margin softmax loss. We introduce novel techniques 
    to stabilize training and prevent degenerate clusters.

    \item Our experimental results on benchmark clinical datasets
    show that \deepmethod\ yields 
    improved performance over the state-of-art approaches for combined clustering and classification on large, highly imbalanced datasets while being comparable on mid-sized datasets.
    We also demonstrate the practical utility of our approach through a case study on the prediction of length of stay in Intensive Care Units in hospitals.
\end{enumerate}

\textbf{REPRODUCIBILITY}: Our code and datasets are publicly available at \href{https://www.dropbox.com/s/vh9cfq0qs7k4vou/CAC\_code.zip}{\blue{this anonymous link}}.

\section{Related Work}
\label{sec:related_work}
\looseness=-1
The idea of training multiple local classifiers on portions of the dataset has been extensively studied for specific classifiers \cite{hess2020softmax,elouedi2014hybrid,vincent2014sparse,visentin2019predicting,mahfouz2020ensemble,beluch2018power,titov2010unsupervised,hao2018simultaneous}.
For instance,
\cite{qi2011llsvm} presents a novel locality-sensitive support vector machine (LSSVM) for the image retrieval problem. Locally Linear SVM \cite{torr2011locally} has a smooth decision boundary and bounded curvature. Its authors show how functions defining the classifier can be approximated using any local coding scheme.

\looseness=-1
There are other related works, which aim to find suitable representations of raw data such that the performance in downstream prediction tasks is better when they use the learned embeddings. Large Margin Nearest Neighbour (LMNN) \cite{weinberger2009distance} introduces the much-celebrated triplet loss. In this work, the authors find a data-specific Mahalanobis distance embedding for the data, subject to the constraint that $k$ similarly labeled points are mapped in the neighborhood of every point. The transformed data is more amenable for a $k$ Nearest Neighbour classifier to work with. \cite{song2017parameter} extends LMNN to make it parameter-free. 
Some Bayesian approaches have been developed for combined classification and clustering, e.g., \cite{qian2012simultaneous,cai2009simultaneous} but they are computationally expensive.

Recent neural models for combined clustering and classification, that are closest to our work, are DMNN, GRASP and, AC-TPC.
In Deep Mixture Neural Network (DMNN) \cite{li2020predicting} neural representations, from an encoder, are clustered using softmax gating and a mixture of experts, comprising neural networks, is used to predict risk.
However, as seen in our experiments, clustering through softmax gating does not yield well-characterized subtypes and does not improve risk prediction.
GRASP \cite{zhang2021grasp} aims to boost healthcare models by incorporating auxiliary information from similar patients.
In GRASP, clustering and prediction processes influence and enhance each other over iterations and thus it can be viewed as an algorithm for simultaneous classification and clustering. 
In \cite{lee2020temporal} an Actor-Critic Approach for Temporal Predictive Clustering (AC-TPC) is designed wherein latent representations from patient data
are clustered ensuring intra-cluster homogeneity in outcomes. A predictor neural network is conditioned on cluster centroids, with actor-critic training.

\section{Background}
\label{sec:bgd}
Given a set of data samples $X = (x_1, x_2, \cdots, x_N)$ with $N$ records $x_i \in \mathbb{R}^{d}$ together with binary labels $y_i\in \left\{0,1\right\}$, the task of clustering is to group the $N$ points into $K$ clusters.
$K$-means is arguably the most widely adopted algorithm due to its simplicity, speed, properties that are well studied in literature \cite{lloyd1982least,coates2012kmeans}. It works by optimizing the following cost function:

\begin{equation}
\begin{aligned}
\min _{\boldsymbol{M} \in \mathbb{R}^{M \times K},\left\{\boldsymbol{s}_{i} \in \mathbb{R}^{K}\right\}} & \sum_{i=1}^{N}\left\|\boldsymbol{x}_{i}-\boldsymbol{M} \boldsymbol{s}_{i}\right\|_{2}^{2}\\
&\text {s.t.\ } s_{j, i} \in\{0,1\}, \mathbf{1}^{T} \boldsymbol{s}_{i}=1 \quad \forall i, j \nonumber
\end{aligned}
\end{equation}

where $\boldsymbol{s}_i$ is the assignment vector of data point $i$ which has
only one non-zero element, $s_{j,i}$ denotes the $j^{th}$ element of
$\boldsymbol{s}_i$, and the $k^{th}$ column of $M$ (the cluster assignment matrix), i.e., $m_k$, denotes the centroid of the $k^{th}$ cluster. $k$-means works well when the data points are evenly distributed about their centroids. Real world data, however, are generally high dimensional and thus not `$k$-means friendly'. 
The Deep Clustering Network (DCN) \cite{yang2017towards} algorithm solves this problem by designing a deep neural network (DNN) structure and the associated joint optimization criterion for  
joint dimensionality reduction (DR) and clustering instead of using DR as a mere preprocessing technique, also followed in other works, e.g. \cite{yang2016learning,yang2016joint}.
The loss function they optimize takes the form:
\[
\min_{\Ucal, \Wcal} \ell(X, g(f(X))) +  \beta \cdot \sum_{i=1}^{N} \|f(x_i) - Ms_i\|^2 + \gamma \cdot r(f(X))
\]

{where $X=[x_1,\cdots,x_N]$, $\beta>0$ is a parameter for balancing data fidelity and the latent cluster structure; $\ell$ is a function that measures the reconstruction error. Popular choices for $\ell$ include the KL divergence loss and least square loss; $r$ is a regularization imposed upon the latent embeddings to encourage other properties (like class separability in our case) weighted by $\gamma>0$}.

\section{Problem Statement}
\label{sec:problem}

Let $X = (x_1, x_2, \cdots, x_N)$ be a training dataset with $N$ records $x_i \in \mathbb{R}^{d}$ together with binary labels $y_i\in \left\{0,1\right\}$. 
%
Given an integer $k\geq 1$,
our goal is to:
\begin{enumerate}[noitemsep,topsep=0pt,labelindent=0em,leftmargin=*]
\item 
Divide $X$ into $k$ non-empty, distinct clusters $\Ccal = \{C_1, C_2, \cdots, C_k\}$, and
\item
Obtain a collection of $k$ classifiers $\Fcal = \{f_1, f_2, \cdots, f_k\}$ where each $f_j: C_j\rightarrow \left\{0,1\right\}^{|C_j|}$ is trained on all $x_i \in C_j$ with the objective
$
\argmin_{\Fcal, \Ccal} \sum_{j=1}^{k} \sum_{i \in C_j} \ell(f_j(x_i), y_i).
$
\end{enumerate}

There is no restriction on the choice of the classifier, which is determined apriori by the user.

\section{Can Clustering Aid Classification?}
\label{sec:clustering_rademacher}

\looseness=-1
We provide a principled motivation behind our clustering approach by analyzing when clustering can potentially help obtain accurate classifiers. Our analysis yields crucial insights into developing the CAC framework. Fix $j\in \{1,\dots,k\}$ and let $S_j=\{(x_i, y_i)\}_{i=1}^{m_{j}}$ be the training dataset from the $j$-th cluster (having $m_j$ points) with 
$x \in\Xcal_{j}$ and $y \in \Ycal_{j}$ s.t. $\Xcal_{j} = \{x: x \in C_j\}$ and $\Ycal_{j} = \{y: y \in C_j\}$.
Then, the following theorem --- a direct application of a previous result \cite{bartlett2002rademacher,mohri2012foundations} to our problem --- suggests a potential benefit of clustering in terms of the expected error $\EE_{x,y}[\ell(f(x),y)]$ for unseen data: 

\begin{theorem}[Proof in Appendix \ref{app:thm_radamacher}]
\label{thm:clustering_rademacher}
Let $\Fcal_j$ be a set of maps $x \in C_j \mapsto f_{j}(x)$.  Let $\Fcal= \{x\mapsto f(x):f(x) = \sum_{j=1}^K \one\{x \in C_j\} f_j(x), f_j \in \Fcal_j\}$. Suppose that $0 \le \ell\left(q, y\right) \le \lambda_{j}$ for any $q \in\{f(x):f\in \Fcal_{j}, x\in C_j\}$ and  $y \in \Ycal_{j}$.  Then, for any $\delta>0$, with probability at least $1-\delta$, the following holds: for all maps $ f \in\Fcal$,
\begin{align}  \label{eq:new:1}
&\EE_{x,y}[\ell(f(x),y)]
\le \sum _{j=1}^K\Pr(x \in C_j)\cdot \biggl(\frac{1}{m_{j}}\sum_{i=1}^{m_{j}} \ell(f_{}(x_i^{j}),y_i^{j})+ \nonumber\\
&2\hat \Rcal_{m_{j}}(\ell \circ \Fcal_{j})+3\lambda_{j} \sqrt{\frac{\ln(K/\delta)}{2m_{j}}}\biggr),
\end{align}
where \\
$\hat \Rcal_{m_j}(\ell \circ \Fcal_{j}):=\EE_{\sigma}[\sup_{f_{j} \in \Fcal_{j}}\frac{1}{m_j} \sum_{i=1}^{m_j} \sigma_i \ell(f_{j}(x_{i}),y_{i})]$, $Pr(x \in C_j) = 1$ if $x\in C_j$ and $\sigma_1,\dots,\sigma_{m_j}$ are independent uniform random variables taking values in $\{-1,1\}$.
\end{theorem}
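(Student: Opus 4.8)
The plan is to read this as a per-cluster instance of the standard Rademacher-complexity generalization bound \cite{mohri2012foundations}, stitched across the $K$ clusters by the law of total expectation together with a union bound. Since the combined map acts as $f_j$ on $C_j$ by construction, the first step is to condition the test risk on cluster membership:
\[
\EE_{x,y}[\ell(f(x),y)] = \sum_{j=1}^{K}\Pr(x\in C_j)\,\EE_{x,y}[\ell(f_j(x),y)\mid x\in C_j].
\]
This reduces the claim to bounding each cluster-conditional risk $\EE[\ell(f_j(x),y)\mid x\in C_j]$ on its own.

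For each fixed $j$ I would apply the classical result to the composed loss class $\ell\circ\Fcal_j$, whose members take values in $[0,\lambda_j]$, using the $m_j$ samples $S_j$ drawn from the cluster-conditional distribution. The standard bound gives, with probability at least $1-\delta_j$, simultaneously for all $f_j\in\Fcal_j$,
\[
\EE[\ell(f_j(x),y)\mid x\in C_j] \le \frac{1}{m_j}\sum_{i=1}^{m_j}\ell(f_j(x_i^{j}),y_i^{j}) + 2\hat\Rcal_{m_j}(\ell\circ\Fcal_j) + 3\lambda_j\sqrt{\frac{\ln(1/\delta_j)}{2m_j}}.
\]
I would invoke this as a black box: the factor $2$ on the Rademacher term comes from symmetrization, while the $3\lambda_j$ and the form of the tail term come from the two bounded-differences (McDiarmid) applications that control the supremum deviation and the empirical-versus-expected Rademacher complexity, with the range $\lambda_j$ entering as the bounded-differences constant.

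To make all $K$ cluster-wise statements hold at once, I would set $\delta_j=\delta/K$ and take a union bound, so their intersection has probability at least $1-\delta$; this is precisely what promotes $\ln(1/\delta_j)$ into the $\ln(K/\delta)$ appearing in \eqref{eq:new:1}. Plugging each cluster bound back into the decomposition above, weighted by $\Pr(x\in C_j)$, reassembles the stated inequality.

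The main point to get right is not a calculation but an independence assumption: within $C_j$ the sample $S_j$ must be i.i.d.\ from the law of $(x,y)$ conditioned on $x\in C_j$, so that the i.i.d.\ Rademacher bound applies verbatim. I would therefore rely on the theorem's framing in which the partition $\{C_j\}$ is treated as fixed rather than learned from $S_j$; otherwise the per-cluster samples would be coupled to the clustering, and one would need a bound uniform over admissible partitions. A secondary bookkeeping matter is the exact constant inside the logarithm---reconciling the $\ln(2/\delta_j)$ that the empirical-Rademacher form naturally produces with the stated $\ln(K/\delta)$---which is a routine adjustment of how the confidence budget $\delta$ is split across clusters.
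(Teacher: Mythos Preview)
Your proposal is correct and matches the paper's proof essentially step for step: the paper likewise decomposes $\EE_{x,y}[\ell(f(x),y)]$ by conditioning on cluster membership, applies the standard Rademacher bound (quoted as a lemma from \cite{mohri2012foundations}) to each conditional term, and takes a union bound with $\delta_j=\delta/K$ before summing with the weights $\Pr(x\in C_j)$. Your remarks about the fixed-partition assumption and the constant in the logarithm are appropriate caveats that the paper does not spell out.
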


Theorem \ref{thm:clustering_rademacher} shows that the expected error $\EE_{x,y}[\ell(f(x),y)]$ for unseen data is bounded by three terms: the training error $\frac{1}{m_j}\sum_{i=1}^{m_j} \ell(f(x_i),y_i)$, the Rademacher complexity $\Rcal_{m_j}(\ell \circ \Fcal_{j})$ of the set of classifiers $\Fcal_{j}$ on the $j$-th cluster, and the last term $O({\ln(K/\delta)}/{2m_j})$ in equation \eqref{eq:new:1}. To apply Theorem \ref{thm:clustering_rademacher} to models without clustering, we can set the $j$-th cluster to contain all data points with $m_j=N$.

\looseness=-1
The upper bound can be used to analyze the benefit of using a simple model (e.g., a linear model) \textit{with} clustering compared to two cases of (1) a simple model and (2) a complex model (e.g., a deep neural network), both \textit{without} clustering. 
For a simple model \textit{without} clustering, the training error term $\frac{1}{m_j}\sum_{i=1}^{m_j} \ell(f(x_i^j),y_i^j)$ in equation \eqref{eq:new:1} can be large, because a simple model 
may not be able to sufficiently separate training data points (underfitting) to minimize the training error. 
In the case of a complex model \textit{without} clustering, the training error can be small but the  Rademacher complexity $\Rcal_{m_j}(\ell \circ \Fcal_{j})$ tends to be large. 
A simple model \textit{with} clustering can potentially trade-off these by minimizing Rademacher complexity $\Rcal_{m_j}(\ell \circ \Fcal_{j})$ while making the training data in the cluster linearly separable to minimize training error $\frac{1}{m_j}\sum_{i=1}^{m_j} \ell(f(x_i^j),y_i^j)$. This observation leads us to explore ways to induce class separability in data clusters.

\section{CAC}
\label{sec:cac}
\looseness=-1
The previous section discusses factors that give us hints of possible scenarios when clustering can aid classification. Using a simple model with clustering can be potentially advantageous when the first and second terms in the RHS of Equation \eqref{eq:new:1} are dominant. It thus motivates the formation of clusters, each of which can be linearly separable internally with respect to the classes for further improved performance of downstream simple classifiers. We test our idea by designing a simple algorithm, Clustering Aware Classification (CAC), that finds clusters to be used as training datasets by a chosen classifier. 
The key idea is to induce class separability within clusters to aid downstream cluster-specific classifiers.
During clustering, we measure the class separability by developing a metric based on class-specific centroids within each cluster, and theoretically analyze the effect of its use on the classification error.


\paragraph{Class Separability:}
We use a simple heuristic to measure class separability within each cluster.
For each cluster $C_j$ ($j\in [k]$), we define:
{\it cluster centroid}: $\mu(C_j) := |C_j|^{-1} \sum_{x\in C_j} x_i$,
{\it positive centroid}:  $\mu^{+}(C_j) := (\sum_{x_i \in C_j} y_i)^{-1} \sum_{x_i \in C_j} y_i x_i$, and
{\it negative centroid}: $\mu^{-}(C_j) := ({\sum_{x_i \in C_j} (1 - y_i)})^{-1} \sum_{x_i \in C_j} (1-y_i) x_i$.
Class separability is defined as the distance between positive and negative centroids within each cluster i.e. $\|\mu^{+} - \mu^{-}\|$.

\paragraph{Bounding Log-Loss of Classifiers in terms of Class Separation:}
\looseness=-1
We now show how class separation within clusters has a direct effect on the training error of classifiers employing log-loss. The following theorem also validates our choice of using $\|\mu^{+} - \mu^{-}\|$ as a heuristic to measure class separability.

\begin{theorem}[Proof in Appendix \ref{app:bounding_log_loss}]
\label{thm:bounding_log_loss}
Let $X = (x_1, x_2, \cdots, x_N)$ be a training dataset with $N$ records $x_i \in \mathbb{R}^{d}$ together with binary labels $y_i\in \left\{0,1\right\}$. Define the log-loss over the entire dataset as : $\ell(X) = -\Sigma_{i} y_i \ln(p(x_i)) + (1-y_i) \ln(1-p(x_i))$, where $p(x_i) = [1 + \exp(-\beta x_i)]^{-1}$ and $\beta = \argmin_{\beta} \ell(X)$. Then,
\[
C_1 - (C_3 \beta\mu^{+} - C_4\beta\mu^{-}) \leq \ell(X) \leq C_2 - (C_3\beta\mu^{+} - C_4\beta\mu^{-})
\]
where $C_1 = N\ln(2),\ C_2 = N\ln(1+\exp(c)) - \frac{Nc}{2},\ C_3 = \frac{N^{+}}{2},\ C_4 = \frac{N^{-}}{2},\ N^{+} = \sum_{x_i \in X} y_i$, $N^{-} = \sum_{x_i \in X} (1-y_i)$ and $c = \argmax_{i} \|\beta x_i\|$.
\end{theorem}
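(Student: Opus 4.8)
The plan is to reduce the entire statement to a one-dimensional monotonicity fact about the softplus function. First I would switch from $\{0,1\}$ labels to signed labels $\tilde y_i = 2y_i-1 \in \{-1,+1\}$ and set $z_i = \tilde y_i\,\beta x_i$. Substituting $p(x_i) = \sigma(\beta x_i) = (1+e^{-\beta x_i})^{-1}$ into the log-loss and using $1-\sigma(t)=\sigma(-t)$, the per-sample term for $y_i=1$ becomes $-\ln\sigma(\beta x_i) = \ln(1+e^{-\beta x_i})$ and for $y_i=0$ becomes $-\ln\sigma(-\beta x_i)=\ln(1+e^{\beta x_i})$; both collapse to the single expression $\ln(1+e^{-z_i})$, so that $\ell(X) = \sum_{i=1}^N \ln(1+e^{-z_i})$ irrespective of the label.

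Next I would record the exact identity that links the class-separation term to the $z_i$. Since $\tilde y_i = +1$ precisely when $y_i=1$, we have $\sum_i z_i = \beta\sum_{y_i=1}x_i - \beta\sum_{y_i=0}x_i = \beta N^{+}\mu^{+} - \beta N^{-}\mu^{-}$, hence $\tfrac12\sum_i z_i = C_3\beta\mu^{+} - C_4\beta\mu^{-}$. This is the only place the centroids enter, and it reduces the theorem to sandwiching the quantity $\ell(X) + \tfrac12\sum_i z_i = \sum_i(\ln(1+e^{-z_i}) + \tfrac{z_i}{2})$ between $C_1$ and $C_2$.

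The heart of the argument is then to analyze the single-variable function $h(z) := \ln(1+e^{-z}) + z/2$. I would show (i) that $h$ is even, since $h(-z) = \ln(1+e^{z}) - z/2 = \ln(e^{z}(1+e^{-z})) - z/2 = h(z)$; and (ii) that $h'(z) = \tfrac12 - (e^{z}+1)^{-1}$ vanishes only at $z=0$, so $h$ attains its global minimum $h(0)=\ln 2$ there and increases monotonically in $|z|$. Because every sample satisfies $|z_i| = \|\beta x_i\| \le c$ with $c=\max_i\|\beta x_i\|$, monotonicity yields the two-sided bound $\ln 2 \le h(z_i) \le h(c)$; a short simplification gives $h(c) = \ln(1+e^{-c}) + c/2 = \ln(1+e^{c}) - c/2$, so $N\,h(c) = C_2$ and $N\,h(0) = C_1$. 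Summing $\ln 2 \le h(z_i) \le h(c)$ over $i$ and subtracting $\tfrac12\sum_i z_i = C_3\beta\mu^{+}-C_4\beta\mu^{-}$ from all three sides produces exactly the claimed inequalities.

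The only genuine subtlety, and the step I would verify most carefully, is establishing the evenness and the $|z|$-monotonicity of $h$: the upper bound is valid only because the per-sample losses can be dominated by a symmetric envelope around $z=0$, so one must confirm that the uniform bound $|z_i|\le c$ controls $h(z_i)$ from above regardless of the sign of $z_i$. Everything else is routine substitution, together with interpreting $c$ as the uniform bound on $|\beta x_i|$ rather than as an index.
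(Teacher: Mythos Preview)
Your proof is correct and somewhat cleaner than the paper's. The paper handles the two classes separately: for the lower bound it applies Jensen's inequality to the convex per-class losses $\ell^{+}(x)=\ln(1+e^{-\beta x})$ and $\ell^{-}(x)=\ln(1+e^{\beta x})$ and then invokes the tangent-line inequality $\ln(1+e^{t})\ge\ln 2+t/2$; for the upper bound it constructs explicit secant lines $s^{\pm}$ through the endpoints $\pm c$ and again uses (trivial, linear) Jensen. Your reformulation via signed labels collapses both cases into the single softplus form $\sum_i\ln(1+e^{-z_i})$ and reduces everything to a pointwise sandwich on $h(z)=\ln(1+e^{-z})+z/2$. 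This sidesteps Jensen entirely---the paper's Jensen step in the lower bound is in fact redundant, since the tangent-line inequality already holds pointwise---and the evenness/monotonicity of $h$ is exactly the observation that the tangent at $0$ and the secant through $\pm c$ have the same slope $1/2$. The two arguments yield identical constants; yours is more elementary and exposes the symmetry more directly, while the paper's version makes the tangent/secant geometry of the softplus a bit more explicit.
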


\paragraph{Clusters with Class Separability:}

To induce our notion of class separability within clustering we consider the $k$-means clustering formulation that is widely used for its simplicity, effectiveness, and speed.
Clusters using $k$-means are obtained by minimizing the size-weighted sample variance of each cluster: $\mathcal{D}(X) = \sum_{p=1}^k \inf_{\mu(C_p) \in X} \sum_{i \in C_p} \| x_i - \mu(C_p)\|^2.$
To induce class separability, a natural formulation is to add a cost with respect to class separability for each cluster.
With our measure for class separability, and 
$\alpha>0$ as a class separation hyperparameter,
we get $\phi(C_j) = \sum_{i\in C_j} \left(\|x_i - \mu(C_j) \|^2 - \alpha\cdot \|\mu^+(C_j) - \mu^-(C_j)\|^2\right)$.

Defining the cost function in this manner encourages the decrease of cluster variance term and increase of the class separability term, weighted by $\alpha$. Consequently, the overall cost function defined as $\phi(\{C_j\}_{1}^{k}) := \sum_{j=1}^{k} \phi(C_j)$ and the CAC algorithm aims to minimize that. We optimize for $\phi(\{C_j\}_{1}^{k})$ by using the Hartigan's method \cite{telgarsky2010hartigan} instead of the more common Lloyd's Algorithm \cite{lloyd1982least}. Our approach is summarized in Algorithm \ref{algo:cac}. $\Phi(x_i; C_p, C_q)$ represents the overall decrease in the cost function for the entire dataset if the point $x_i$ is moved from cluster $C_p$ to $C_q$ (see Appendix \ref{app:cac_details} for more details).

In conclusion, Theorem \ref{thm:bounding_log_loss} points out that under certain conditions (See Lemma \ref{lemma:neg_derivative} in Appendix \ref{app:lemma_neg_derivative}), the upper and lower bounds of log-loss provably decrease. The gap between the upper and lower bounds is a direct function of the breadth of the cluster. Thus as clusters become tighter, the training log-loss of the classifiers becomes more tightly bound.
Classification results on real datasets (see Table \ref{tab:cac_f1} in Appendix) show that CAC outperforms the simple cluster-then-predict approach on multiple base classifiers.

We however note that CAC has several limitations. Firstly, CAC assumes linear class separability which is not a realistic assumption in real world data.
Moreover, the restriction of working in the original data space leads to poor quality of clusters as compared to those found by plain $k$-means (c.f. Fig. \ref{fig:cac_sil_score} in Appendix). CAC initialized with $k$-means has a good Silhouette Index which decreases with training iterations). We thus turn to representation learning to discover transformations that conform to requirements of class separability and clustering while not being brittle at the same time.
This motivates the design of the neural variant of CAC, \deepmethod, which addresses these limitations by clustering in a low-dimensional embedding space instead of the data space, where the embeddings are obtained via non-linear dimensional reduction algorithms like deep autoencoders.

\begin{algorithm}[!hbt]
    \caption{CAC\ \label{algo:cac}}
    \KwIn{Training Data: $X \in \mathbb{R}^{N\times d}$, binary labels $y^{n \times 1} \in \{0,1\}^{N}$, $\{C_j\}_1^{k}$, $\{f_j\}_1^{k}$ and $\alpha$}
    {\bf $\triangleright$ Initialization} \\
    Compute $\mu(C_j), \mu^+(C_j), \mu^-(C_j)\ \forall\ C_j\ s.t.\ j \in \{1, \cdots, k\}$.\\
    {\bf $\triangleright$ Algorithm} \\
    \While{not converged}{
    \For{$i \in \{1 \hdots N\}$}{
    \If{Removing $x_i \in C_p$ from its cluster does not lead to a $1$-class cluster}{
        {\bf $\triangleright$ Assign new cluster to $x_i$} \\

        Let $q := \argmin_{j} {\Phi(x_i; C_p, C_j)}$ (breaking ties arbitrarily).\\
        \If{$\Phi(x_i, C_p, C_q)< 0$}
        {
            Assign $x_i$ to $C_q$ from $C_p$ and update $\mu(C_p)$, $\mu(C_q)$, $\mu^{\pm}(C_p)$ and $\mu^{\pm}(C_q)$.
        }
        Otherwise, let $x_i$ remain in $C_p$.
        }}
    }
    {\bf $\triangleright$ Train Classifiers} \\
    For every cluster $C_j$, train a classifier $f_j$ on $(\Xcal_{j}, \Ycal_{j})$.

    {\bf Output} Classifiers $\Fcal = \{f_j\}_{j=1}^{j=k}$ and cluster centroids $\mu = \{\mu_j\}_{j=1}^{j=k}$
\end{algorithm}

\section{\deepmethod}
\looseness=-1
\paragraph{Network Architecture:}
We propose to employ a Stacked Autoencoder (SAE) for dimensionality reduction. Figure \ref{fig:deepcac} illustrates the \deepmethod\ architecture. 
An SAE consists of an encoder (E) and decoder (D) parameterized by $(\Ucal, \Vcal)$ as $E(\Ucal): X \xrightarrow{} Z$ and $D(\Vcal): Z \xrightarrow{} X$. The encoder transforms the raw data into the embedded data space $Z$ (or the `bottleneck' layer). The `decoding' layers that try to reconstruct the data from the latent space are on the right-hand side. 
In the subsequent classification stage after the clustering stage,
there exist $k$ local networks $(\{LN_j(\Wcal_j)\}_{j=1}^{k})$ that are trained separately from the autoencoder.
Similar to CAC, the data points are clustered but now in the embedded data space $Z$.
Note that, unlike cluster-then-predict approaches, clustering is not independent of classification since class labels are used during clustering as described below.

\begin{figure}[!htb]
\begin{tabular}{cc}
\subfloat[SIL score for different algorithms]{\includegraphics[width = 0.23\textwidth]{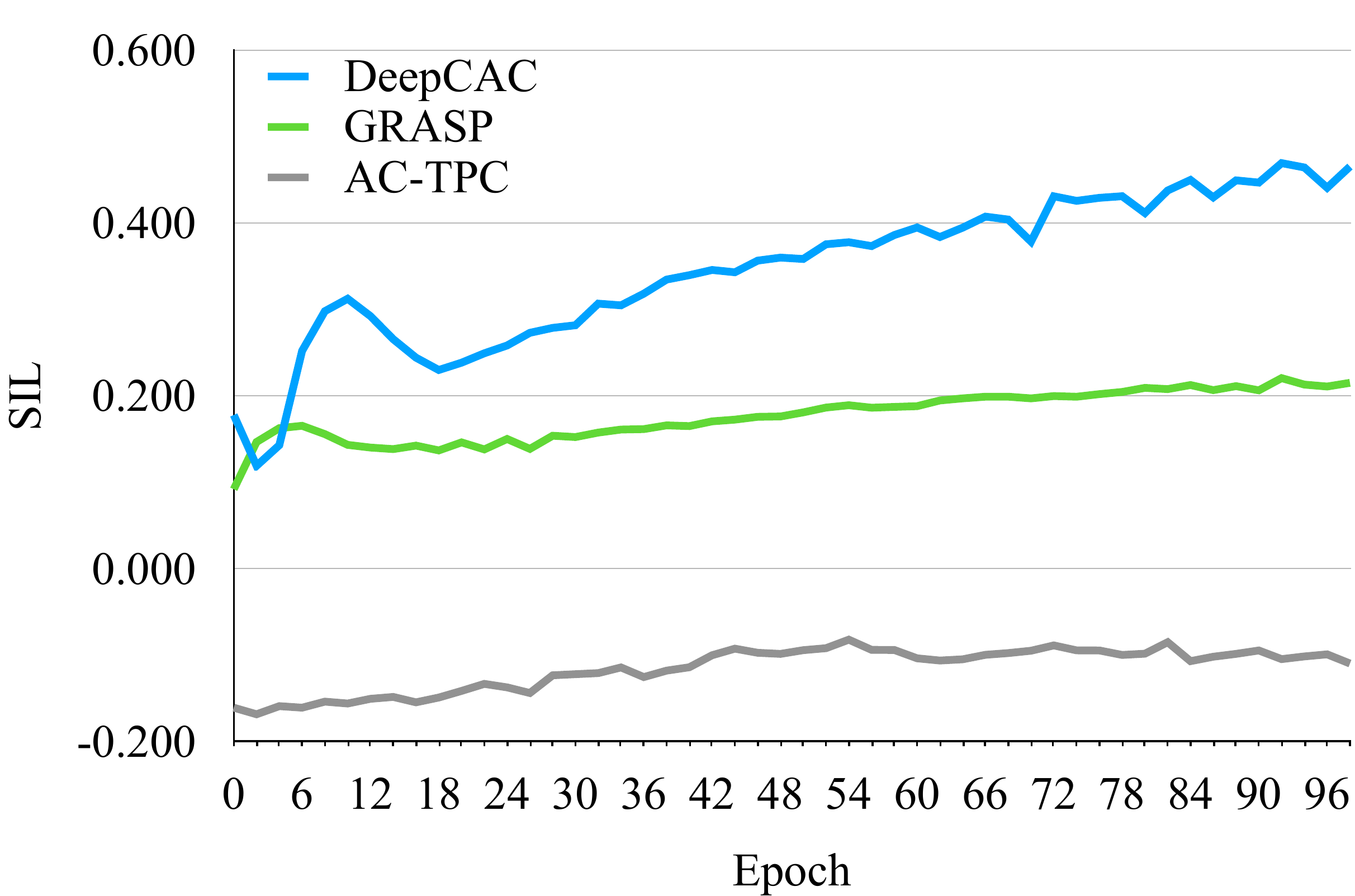}} &

\subfloat[AUC/AUPRC/SIL vs $\alpha$]{\includegraphics[width = 0.23\textwidth]{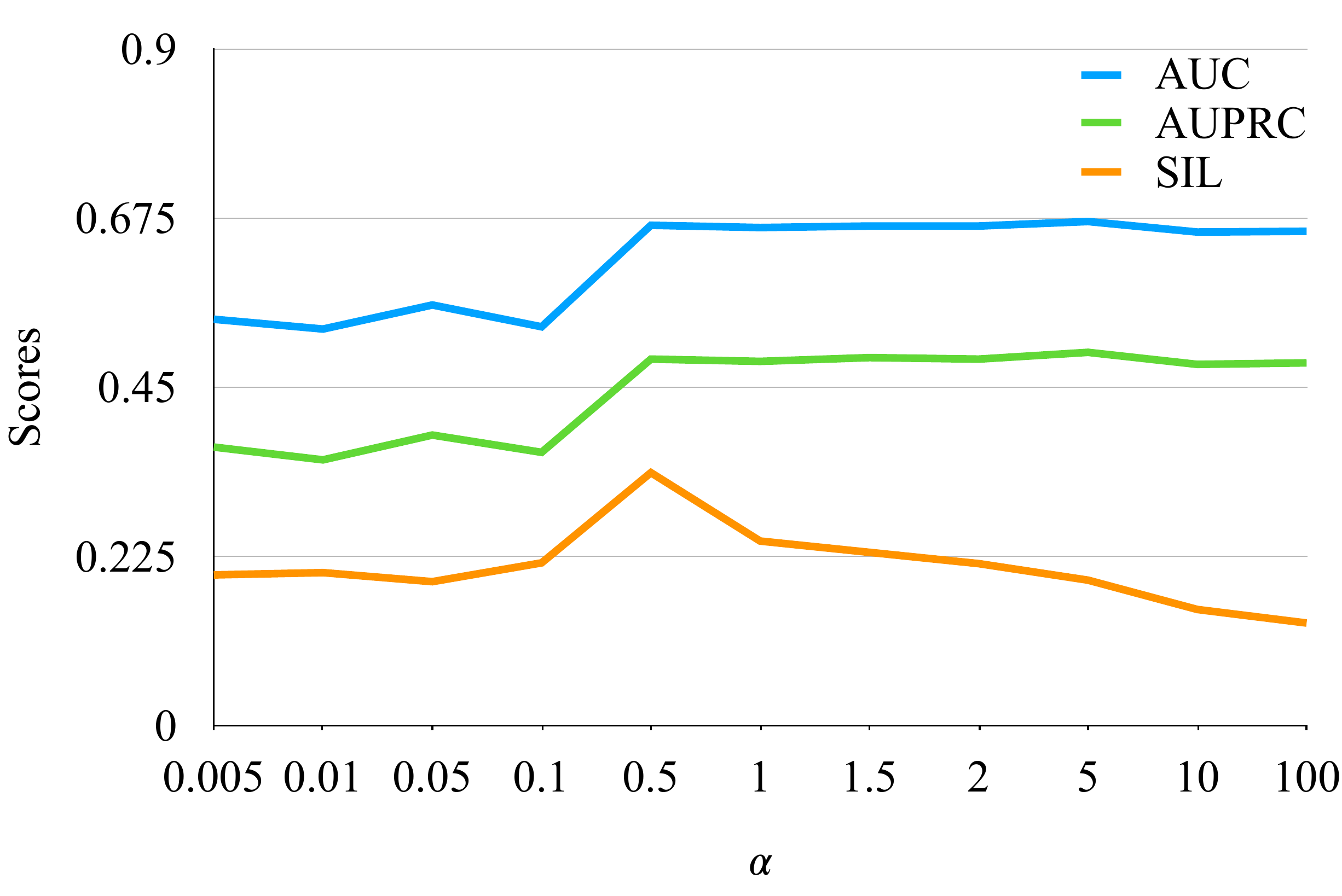}}

\end{tabular}
\caption{
(a) Silhouette scores vs. epochs. (b) Sensitivity Analysis with AUC/AUPRC and Silhouette scores (on \deepmethod) on CIC dataset. $k=3$ in all experiments.}
\label{fig:sensitivity_analysis}
\end{figure}

\paragraph{Loss Function:}
\looseness=-1
The \deepmethod\ loss function consists of 3 parts: The auto-encoder reconstruction error term ($\|x_i - D(E(x_i)) \|^2$) that regularizes the encoder as suggested by \cite{Guo2017idec}. The sum of squared errors (SSE) term $\left(\frac{\|E(x_i) - Ms_i \|^2}{|C_j|-1+\delta}\right)$ that encourages points to lie to close to their cluster centroid thus forming $k$-means friendly clusters, and a third term that we explain in the next paragraph. Unlike previous approaches \cite{caron2020unsupervised} that employ ad-hoc methods like cluster resampling in the case of an empty cluster, we normalize the cluster SSE (dividing it by the size of cluster) to stabilize training and prevent degenerate solutions.

The CAC loss function encourages cluster structure formation and intra-cluster class separation by directly optimizing for the distance between class cluster centroids. 
The class separation term in CAC encourages a gap between individual class points in respective clusters.
But we observed that \deepmethod\ was not able to directly optimize such a formulation of class separability through gradient descent. So we replace the CAC class separation term with the well-known Additive Margin Softmax Loss (AM Softmax) \cite{wang2018additive} as another way of inducing class separability within clusters.
The AM Softmax loss includes an additive margin to the standard softmax loss to push the classification boundary closer to the weight vector of each class. The final loss function of \deepmethod\ thus becomes:
\begin{align}
\label{eqn:deepcac_loss}
    L &= \sum_{j=1}^{k} \sum_{i\in C_j} \|x_i - D(E(x_i)) \|^2 + \frac{\beta*\|E(x_i) - Ms_i \|^2}{|C_j|-1+\delta} - \\  \nonumber
    & \frac{\alpha}{|C_j|} \sum_{i}^{} \log \frac{e^{s \cdot\left(W_{y_{i}}^{T} z_{i}-m\right)}}{e^{s \cdot\left(W_{y_{i}}^{T} z_{i}-m\right)}+\sum_{j=1, j \neq y_{i}}^{c} e^{s W_{j}^{T} z_{i}}}
    \\
& \text{s.t.}\ s_{j,i} \in \{0,1\}, \mathbf{1}^{T}s_i = 1 \ \forall i, j \nonumber
\end{align}

where $W^{T}_{j}$ is the $j$-th column of the last fully connected layer of the AMSoftmax loss. $M$ denotes the centroid matrix and $s_{i,j}$ is the assignment vector of data point $x_i$ which has only one non-zero element. $s_{j,i}$ denotes the $j^{th}$ element of $s_i$. $\mu_k$, the $k^{th}$ column of $M$, denotes the centroid of the $k^{th}$ cluster, $z_i$ is the embedding for $x_i$ and $|C_j|$ is the cardinality of cluster $C_j$.

\paragraph{How does AM Softmax loss induce class separability?}

\begin{theorem}[Proof in Appendix \ref{app:bounding_ams_loss}]
\label{thm:ams_cac}
Let $X = (x_1, x_2, \cdots, x_N)$ be a training dataset with $N$ records $x_i \in \mathbb{R}^{d}$ together with binary labels $y_i\in \left\{0,1\right\}$. Define the AM Softmax loss over the entire dataset as : $-\sum_{i}^{} \log \frac{e^{s \cdot\left(W_{y_{i}}^{T} z_{i}-m\right)}}{e^{s \cdot\left(W_{y_{i}}^{T} z_{i}-m\right)}+\sum_{j=1, j \neq y_{i}}^{c} e^{s W_{j}^{T} z_{i}}}$. Then,
\[
K_1 - \Gamma(K_3 \mu^{+} - K_4\mu^{-}) \leq L_{AMS}(X) \leq K_2 - \Gamma(K_3\mu^{+} - K_4\mu^{-})
\]
where $K_1 = N\left(\log2 + \frac{sm}{2}\right),\ K_2 = N(1 + sm),\ K_3 = sN^{+},\ K_4 = sN^{-},\ N^{+} = \sum_{x_i \in X} y_i$, $N^{-} = \sum_{x_i \in X} (1-y_i)$ and $\Gamma = W^T_{+} - W^T_{-}$.
\end{theorem}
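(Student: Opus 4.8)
The plan is to follow the template of the proof of Theorem~\ref{thm:bounding_log_loss}: specialize the AM Softmax summand to the binary case, rewrite each term as a softplus $\log(1+e^{x})$, and then split every softplus into an \emph{exactly linear} piece in $z_i$ (which produces the class-separation term $\Gamma(K_3\mu^{+}-K_4\mu^{-})$ shared by both the upper and lower bounds) plus a scalar remainder that is bounded below and above by constants, yielding $K_1$ and $K_2$ respectively.

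First I would set $c=2$ and write $W_{+},W_{-}$ for the two columns of the final weight matrix, so that the single competing term in the denominator collapses the summand. A short computation gives, for a positive point ($y_i=1$), the term $\log(1+e^{sm-s\Gamma z_i})$ and, for a negative point ($y_i=0$), the term $\log(1+e^{sm+s\Gamma z_i})$, where $\Gamma z_i=(W_{+}-W_{-})^{T}z_i$. Hence
\[
L_{AMS}(X)=\sum_{i:y_i=1}\log\!\bigl(1+e^{sm-s\Gamma z_i}\bigr)+\sum_{i:y_i=0}\log\!\bigl(1+e^{sm+s\Gamma z_i}\bigr).
\]

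Next I would use the identity $\log(1+e^{x})=\tfrac{x}{2}+\log\!\bigl(2\cosh\tfrac{x}{2}\bigr)$ to peel off the linear part exactly. Summing the $\tfrac{x}{2}$ halves over the two classes and substituting $\sum_{i:y_i=1}z_i=N^{+}\mu^{+}$ and $\sum_{i:y_i=0}z_i=N^{-}\mu^{-}$ collapses all $z_i$-dependence into a single term proportional to $\Gamma\bigl(N^{+}\mu^{+}-N^{-}\mu^{-}\bigr)$ plus the constant $\tfrac{Nsm}{2}$; because it comes from one algebraic identity, this term is \emph{identical} in both directions, which is exactly the structural feature the statement requires. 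The remainder $\rho(x):=\log(2\cosh\tfrac{x}{2})$ attains its minimum $\log 2$ at $x=0$, so $\rho\ge\log 2$ pointwise, and summing over the $N$ points yields the lower bound with a constant of the form $N(\log 2+\tfrac{sm}{2})=K_1$.

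The hard part is the upper bound, since $\rho(x)=\log(2\cosh\tfrac{x}{2})$ grows like $\tfrac{|x|}{2}$ and admits no absolute constant upper bound; via $\cosh u\le e^{|u|}$ one only obtains $\rho(x)\le\log2+\tfrac{|x|}{2}$, which is unbounded in the logits. Controlling this is unavoidable, and it is precisely the role played by $c=\argmax_i\|\beta x_i\|$ in Theorem~\ref{thm:bounding_log_loss}: I would bound $|x|=|sm\mp s\Gamma z_i|$ uniformly over the sample by the largest logit magnitude, after which $\rho$ is bounded by a per-point constant and the $N$ copies collapse into $K_2=N(1+sm)$, while the shared linear separation term is carried through unchanged. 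Two points need care: (i) the linear term extracted for the upper bound must coincide with that of the lower bound, which is guaranteed by using the \emph{same} identity in both directions rather than two different one-sided linearizations; and (ii) the bookkeeping of the factor multiplying $\mu^{\pm}$, which out of the $\tfrac{x}{2}$ split comes out as a multiple of $sN^{\pm}$, up to the same factor-$\tfrac12$ convention already visible in $C_3=\tfrac{N^{+}}{2},\,C_4=\tfrac{N^{-}}{2}$ of Theorem~\ref{thm:bounding_log_loss}.
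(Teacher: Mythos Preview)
Your approach is correct in spirit but differs from the paper's in both directions, and the upper-bound half does not land on the stated $K_2$.

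For the lower bound, the paper first applies Jensen's inequality to the convex maps $z\mapsto\log(1+e^{sm\pm s\Gamma z})$ (replacing each class sum by its value at $\mu^{\pm}$) and only then invokes $\log(1+e^{x})\ge \log 2+\tfrac{x}{2}$. Your route---split each softplus via $\log(1+e^{x})=\tfrac{x}{2}+\log(2\cosh\tfrac{x}{2})$ and use $\log(2\cosh\tfrac{x}{2})\ge \log 2$ pointwise---reaches the same constant $K_1=N(\log 2+\tfrac{sm}{2})$ without Jensen, since the linear halves already sum exactly to $\tfrac{Nsm}{2}-\tfrac{s}{2}\Gamma(N^{+}\mu^{+}-N^{-}\mu^{-})$. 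This is a cleaner argument and, as you note, guarantees by construction that the separation term is identical on both sides.

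For the upper bound, however, the paper does \emph{not} bound the remainder $\rho$ by a max-logit constant as you propose. It instead uses the pointwise inequality $\log(1+e^{x})<1+x$, valid for $x>0$, together with the assumption (drawn from the AM softmax literature) that every argument $sm\pm s\Gamma z_i$ is positive. Because $1+x$ is again linear in $z_i$, the class sums collapse directly to $\mu^{\pm}$ and one reads off $K_2=N(1+sm)$. Your max-logit strategy would yield a data-dependent constant analogous to the $c$ of Theorem~\ref{thm:bounding_log_loss}, not the explicit $K_2=N(1+sm)$ claimed here; and, more subtly, using two different linearizations ($\tfrac{x}{2}$ below, $x$ above) is exactly what the paper does, so the coefficients on the separation term in its lower and upper bounds are not literally the same---your worry in point~(i) is well founded, but the paper simply tolerates that mismatch rather than resolving it via a single identity.
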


The upper and lower bounds for the AM-softmax loss are strikingly similar to those of the logistic loss found in theorem \ref{thm:bounding_log_loss}. We note that as $L_{AMS}$ decreases, it pushes the lower bound to become smaller. The agent vectors ($W_{+/-}$) will lie nearby the class centroids ($\mu^{+/-}$) \cite{wang2017normface} and can thus be approximated as $W_{+} \sim \mu^{+}$ and $W_{-} \sim \mu^{-}$. The lower bound for $L_{AMS}$ can then be simplified as $K_1-\Gamma(K_3\mu^{+}-K_4\mu^{-}) = K_1 - sN(1-\mu^{+}\cdot\mu^{-})$ (since $|\mu^{+}| = |\mu^{-}| = 1$).
\looseness=-1
This implies that as $L_{AMS}$ pushes down the lower bound, $\mu^{+}\cdot\mu^{-}$ must decrease i.e. class centroids move further away (separate) from each other on the $L_2$ hypersphere. From theorem \ref{thm:bounding_log_loss}, we know that an increasing class separation decreases the training log-loss. Thus replacing CAC loss with the AM-Softmax loss achieves the same purpose.

\begin{figure*}[t]
    \centering
    \makebox[\textwidth][c]{\includegraphics[width=0.85\linewidth]{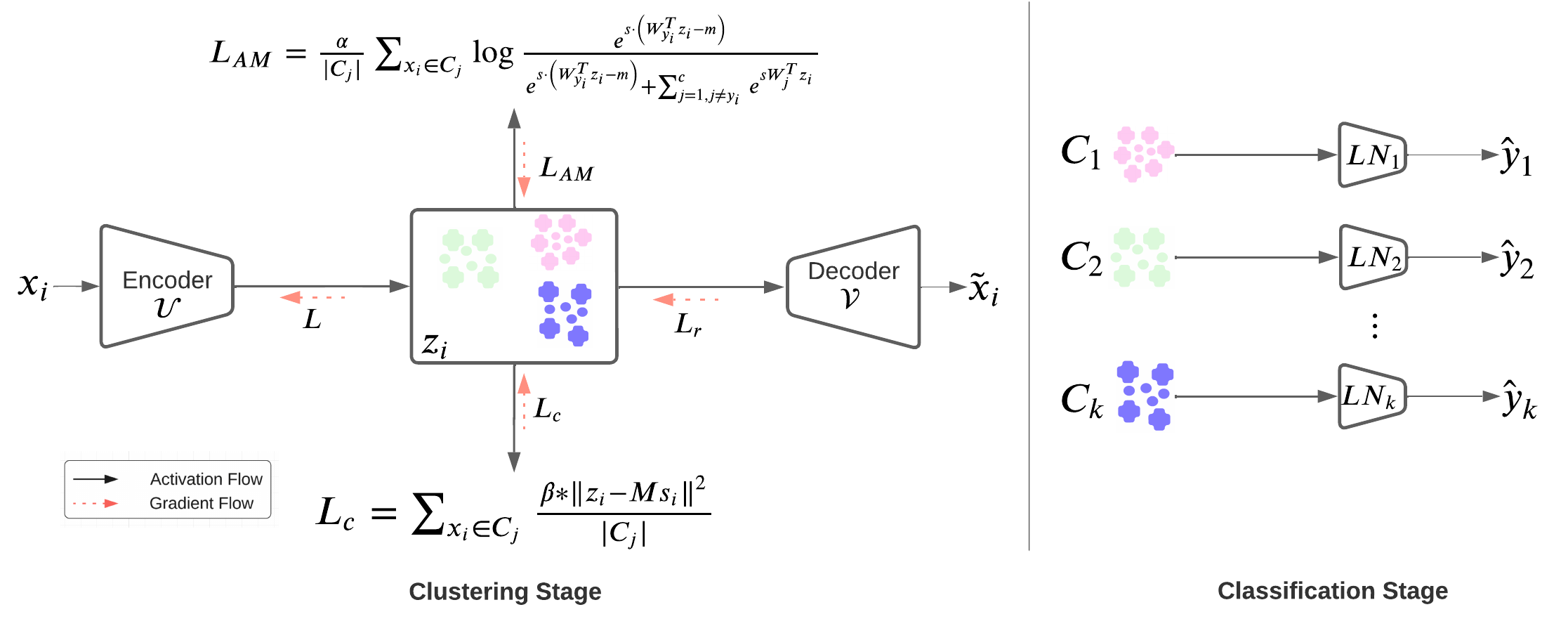}}%
    \caption{In the clustering stage an encoder is trained to find representations that are clustered with
    good class separability. In the classification stage, 
    cluster-specific classifiers are trained on the representations. $L_{AM}$ and $L_C$ are defined in eqn. \ref{eqn:deepcac_loss}.}
    \label{fig:deepcac}
\end{figure*}

\paragraph{Training:}
\looseness=-1
\deepmethod\ is trained in 3 stages: (i) Pre-training, (ii) Training the clustering network and (iii) Training the local networks.
During pre-training
the autoencoder parameters $(\Ucal, \Vcal)$ are initialized by training the autoencoder to minimize the reconstruction loss without activating the \deepmethod loss functions. Once the autoencoder is initialized, the data embeddings are clustered using the $k$-means algorithm and the cluster centroids are initialized.
The overall \deepmethod\ loss function is 
non-convex and difficult to optimize.
Stochastic gradient descent (SGD) cannot be directly applied to jointly optimize $\Ucal, \Vcal, M$, and $\{s_i\}$ because the block variable $\{s_i\}$ is constrained on a discrete set. We thus 
use an alternating scheme, similar to \cite{yang2017towards}, to
optimize the subproblems w.r.t. one of $M, \{s_i\}$ and $(\Ucal, \Vcal)$ while keeping the other two sets of variables fixed.
Further details regarding the updating of network parameters can be found in Appendix \ref{app:deepcac_training}.
Finally, the local networks are trained on the clusters found by the encoder in the representation space. Algorithm \ref{algo:deepcac} summarizes the entire procedure.

\begin{algorithm}[!hbt]
    \caption{\deepmethod\ \label{algo:deepcac}}
    \KwIn{Training Data: $X \in \mathbb{R}^{n\times d}$, class labels $y^{n \times 1} \in [\mathcal{B}]^{n}$, $\{C_j\}_1^{k}$, $\{f_j\}_1^{k}$}
    {\bf $\triangleright$ Initialization} \\
    Pretrain $E$ and $D$ to initialize $\Ucal, \Vcal$ \\
    Compute $\mu(C_j) \forall\ C_j\ s.t.\ j \in \{1, \cdots, k\}$.\\
    {\bf $\triangleright$ Algorithm} \\
    \While{not converged}{
    \For{every mini-batch $\Xcal_b$}{
        Update Network parameters by backpropagating $L$ (Eq. \ref{eqn:deepcac_loss})\\
        Update cluster assignments \\
        Update cluster centroids
        }
    }
    {\bf $\triangleright$ Train Classifiers} \\
    $\Zcal \leftarrow E(\Xcal)$ \\
    For every cluster $C_j$, train a classifier $f_j$ on $(\Zcal_{j}, \Ycal_{j})$.

    {\bf Output} {\bf Output} Trained Network $E$ and cluster centroids $\mu = \{\mu_j\}_{j=1}^{j=k}$ in embedded space.
\end{algorithm}

\section{Experiments}
\label{sec:expts}
\looseness=-1
We perform two sets of experiments. The first set of experiments, on synthetic data, studies the sensitivity of CAC on various characteristics of the data like cluster separation and class separation within clusters. We find that cluster-then-predict method does enhance overall classifier performance and exploiting class separation can lead to further improvements. Appendix \ref{app:synth_results_cac} presents the details of these experiments and the results.
In the second set of experiments, described below, we compare the performance of CAC with that of other methods on real benchmark datasets.

\begin{table}[!h]
\captionsetup{font=footnotesize}
\footnotesize
\centering
    \caption{Dataset Summary. MAJ: proportion of majority class.}
    \begin{tabular}    {cccc}
        \toprule
        \textbf{Dataset} & \#Instances & \#Features & MAJ\\
        \midrule
        Sepsis \cite{reyna2019early} & {$30661$} & {$209$} & $0.07$ \\
        WID Mortality \cite{lee_wids} & {$91711$} & {$241$} & $0.91$\\
        Diabetes     \cite{UCL} & {$100000$} & {$9$} & $0.41$ \\
        Respiratory \cite{johnson2016mimic} & {$22450$} & {$486$} & $0.92$ \\
        CIC \cite{johnson2012patient} & {$12000$} & {$117$} & $0.85$ \\
        CIC-LOS & {$12000$} & {$116$} & $-$ \\

        \bottomrule
    \end{tabular}
    \label{tab:datasets}
\end{table}

\paragraph{Real Datasets:}
\looseness=-1
We evaluate CAC and \deepmethod\ on 6 clinical datasets (Table \ref{tab:datasets}). CIC refers to the dataset from the 2012 Physionet challenge \cite{silva2012predicting} to predict in-hospital mortality of intensive care unit (ICU) patients at the end of their hospital stay. The dataset has time-series records, comprising various physiological parameters, of $12,000$ patient ICU stays. 
We follow the data processing scheme of \cite{johnson2012patient} (the top-ranked team in the competition) to obtain a static 117-dimensional feature vector for each patient.
The Sepsis dataset is derived from the 2019 Physionet challenge of sepsis onset prediction. It has time series records, comprising various physiological parameters of $\sim40000$ patients. We follow the data processing scheme of the competition winners \cite{morrill2019signature} to obtain static 89 dimensional features for each patient.
WID data is from the Women In Data Science challenge \cite{lee_wids} to predict patient mortality.
The Diabetes dataset is from the UCI Repository where the task is to predict patient readmission.
The Respiratory data has been extracted by us for the tasks of Acute Respiratory Distress Syndrome (ARDS) prediction. For the Sepsis and ARDS datasets, we use the first, last, median, minimum, and maximum values in the first 24 hours as features for the temporal variables.
The final feature vector dimensions for ARDS and Sepsis data are 486 and 209 respectively.
All the above tasks are posed as binary classification problems. We also derive a multiclass dataset (CIC-LOS) from the CIC dataset where we predict the Length of Stay (LOS), discretized into 3 classes (based on 3 quartiles).


\begin{table*}[htb!]
\centering
\footnotesize
\captionsetup{font=footnotesize}

\caption{AUPRC of \deepmethod\ and other baselines. Best values in \textbf{bold}. Following KM, DCN, and IDEC, we use the same local classifier (denoted by -Z) that is used by \deepmethod. AUC values, that show a similar trend, are shown in Appendix \ref{app:extended_results}.}
\label{tab:results_auprc}
\resizebox{2\columnwidth}{!}{%
\begin{tabular}{ccccccccc}
\toprule
{\textbf{Dataset}} & \textbf{k} & \textbf{KM-Z} & \textbf{DCN-Z} & \textbf{IDEC-Z} & \textbf{DMNN} & \textbf{AC-TPC} & \textbf{GRASP} & \textbf{\deepmethod} \\
 & & $-$ & (2017) & (2017) & (2020) & (2020) & (2021) & (Ours) \\

\midrule

Diabetes & $2$ & {$0.528 \pm 0.011$} & {$0.556 \pm 0.002$} & {$0.553 \pm 0.007$} & {$0.53 \pm 0.005$} & {$-$} & {$0.538 \pm 0.007$} & {$\mathbf{0.558 \pm 0.004}$} \\ 
& $3$ & {$0.514 \pm 0.016$} & {$0.556 \pm 0.003$} & {$0.548 \pm 0.01$} & {$0.539 \pm 0.007$} & {$0.523 \pm 0.0$} & {$0.542 \pm 0.007$} & {$0.56 \pm 0.002$} \\ 
& $4$ & {$0.507 \pm 0.02$} & {$0.556 \pm 0.003$} & {$0.553 \pm 0.005$} & {$0.536 \pm 0.004$} & {$0.5 \pm 0.0$} & {$0.544 \pm 0.008$} & {$0.555 \pm 0.006$} \\ 
\midrule

CIC & $2$ & {$0.568 \pm 0.014$} & {$0.609 \pm 0.036$} & {$0.674 \pm 0.036$} & {$0.636 \pm 0.03$} & {$-$} & {$\mathbf{0.757 \pm 0.036}$} & {$0.753 \pm 0.023$} \\ 
& $3$ & {$0.527 \pm 0.028$} & {$0.612 \pm 0.017$} & {$0.587 \pm 0.032$} & {$0.667 \pm 0.016$} & {$0.656 \pm 0.0$} & {$0.755 \pm 0.039$} & {$0.746 \pm 0.018$} \\ 
& $4$ & {$0.54 \pm 0.033$} & {$0.594 \pm 0.025$} & {$0.618 \pm 0.016$} & {$0.636 \pm 0.025$} & {$0.663 \pm 0.0$} & {$0.743 \pm 0.031$} & {$0.743 \pm 0.014$} \\ 
\midrule

CIC-LoS & $2$ & {$0.396 \pm 0.008$} & {$0.44 \pm 0.02$} & {$0.452 \pm 0.013$} & {$0.429 \pm 0.007$} & {$-$} & {$0.478 \pm 0.021$} & {$0.485 \pm 0.004$} \\ 
& $3$ & {$0.353 \pm 0.019$} & {$0.425 \pm 0.022$} & {$0.406 \pm 0.029$} & {$0.435 \pm 0.013$} & {$0.426 \pm 0.0$} & {$0.473 \pm 0.026$} & {$\mathbf{0.488 \pm 0.008}$} \\ 
& $4$ & {$0.344 \pm 0.018$} & {$0.443 \pm 0.025$} & {$0.437 \pm 0.014$} & {$0.429 \pm 0.007$} & {$0.402 \pm 0.0$} & {$0.474 \pm 0.018$} & {$0.487 \pm 0.006$} \\ 
\midrule




ARDS & $2$ & {$0.552 \pm 0.023$} & {$0.563 \pm 0.031$} & {$0.601 \pm 0.009$} & {$0.536 \pm 0.017$} & $-$ & {$0.493 \pm 0.004$} & {$\mathbf{0.604 \pm 0.013}$}\\ 
& $3$ & {$0.527 \pm 0.022$} & {$0.582 \pm 0.017$} & {$0.565 \pm 0.025$} & {$0.525 \pm 0.012$} & {$0.553 \pm 0.004$} & {$0.492 \pm 0.004$} & {$0.594 \pm 0.014$}\\ 
& $4$ & {$0.503 \pm 0.015$} & {$0.533 \pm 0.03$} & {$0.588 \pm 0.008$} & {$0.551 \pm 0.016$} & {$0.55 \pm 0.015$} & {$0.569 \pm 0.069$} & {$0.592 \pm 0.029$}\\ 
\midrule 

Sepsis & $2$ & {$0.532 \pm 0.014$} & {$0.595 \pm 0.023$} & {$0.597 \pm 0.021$} & {$0.539 \pm 0.003$} & $-$ & {$0.5 \pm 0.005$} & {$0.646 \pm 0.037$}\\ 
& $3$ & {$0.515 \pm 0.019$} & {$0.593 \pm 0.017$} & {$0.596 \pm 0.012$} & {$0.531 \pm 0.012$} & {$0.579 \pm 0.013$} & {$0.496 \pm 0.001$} & {$\mathbf{0.654 \pm 0.031}$}\\ 
& $4$ & {$0.517 \pm 0.012$} & {$0.553 \pm 0.029$} & {$0.601 \pm 0.011$} & {$0.531 \pm 0.018$} & {$0.574 \pm 0.02$} & {$0.499 \pm 0.002$} & {$0.628 \pm 0.02$}\\ 
\midrule 

WID-M & $2$ & {$0.645 \pm 0.026$} & {$0.71 \pm 0.029$} & {$0.69 \pm 0.061$} & {$0.597 \pm 0.013$} & {$-$} & {$0.702 \pm 0.023$} & {$\mathbf{0.723 \pm 0.015}$} \\ 
& $3$ & {$0.565 \pm 0.037$} & {$0.679 \pm 0.053$} & {$0.666 \pm 0.062$} & {$0.583 \pm 0.014$} & {$0.596 \pm 0.0$} & {$0.698 \pm 0.021$} & {$0.721 \pm 0.011$} \\ 
& $4$ & {$0.535 \pm 0.044$} & {$0.664 \pm 0.048$} & {$0.677 \pm 0.049$} & {$0.593 \pm 0.014$} & {$0.619 \pm 0.0$} & {$0.681 \pm 0.014$} & {$0.714 \pm 0.021$} \\ 

\bottomrule
\end{tabular}
}
\end{table*}

\paragraph{Experimental Setup:}
\looseness=-1
We evaluate the performance of CAC with a cluster-then-predict approach where $k$-means is used to cluster and multiple classifiers are used. We also compare its performance with that of the classifier directly, where no clustering is used. These are evaluated on a held-out test split comprising $25\%$ of the data.
\deepmethod\ and its baselines are evaluated on $5$ random train-validation-test splits for each dataset in the ratio 57-18-25 (Test data is 25\% of the entire dataset and the validation dataset is 25\% of the remaining data).
Standard binary classification metric,
the AUPRC (Area Under Precision-Recall curve) score is used to evaluate classifier performance, as there is an imbalance in many datasets. 
All models are trained until the AUPRC score on the validation data set plateaus.
The Area under the ROC Curve (AUC) 
is also reported in Appendix \ref{app:extended_results}, Table \ref{tab:results_auc}.


\paragraph{Baselines:}
\looseness=-1
For evaluating CAC, we consider $9$ classical base classifiers (X), both linear and nonlinear and report results of X, KM+X, and CAC+X. 
These are LR, Linear SVM, XGBoost (\#estimators = 10), Linear Discriminant Analysis (LDA), single Perceptron, Random Forest (\#trees = 10), $k$ Nearest Neighbors ($k=5$), SGD classifier and Ridge classifier (Ridge regressor predicting over the range $[-1,1]$).

For evaluating \deepmethod, we use two kinds of baselines.
The first set has classifiers that employ a simple cluster-then-predict kind of method
where clustering is independently performed first and classifiers are trained on each cluster (denoted by {\bf -Z}).
We compare with $3$ clustering methods \textbf{$k$-means} (where we use an autoencoder to get embeddings which are then clustered using $k$-means), Deep Clustering Network {\bf DCN} \cite{yang2017towards}, Improved Deep Embedded Clustering ({\bf IDEC}) \cite{Guo2017idec} and {\bf DMNN} \cite{li2020dmnn}.
The second set of baselines includes methods where clustering and classification happen simultaneously.
We compare \deepmethod\ with \textbf{GRASP} \cite{zhang2021grasp} modified to use the same backbone model i.e. the encoder.
We also compare with \textbf{AC-TPC} \cite{lee2020temporal} which uses an actor-critic model to refine clusters depending on the feedback from the classification process.

\paragraph{Hyperparameters:}
\looseness=-1
For DMNN and \deepmethod, the autoencoder has $3$ layers of sizes $64-32-64$ and a two-layer local predictor network of size $30-1$.
The local prediction networks use Softmax and ReLU activation functions and are not regularized. The hyperparameters for deep learning methods, KM-Z, DCN-Z ($\beta=2$), IDEC-Z ($\beta=0.5$) and \deepmethod\ ($\beta=20,\alpha=5$) are selected by conducting a sensitivity analysis (see Fig. \ref{fig:sensitivity_analysis}) on the training set of the CIC-LoS dataset. For GRASP and AC-TPC, we use the hyperparameters suggested by the authors found using sensitivity analysis. The learning rate for \deepmethod\ is set to $2e-3$. The predictor, selector and encoder networks in AC-TPC are FCNs with $64-64$, $64-64$, and $32-32$ neurons respectively.


\paragraph{Results:}
\looseness=-1
The results for CAC are presented in Table \ref{tab:cac_f1} (Appendix \ref{app:real_results_cac}) that summarises the F1 scores of CAC and baselines evaluated in clinical data sets. This is in line with Theorem 1 which predicts the better performance of multiple simple classifiers compared to a single simple classifier.
We also observe that CAC performs better than the simple $k$-means + classification approach on 20 out of 27 ($74\%$) experiments. This also serves as ablative evidence of the contribution of linear separability criteria since removing the latter reduces CAC to applying $k$-means and then predicting.

\looseness=-1
Table \ref{tab:results_auprc} presents the AUPRC scores obtained by \deepmethod\ and its baselines. 
\deepmethod\ is consistently better than the simple combined clustering and classification (cluster-then-predict) baselines i.e., DMNN, KM-Z, DCN-Z, and IDEC-Z. This shows that classification aware clustering is necessary, as done by GRASP and \deepmethod, to ensure good downstream classification performance.
GRASP and \deepmethod\ perform comparably on the CIC (moderately sized) dataset.2
\deepmethod\ outperforms GRASP by a significant margin on the rest of the datasets (the larger and more imbalanced datasets) for different number of clusters. Similar performance improvements are also observed when AUC is used as a metric (in Appendix \ref{app:extended_results}).
The code for AC-TPC (provided by the authors) did not run for $k=2$ on all datasets. Figure \ref{fig:sensitivity_analysis} and \ref{fig:cac_sil_score} show how the clustering performance of \deepmethod\ does not deteriorate with increasing epochs as compared to CAC. 

\paragraph{Case Study:}
\looseness=-1
To show the practical clinical applicability of \deepmethod\, we present a case study in Appendix \ref{app:case_study} on
length of stay prediction.
As expected, we see that inferred clusters have differences in risk factors tailored to each subpopulation.

\begin{figure}[!htb]
\begin{tabular}{cccc}
\subfloat[AUC/AUPRC vs $\alpha$ on CIC]{\includegraphics[width = 0.23\textwidth]{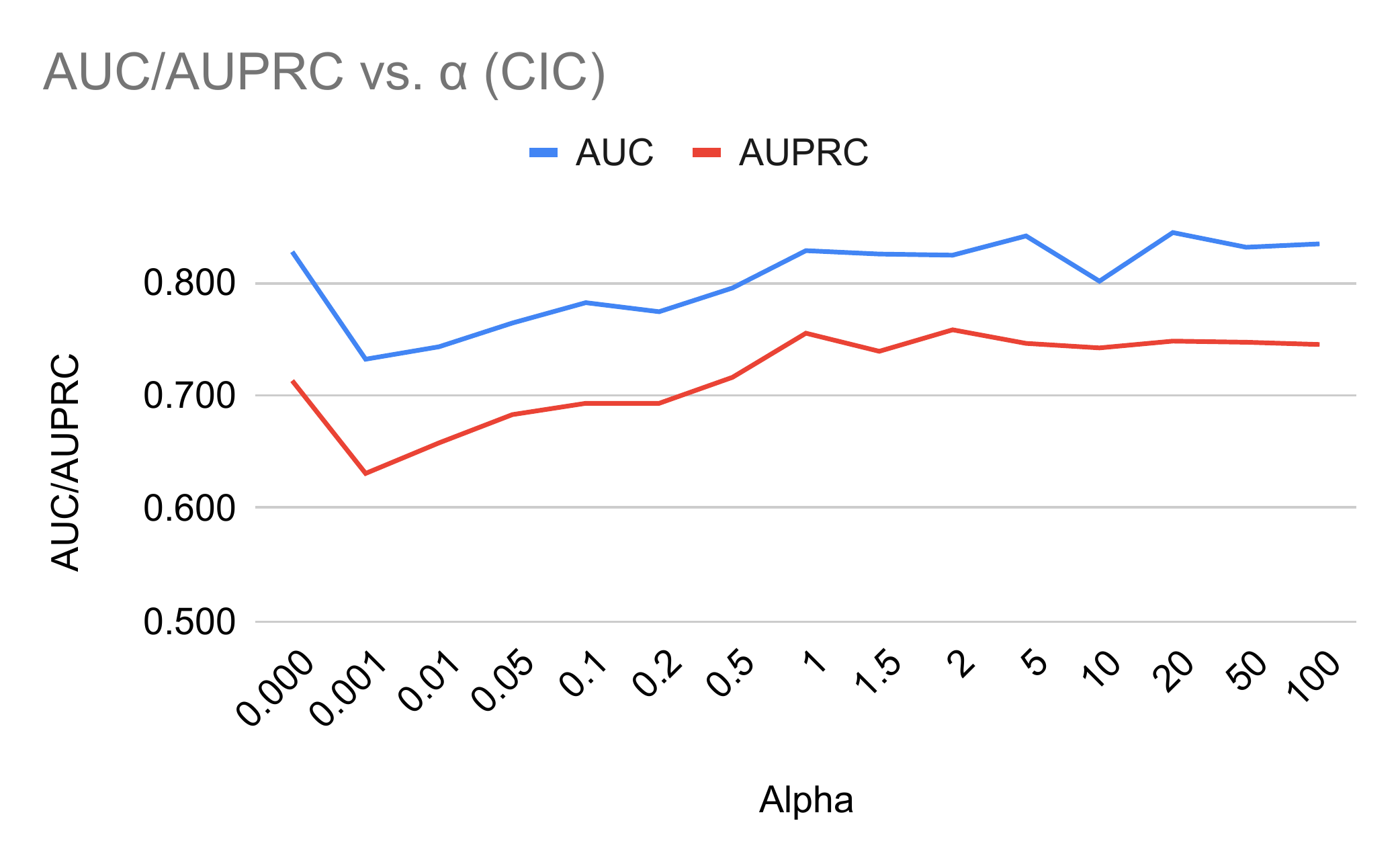}} &

\subfloat[AUC/AUPRC vs $\beta$ on CIC]{\includegraphics[width = 0.23\textwidth]{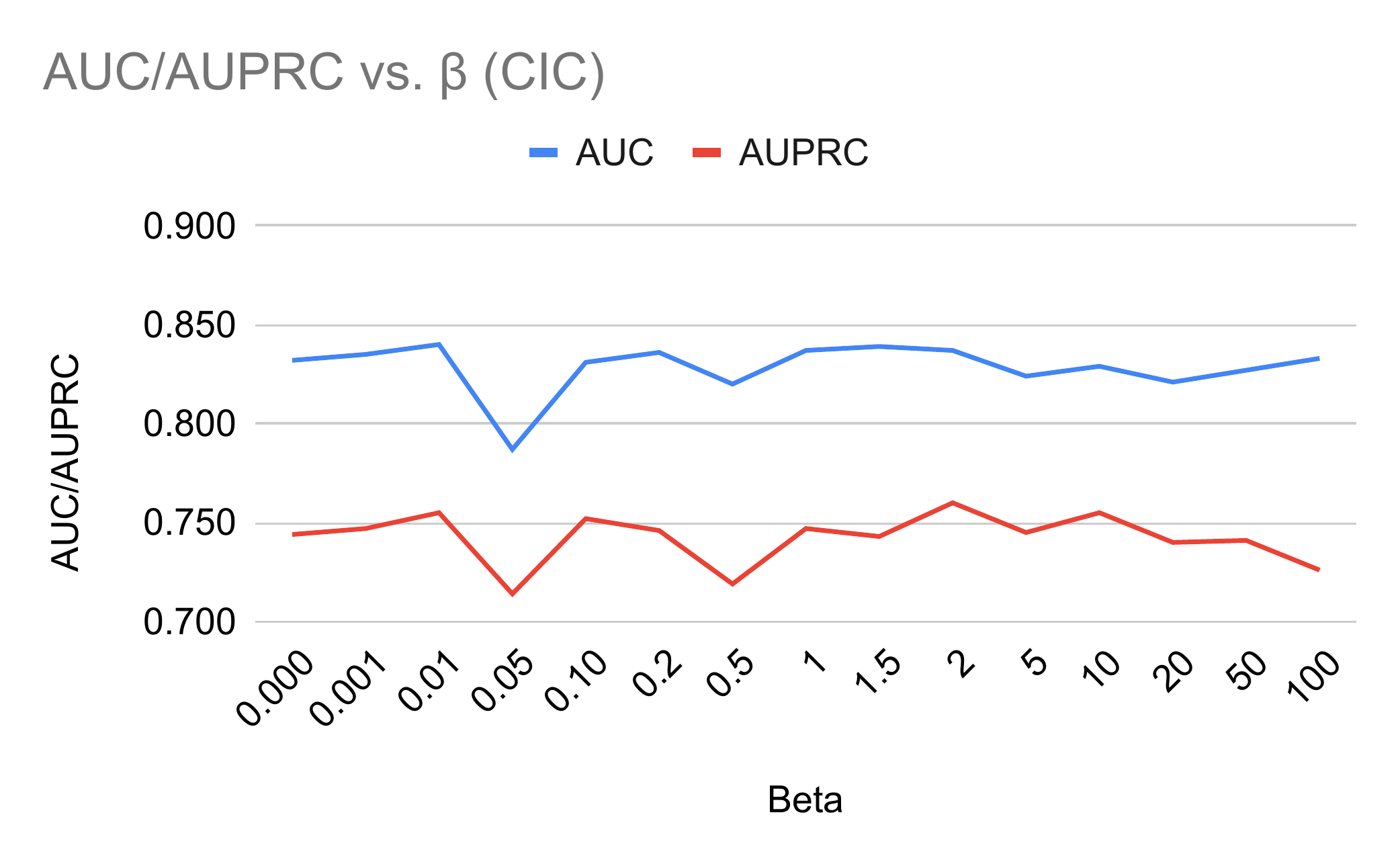}}

\end{tabular}
\caption{Sensitivity Analysis and Ablation Study for \deepmethod\ w.r.t. $\alpha$ and $\beta$ hyperparameters for $k=3$}
\label{fig:deepcac_ablation}
\end{figure}

\begin{table}[htb!]
\footnotesize
\caption{AUPRC Scores for Time series datasets. Row-wise best result in bold.
AC-TPC did not finish execution in time.}
\begin{tabular}{cccc}
\toprule
\textbf{AUPRC} & \textbf{k} & \textbf{GRASP-TS} & \textbf{\deepmethod-TS} \\
\midrule
ARDS-TS & 2 & $0.498$ & $\mathbf{0.615}$ \\
ARDS-TS & 3 & $0.506$ & $\mathbf{0.605}$ \\
ARDS-TS & 4 & $0.5$ & $\mathbf{0.565}$ \\
\midrule
Sepsis-TS & 2 & $0.501$ & $\mathbf{0.704}$ \\
Sepsis-TS & 3 & $0.507$ & $\mathbf{0.672}$ \\
Sepsis-TS & 4 & $0.495$ & $\mathbf{0.648}$ \\
\bottomrule
\end{tabular}
\label{tab:auprc_ts}
\end{table}

\paragraph{Ablation Study and Sensitivity Analysis:}
\label{sec:deepcac_ablation}
\looseness=-1
We study the effect of $\alpha$ and $\beta$ hyperparameters on the classification performance of \deepmethod\ by varying them on the range $[0.0,100]$ (see Fig. \ref{fig:deepcac_ablation}). 
Both AUC and AUPRC scores fall sharply as $\alpha$ is increased from 0 but they eventually increase, plateauing after $\alpha=5$. The classification performance does not vary much with $\beta$.
A value of 0 leads to the removal of the loss term; better results are obtained with non-zero values indicating that both loss terms contribute to performance.

\paragraph{Extending \deepmethod\ for Other Data Modalities:}
\looseness=-1
\deepmethod\ can be easily extended for other input data modalities such as images, text and time series by modifying the encoder and decoder appropriately.
We demonstrate one such case 
by using an LSTM-based AE for time series data (see Appendix \ref{app:deepcac_ts} for more details). 
We evaluate all algorithms on the ARDS and Sepsis datasets as mentioned in table \ref{tab:datasets}. We use the first 24 hours of data to predict risk (of each condition, separately) in the remaining ICU stay, which is aligned with a hospital-centric schedule for prediction.
Our experimental evaluation shows that \deepmethod\ outperforms similar extensions of GRASP. As expected, \deepmethod-TS also performs better than \deepmethod\ on the non time series versions of ARDS and Sepsis datasets.

\section{Conclusion}
\label{sec:conclusion}
\looseness=-1
In this paper, we theoretically analyze potential reasons for the good performance of classifiers trained on underlying clusters in data.
Our analysis yields insights into the benefits of using simple classifiers trained on clusters compared to simple or complex classifiers trained without clustering.
We use our analysis to develop a simple prototype algorithm, CAC, that enforces class separability within each group during cluster discovery.
Empirical and theoretical analysis of CAC further illuminates reasons for clustering aiding classification.
To further improve clustering in such settings, we combine the benefits of deep representation learning and CAC in our algorithm \deepmethod.
We use novel techniques to stabilize training and induce
class separability within clusters in \deepmethod.
Experiments on clinical benchmark datasets demonstrate the efficacy of \deepmethod\ in classification and clustering. 
A limitation of \deepmethod\ is that the hard definition of point clusters forces the local classifiers to treat interior and fringe points equally. This can lead to a loss in predictive performance and can be addressed in the future.


\bibliographystyle{unsrt}
\bibliography{References}

\begin{thebibliography}{10}

\bibitem{qian2012simultaneous}
Qiang Qian, Songcan Chen, and Weiling Cai.
\newblock Simultaneous clustering and classification over cluster structure
  representation.
\newblock {\em Pattern Recognit.}, 45:2227--2236, 2012.

\bibitem{gu2013clustered}
Quanquan Gu and Jiawei Han.
\newblock Clustered support vector machines.
\newblock In {\em AISTATS}, 2013.

\bibitem{chakraborty2017ec3}
Tanmoy Chakraborty.
\newblock Ec3: Combining clustering and classification for ensemble learning.
\newblock {\em 2017 IEEE International Conference on Data Mining (ICDM)}, pages
  781--786, 2017.

\bibitem{li2020predicting}
Xiangrui Li, D.~Zhu, and Phillip~D. Levy.
\newblock Predicting clinical outcomes with patient stratification via deep
  mixture neural networks.
\newblock {\em AMIA Joint Summits on Translational Science proceedings. AMIA
  Joint Summits on Translational Science}, 2020:367--376, 2020.

\bibitem{snyderman2012personalized}
Ralph Snyderman.
\newblock Personalized health care: From theory to practice.
\newblock {\em Biotechnology Journal}, 7, 2012.

\bibitem{alaa2016personalized}
Ahmed~M. Alaa, Jinsung Yoon, Scott Hu, and Mihaela van~der Schaar.
\newblock Personalized risk scoring for critical care patients using mixtures
  of gaussian process experts.
\newblock {\em ArXiv}, abs/1605.00959, 2016.

\bibitem{suresh2018learning}
Harini Suresh, Jen~J. Gong, and John~V. Guttag.
\newblock Learning tasks for multitask learning: Heterogenous patient
  populations in the icu.
\newblock {\em Proceedings of the 24th ACM SIGKDD International Conference on
  Knowledge Discovery \& Data Mining}, 2018.

\bibitem{ibrahim2020classifying}
Zina~M. Ibrahim, Honghan Wu, Ahmed~A. Hamoud, Lukas Stappen, Richard J.~B.
  Dobson, and Andrea Agarossi.
\newblock On classifying sepsis heterogeneity in the icu: insight using machine
  learning.
\newblock {\em Journal of the American Medical Informatics Association :
  JAMIA}, 27:437 -- 443, 2020.

\bibitem{kim2008mcboost}
Tae-Kyun Kim and Roberto Cipolla.
\newblock Mcboost: Multiple classifier boosting for perceptual co-clustering of
  images and visual features.
\newblock In {\em NIPS}, 2008.

\bibitem{peikari2018cluster}
Mohammad Peikari, Sherine Salama, Sharon Nofech-Mozes, and Anne~L. Martel.
\newblock A cluster-then-label semi-supervised learning approach for pathology
  image classification.
\newblock {\em Scientific Reports}, 8, 2018.

\bibitem{alsmadi2015clustering}
Izzat Alsmadi and Ikdam Alhami.
\newblock Clustering and classification of email contents.
\newblock {\em J. King Saud Univ. Comput. Inf. Sci.}, 27:46--57, 2015.

\bibitem{mohri2012foundations}
Mehryar Mohri, Afshin Rostamizadeh, and Ameet Talwalkar.
\newblock {\em Foundations of machine learning}.
\newblock MIT Press, 2012.

\bibitem{hess2020softmax}
Sibylle Hess, Wouter Duivesteijn, and Decebal~Constantin Mocanu.
\newblock Softmax-based classification is k-means clustering: Formal proof,
  consequences for adversarial attacks, and improvement through centroid based
  tailoring.
\newblock {\em ArXiv}, abs/2001.01987, 2020.

\bibitem{elouedi2014hybrid}
Hind Elouedi, Walid Meliani, Zied Elouedi, and Nahla~Ben Amor.
\newblock A hybrid approach based on decision trees and clustering for breast
  cancer classification.
\newblock {\em 2014 6th International Conference of Soft Computing and Pattern
  Recognition (SoCPaR)}, pages 226--231, 2014.

\bibitem{vincent2014sparse}
Martin Vincent and Niels~Richard Hansen.
\newblock Sparse group lasso and high dimensional multinomial classification.
\newblock {\em Comput. Stat. Data Anal.}, 71:771--786, 2014.

\bibitem{visentin2019predicting}
Andrea Visentin, Alessia Nardotto, and Barry O’Sullivan.
\newblock Predicting judicial decisions: A statistically rigorous approach and
  a new ensemble classifier.
\newblock {\em 2019 IEEE 31st International Conference on Tools with Artificial
  Intelligence (ICTAI)}, pages 1820--1824, 2019.

\bibitem{mahfouz2020ensemble}
Ahmed~M. Mahfouz, Abdullah Abuhussein, Deepak Venugopal, and Sajjan~G. Shiva.
\newblock Ensemble classifiers for network intrusion detection using a novel
  network attack dataset.
\newblock {\em Future Internet}, 12:180, 2020.

\bibitem{beluch2018power}
William~H. Beluch, Tim Genewein, A.~N{\"u}rnberger, and Jan~M. K{\"o}hler.
\newblock The power of ensembles for active learning in image classification.
\newblock {\em 2018 IEEE/CVF Conference on Computer Vision and Pattern
  Recognition}, pages 9368--9377, 2018.

\bibitem{titov2010unsupervised}
Ivan Titov, A.~Klementiev, Kevin Small, and Dan Roth.
\newblock Unsupervised aggregation for classification problems with large
  numbers of categories.
\newblock In {\em AISTATS}, 2010.

\bibitem{hao2018simultaneous}
Botao Hao, Will~Wei Sun, Yufeng Liu, and Guang Cheng.
\newblock Simultaneous clustering and estimation of heterogeneous graphical
  models.
\newblock {\em Journal of machine learning research : JMLR}, 18, 2017.

\bibitem{qi2011llsvm}
Guo-Jun Qi, Qi~Tian, and Thomas~S. Huang.
\newblock Locality-sensitive support vector machine by exploring local
  correlation and global regularization.
\newblock {\em CVPR 2011}, pages 841--848, 2011.

\bibitem{torr2011locally}
Lubor Ladicky and Philip H.~S. Torr.
\newblock Locally linear support vector machines.
\newblock In {\em ICML}, 2011.

\bibitem{weinberger2009distance}
Kilian~Q. Weinberger and Lawrence~K. Saul.
\newblock Distance metric learning for large margin nearest neighbor
  classification.
\newblock In {\em NIPS}, 2005.

\bibitem{song2017parameter}
Kun Song, Feiping Nie, Junwei Han, and Xuelong Li.
\newblock Parameter free large margin nearest neighbor for distance metric
  learning.
\newblock In {\em AAAI}, 2017.

\bibitem{cai2009simultaneous}
Weiling Cai, Songcan Chen, and Daoqiang Zhang.
\newblock A simultaneous learning framework for clustering and classification.
\newblock {\em Pattern Recognit.}, 42:1248--1259, 2009.

\bibitem{zhang2021grasp}
Chaohe Zhang, Xin Gao, Liantao Ma, Yasha Wang, Jiangtao Wang, and Wen Tang.
\newblock Grasp: Generic framework for health status representation learning
  based on incorporating knowledge from similar patients.
\newblock In {\em AAAI}, 2021.

\bibitem{lee2020temporal}
Changhee Lee and Mihaela Van Der~Schaar.
\newblock Temporal phenotyping using deep predictive clustering of disease
  progression.
\newblock In {\em ICML}, 2020.

\bibitem{lloyd1982least}
Stuart~P. Lloyd.
\newblock Least squares quantization in pcm.
\newblock {\em IEEE Trans. Inf. Theory}, 28:129--136, 1982.

\bibitem{coates2012kmeans}
Adam Coates and A.~Ng.
\newblock Learning feature representations with k-means.
\newblock In {\em Neural Networks: Tricks of the Trade}, 2012.

\bibitem{yang2017towards}
Bo~Yang, Xiao Fu, N.~Sidiropoulos, and Mingyi Hong.
\newblock Towards k-means-friendly spaces: Simultaneous deep learning and
  clustering.
\newblock In {\em ICML}, 2017.

\bibitem{yang2016learning}
Bo~Yang, Xiao Fu, and N.~Sidiropoulos.
\newblock Learning from hidden traits: Joint factor analysis and latent
  clustering.
\newblock {\em IEEE Transactions on Signal Processing}, 65:256--269, 2017.

\bibitem{yang2016joint}
Jianwei Yang, Devi Parikh, and Dhruv Batra.
\newblock Joint unsupervised learning of deep representations and image
  clusters.
\newblock {\em 2016 IEEE Conference on Computer Vision and Pattern Recognition
  (CVPR)}, pages 5147--5156, 2016.

\bibitem{bartlett2002rademacher}
Peter~L Bartlett and Shahar Mendelson.
\newblock Rademacher and gaussian complexities: Risk bounds and structural
  results.
\newblock {\em JMLR}, 2002.

\bibitem{telgarsky2010hartigan}
Matus Telgarsky and Andrea Vattani.
\newblock Hartigan’s method: k-means clustering without voronoi.
\newblock In {\em AISTATS}, 2010.

\bibitem{Guo2017idec}
Xifeng Guo, Long Gao, Xinwang Liu, and Jianping Yin.
\newblock Improved deep embedded clustering with local structure preservation.
\newblock In {\em IJCAI}, 2017.

\bibitem{caron2020unsupervised}
Mathilde Caron, Ishan Misra, Julien Mairal, Priya Goyal, Piotr Bojanowski, and
  Armand Joulin.
\newblock Unsupervised learning of visual features by contrasting cluster
  assignments.
\newblock {\em ArXiv}, abs/2006.09882, 2020.

\bibitem{wang2018additive}
Feng Wang, Jian Cheng, Weiyang Liu, and Haijun Liu.
\newblock Additive margin softmax for face verification.
\newblock {\em IEEE Signal Processing Letters}, 25(7):926--930, 2018.

\bibitem{wang2017normface}
Feng Wang, Xiang Xiang, Jian Cheng, and Alan~Loddon Yuille.
\newblock Normface: L2 hypersphere embedding for face verification.
\newblock {\em Proceedings of the 25th ACM international conference on
  Multimedia}, 2017.

\bibitem{reyna2019early}
Matthew~A Reyna, Chris Josef, Salman Seyedi, Russell Jeter, Supreeth~P
  Shashikumar, M~Brandon Westover, Ashish Sharma, Shamim Nemati, and Gari~D
  Clifford.
\newblock Early prediction of sepsis from clinical data: the
  physionet/computing in cardiology challenge.
\newblock In {\em CinC}, 2019.

\bibitem{lee_wids}
M.~Lee~et. al.
\newblock {WiDS} ({Women} in {Data} {Science}) {Datathon} 2020: {ICU}
  {Mortality} {Prediction}, 2020.

\bibitem{UCL}
M.~Lichman.
\newblock Uci repository.
\newblock {\em UCI}, 2013.

\bibitem{johnson2016mimic}
Alistair~EW Johnson, Tom~J Pollard, Lu~Shen, Li-wei~H Lehman, Mengling Feng,
  Mohammad Ghassemi, Benjamin Moody, Peter Szolovits, Leo Anthony~Celi, and
  Roger~G Mark.
\newblock Mimic-iii, a freely accessible critical care database.
\newblock {\em Scientific data}, 3(1):1--9, 2016.

\bibitem{johnson2012patient}
Ikaro Silva, George~B. Moody, Daniel~J. Scott, Leo~Anthony Celi, and Roger~G.
  Mark.
\newblock Predicting in-hospital mortality of icu patients: The
  physionet/computing in cardiology challenge 2012.
\newblock {\em 2012 Computing in Cardiology}, pages 245--248, 2012.

\bibitem{silva2012predicting}
Ikaro Silva, George~B. Moody, Daniel~J. Scott, Leo~Anthony Celi, and Roger~G.
  Mark.
\newblock Predicting in-hospital mortality of icu patients: The
  physionet/computing in cardiology challenge 2012.
\newblock {\em 2012 Computing in Cardiology}, pages 245--248, 2012.

\bibitem{morrill2019signature}
James Morrill, Andrey Kormilitzin, Alejo~J. Nevado-Holgado, Sumanth
  Swaminathan, Sam, Howison, and Terry Lyons.
\newblock The signature-based model for early detection of sepsis from
  electronic health records in the intensive care unit.
\newblock {\em 2019 Computing in Cardiology (CinC)}, pages Page 1--Page 4,
  2019.

\bibitem{li2020dmnn}
Xiangrui Li, Dongxiao Zhu, and Phillip Levy.
\newblock Predicting clinical outcomes with patient stratification via deep
  mixture neural networks.
\newblock {\em AMIA Summits on Translational Science}, 2020.

\bibitem{wong1979algorithm}
Manchek~A Wong and JA~Hartigan.
\newblock Algorithm as 136: A k-means clustering algorithm.
\newblock {\em Journal of the Royal Statistical Society. Series C (Applied
  Statistics)}, 1979.

\bibitem{dasgupta2008hardness}
Sanjoy Dasgupta.
\newblock {\em The hardness of k-means clustering}.
\newblock University of California, 2008.

\bibitem{scikit-learn}
F.~Pedregosa, G.~Varoquaux, A.~Gramfort, V.~Michel, B.~Thirion, et~al.
\newblock Scikit-learn: Machine learning in {P}ython.
\newblock {\em JMLR}, 2011.

\bibitem{gou2021knowledge}
J.~Gou, B.~Yu, S.~J. Maybank, and D.~Tao.
\newblock Knowledge distillation: A survey.
\newblock {\em International Journal of Computer Vision}, 129(6):1789--1819,
  2021.

\end{thebibliography}

\newpage

\section*{Appendix}
\section{CAC details}
\looseness=-1
\label{app:cac_details}

\subsection{Class Separability}
We use a simple heuristic to measure class separability within each cluster.
For each cluster $C_j$ ($j\in [k]$), we define:
{\it cluster centroid}: $\mu(C_j) := |C_j|^{-1} \sum_{x\in C_j} x_i$,
{\it positive centroid}:  $\mu^{+}(C_j) := (\sum_{x_i \in C_j} y_i)^{-1} \sum_{x_i \in C_j} y_i x_i$, and
{\it negative centroid}: $\mu^{-}(C_j) := ({\sum_{x_i \in C_j} (1 - y_i)})^{-1} \sum_{x_i \in C_j} (1-y_i) x_i$.

Class separability is defined by considering the distance between positive and negative centroids within each cluster. Figure \ref{fig:concept} shows two clusters $C_1$ and $C_2$.
$C_1$ has greater class separation while in $C_2$, the data points are more intermingled and thus their centroids are closer.
Thus, $\|\mu_1^{+} - \mu_1^{-}\| > \|\mu_2^{+} - \mu_2^{-}\|$.

\subsection{Finding the clusters}

The most common heuristic to find $k$-means clusters is Lloyd's Algorithm \cite{lloyd1982least}.
It begins with an initial clustering and iteratively computes cluster centroids and assigns points to clusters corresponding to their closest
centroids.
However, using this heuristic with our proposed cost may not guarantee convergence.
(See our discussion after Theorem \ref{app:cac_convergence}).
Instead, we adopt an approach based on Hartigan's method \cite{wong1979algorithm}.
For $k$-means clustering, Hartigan's method proceeds point by point in a greedy manner. Each point is reassigned to a cluster such that the overall cost is reduced.

In our setting, denote the cost of merging a point $x$ to some cluster $C_q$ as $\Gamma^+(C_q, x) = \phi(C_q \cup  \{x\}) - \phi(C_q)$. Symmetrically, denote the cost of removing a point $x$ from some cluster $C_p$ as $\Gamma^-(C_p, x) = \phi(C_p\setminus \left\{x\right\}) - \phi(C_p)$.

The cost function of cluster $C_j$ with respect to point $i$  is calculated as $\|x_i - \mu_{j}\|^2 - \alpha \|\mu^+(C_j) - \mu^-(C_j)\|^2$. $\alpha$ is a hyperparameter which controls the weightage given to the class separation parameter. Typical value of $\alpha$ lies within the range $0$ to $2$ but it can vary according to the size of the dataset.

Consequently, define the improvement in cost by moving a point $x \in C_p$ from cluster $C_p$ to cluster $C_q$ to be $\Phi(x; C_p, C_q) := \Gamma^+(C_q ,x) + \Gamma^-(C_p , x)$. The exact expressions for $\Gamma^+(C_q ,x)$ and $\Gamma^-(C_p , x)$ w.r.t $x, \mu, \mu^{\pm}$ can be easily derived from above definitions and the CAC cost function (not shown here due to lack of space).

The main idea is from Hartigan's method of clustering \cite{telgarsky2010hartigan}: 
It begins with an initial clustering ($k$-means found clusters) and then proceeds iteratively in rounds.
In each round, all points are checked to determine if a point has to be re-assigned to another cluster.
For a point in the current iteration, $x_i$, if moving $x_i \in C_p$  would result in $C_p/x_i$ having points of only one class, then it is not moved.
If not, the cluster $C_q$ that results in the maximum decrease in cost $\Phi(x_i; C_p, C_q)$ is found and $x_i$ is moved from its current cluster $C_p$ to $C_q$.

In each round, all three 
centroids for each cluster are updated and once the loss function has become stagnant, the algorithm is considered to have converged. Base classifiers are then trained on the clusters found and are used for predicting labels for the unseen test data.

Classifiers $f_j$ are trained to minimize their corresponding \textit{supervised} training loss $\ell(f, C) = \sum_{j=1}^{k} \ell(f_j, C_j) $.
The best collection of classifiers, chosen based on a user-defined performance metric (e.g., training loss or validation set performance), is the final output, along with the corresponding cluster centroids.

The CAC algorithm runs until convergence and outputs the best collection of classifiers $f_{best}$ that minimizes the empirical loss.

\begin{remark*}[\bf{Hartigan vs. Lloyd}]
\label{remark:hartigan}
    For $k$-means clustering, both Lloyd's method and Hartigan's method converge since the cost function always decreases monotonically with each update.
    Note that Lloyd's method does not consider the change in centroids while updating the assignment of a point, and instead, directly assign points to their current closest centroids. 
    In our setting, Lloyd's method can similarly fix all centroids, compute a new assignment, and update the centroids accordingly.
    For instance, replacing the assignment rule of $x_i$ in Algorithm~\ref{algo:cac} by $q:=\argmin_{j} \|x_i-\mu(C_j)\|^2-\alpha\cdot\|\mu^+(C_j)-\mu^-(C_j)\|^2$.
    However, our cost function contains an additional squared distance term between positive and negative centroids, i.e., $-\alpha\cdot\|\mu^+(C_j)-\mu^-(C_j)\|^2$. 
    There is no guarantee that this term monotonically decreases by Lloyd's method during the update of $\mu^+(C_j)$ and $\mu^-(C_j)$ after an assignment.
    Hence, it is non-trivial to ensure and prove the convergence of Lloyd's method with our cost function.
    However, with the use of Hartigan's method, convergence is ensured by design. As for the rate of convergence, empirical experiments demonstrate that the rate of convergence is positively related to $\alpha$.

\end{remark*}

\subsection{CAC Prediction}
CAC outputs a set of cluster representatives denoted by their centroids and the trained classifiers associated with those clusters. 
To make predictions for a test point $\hat{x}$, the first step is to find the cluster to which the test point is most likely to belong. This can be done by assigning the test point to a cluster represented by the `closest' centroid $C_j$ in the data space i.e. $j = \argmin_{l} \|\hat{x} - \mu_l\|^2$. The corresponding classifier $f_j$ can then be used to predict $\hat{y} = f_j(\hat{x})$.

\begin{figure}[!htb]
    \centering
    \makebox[0.5\textwidth][c]{\includegraphics[width=\columnwidth]{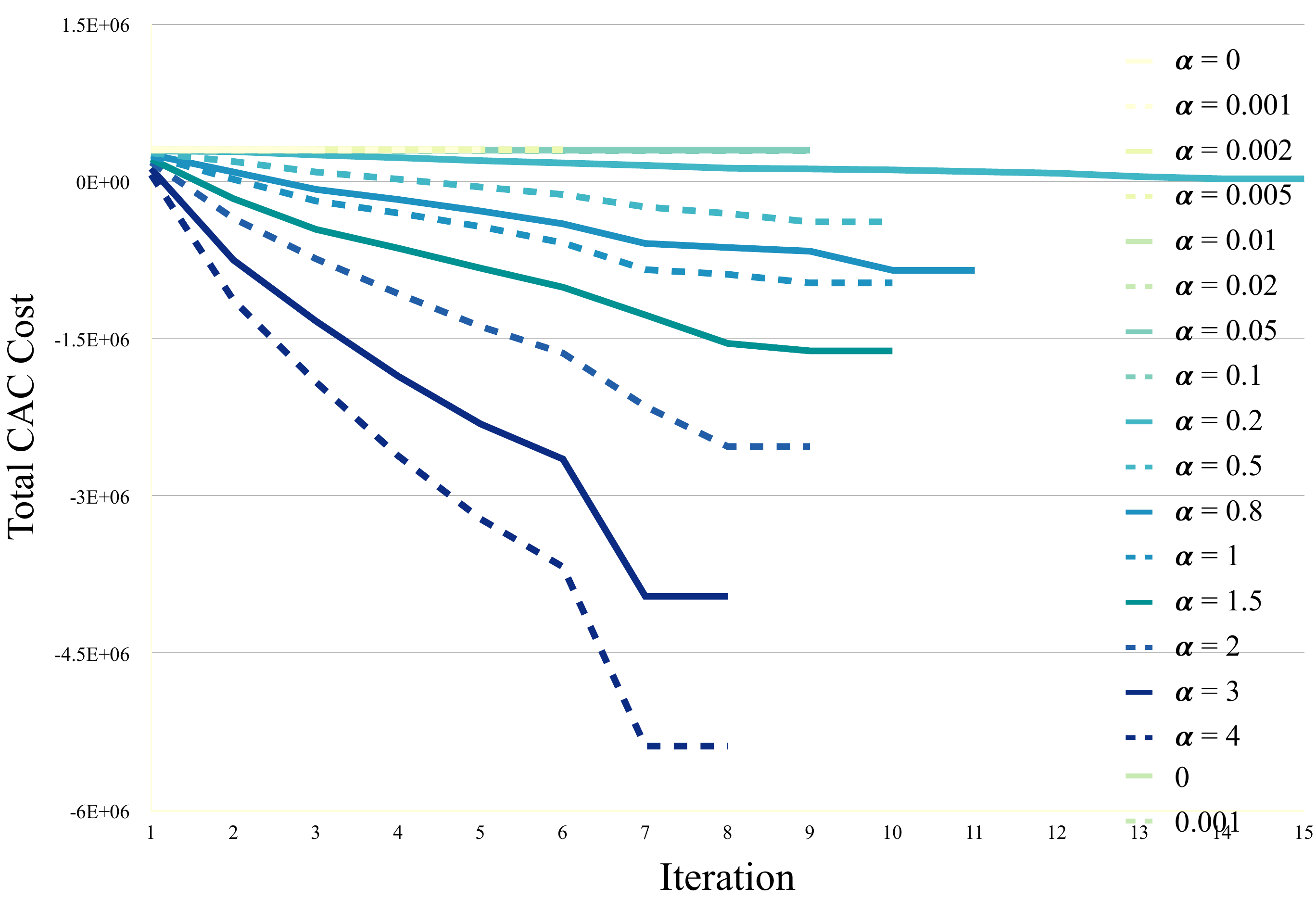}}%
    \caption{Total CAC cost v/s \# of iterations for different $\alpha$.}
    \label{fig:cac_loss}
\end{figure}

\begin{figure}[!htb]
    \centering
    \makebox[0.5\textwidth][c]{\includegraphics[width=\columnwidth]{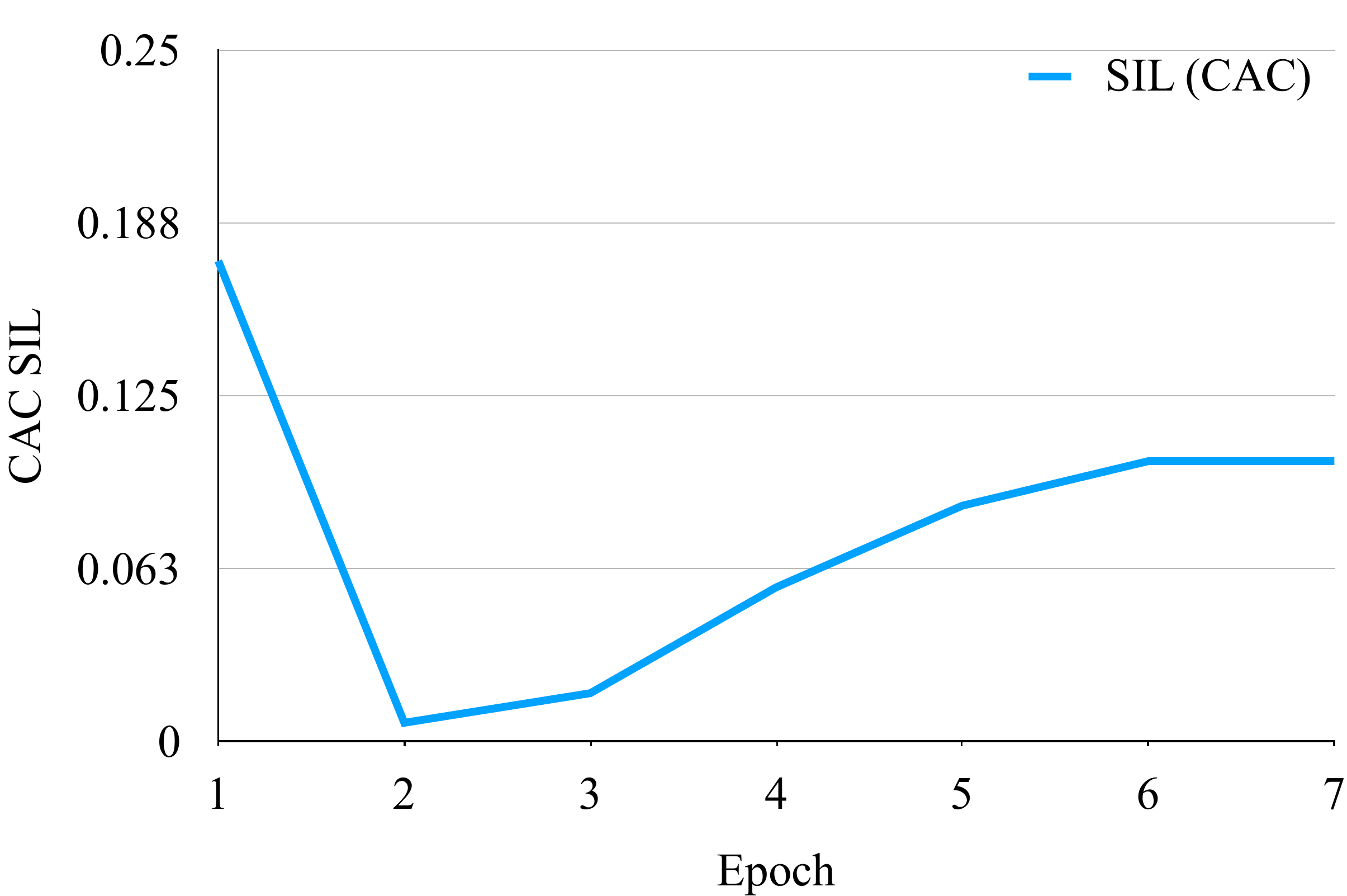}}%
    \caption{CAC clustering Silhouette score v/s training epochs}
    \label{fig:cac_sil_score}
\end{figure}

\section{Proof of Theorems}
\subsection{Proof of Theorem \ref{thm:clustering_rademacher}}
\label{app:thm_radamacher}

The following lemma is from \citep[Theorem 3.1]{bartlett2002rademacher,mohri2012foundations}: 
\begin{lemma} \label{lemma:1}
Let $\Fcal$  be a  set of maps $x\mapsto f(x)$. Suppose that $0 \le \ell\left(q, y\right) \le \lambda$ for any $q \in\{f(x):f\in \Fcal, x\in \Xcal \}$ and  $y \in \Ycal$. Then, for any $\delta>0$, with probability at least $1-\delta$ (over  an i.i.d. draw of $m$ i.i.d.  samples  $((x_i, y_i))_{i=1}^m$), the following holds: for all maps $ f\in\Fcal$,
\begin{align} 
\EE_{x,y}[\ell(f(x),y)]
& \le \frac{1}{m}\sum_{i=1}^{m} \ell(f(x_i),y_i)+2\hat \Rcal_{m}(\ell \circ \Fcal)+ \nonumber\\
& 3\lambda \sqrt{\frac{\ln(1/\delta)}{2m}},   
\end{align}
where  $\hat \Rcal_{m}(\ell \circ \Fcal):=\EE_{\sigma}[\sup_{f \in \Fcal}\frac{1}{m} \sum_{i=1}^m \sigma_i \ell(f(x_{i}),y_{i})]$ and $\sigma_1,\dots,\sigma_m$ are independent uniform random variables taking values in $\{-1,1\}$.  
\end{lemma}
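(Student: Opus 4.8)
The plan is to prove this data-dependent generalization bound via the standard three-step route: (i) a bounded-differences concentration argument applied to the uniform-deviation functional, (ii) a symmetrization step introducing a ghost sample and Rademacher signs to bring in the Rademacher complexity, and (iii) a second concentration step that passes from the expected Rademacher complexity to its empirical counterpart $\hat \Rcal_m$.

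First I would define the uniform deviation $\Phi(S) = \sup_{f\in\Fcal}\bigl(\EE_{x,y}[\ell(f(x),y)] - \frac{1}{m}\sum_{i=1}^m \ell(f(x_i),y_i)\bigr)$ as a function of the sample $S = ((x_i,y_i))_{i=1}^m$. Because $0 \le \ell \le \lambda$, replacing a single $(x_i,y_i)$ changes the empirical average by at most $\lambda/m$, and a supremum of functions each having bounded differences inherits the same modulus, so $\Phi$ satisfies the bounded-differences condition with constant $\lambda/m$. McDiarmid's inequality then yields, with probability at least $1-\delta$, the inequality $\Phi(S) \le \EE_S[\Phi(S)] + \lambda\sqrt{\frac{\ln(1/\delta)}{2m}}$.

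Next I would bound $\EE_S[\Phi(S)]$ by symmetrization. Writing the population risk as $\EE_{S'}[\frac{1}{m}\sum_i \ell(f(x_i'),y_i')]$ for an independent ghost sample $S'$ and applying Jensen's inequality to pull the supremum outside the $S'$-expectation gives $\EE_S[\Phi(S)] \le \EE_{S,S'}[\sup_{f}\frac{1}{m}\sum_i(\ell(f(x_i'),y_i') - \ell(f(x_i),y_i))]$. Since each summand is a symmetric difference of i.i.d.\ terms, multiplying the $i$-th term by an independent Rademacher sign $\sigma_i\in\{-1,1\}$ leaves the joint law unchanged, so the right side equals $\EE_{\sigma,S,S'}[\sup_f \frac{1}{m}\sum_i \sigma_i(\ell(f(x_i'),y_i') - \ell(f(x_i),y_i))]$. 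Splitting the supremum over the two halves and using that $\sigma_i$ and $-\sigma_i$ are identically distributed produces $\EE_S[\Phi(S)] \le 2\,\EE_S[\hat\Rcal_m(\ell\circ\Fcal)] =: 2\Rcal_m(\ell\circ\Fcal)$, the expected (population) Rademacher complexity.

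Finally I would pass from $\Rcal_m$ to the empirical $\hat\Rcal_m$ by a second bounded-differences argument: the map $S \mapsto \hat\Rcal_m(\ell\circ\Fcal)$ again has differences bounded by $\lambda/m$, so McDiarmid gives $\Rcal_m \le \hat\Rcal_m + \lambda\sqrt{\frac{\ln(1/\delta)}{2m}}$. Chaining the three displays, the coefficient $2$ in front of $\Rcal_m$ turns the accumulated $\lambda\sqrt{\cdots}$ terms into a total coefficient of $3$, with the usual union-bound bookkeeping over the two concentration events absorbed into the stated form. The main obstacle is the symmetrization step: one must justify carefully that inserting the Rademacher signs preserves the joint distribution and that splitting the supremum of the ghost-minus-empirical sum is what legitimately yields the factor of $2$. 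The bounded-differences verifications for $\Phi$ and for $\hat\Rcal_m$ are routine given $0\le\ell\le\lambda$, but attention is needed to apply McDiarmid to the two distinct functionals and to track how their tail terms combine into the coefficient $3$.
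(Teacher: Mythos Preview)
Your argument is correct and is precisely the standard textbook proof (McDiarmid for the uniform deviation, symmetrization via a ghost sample to introduce Rademacher averages, then McDiarmid again to pass from expected to empirical Rademacher complexity). The paper, however, does not prove this lemma at all: it is stated verbatim as a known result from \citet{bartlett2002rademacher} and \citet{mohri2012foundations} and then used as a black box in the proof of Theorem~\ref{thm:clustering_rademacher}. So there is no paper-specific proof to compare against; your sketch simply reconstructs the cited references' argument.
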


We can use Lemma \ref{lemma:1} to prove the following:

\setcounter{theorem}{0}

\begin{theorem} \label{thm:}
Let $\Fcal_j$ be a  set of maps $x \in C_j \mapsto f_{j}(x)$.  Let $\Fcal= \{x\mapsto f(x):f(x) = \sum_{j=1}^K \one\{x \in C_j\} f_j(x), f_j \in \Fcal_j\}$. Suppose that $0 \le \ell\left(q, y\right) \le \lambda_{j}$ for any $q \in\{f(x):f\in \Fcal_{j}, x\in C_j\}$ and  $y \in \Ycal_{j}$.  Then, for any $\delta>0$, with probability at least $1-\delta$, the following holds: for all maps $ f \in\Fcal$,
\begin{align} 
&\EE_{x,y}[\ell(f(x),y)]
\le \sum _{j=1}^K\Pr(x \in C_j) \nonumber\\ &\left(\frac{1}{m_{j}}\sum_{i=1}^{m_{j}} \ell(f_{}(x_i^{j}),y_i^{j})+2\hat \Rcal_{m_{j}}(\ell \circ \Fcal_{j})+3\lambda_{j} \sqrt{\frac{\ln(K/\delta)}{2m_{j}}}\right),   
\end{align}
where \\
$\hat \Rcal_{m}(\ell \circ \Fcal_{j}):=\EE_{\sigma}[\sup_{f_{j} \in \Fcal_{j}}\frac{1}{m} \sum_{i=1}^m \sigma_i \ell(f_{j}(x_{i}),y_{i})]$, $Pr(x \in C_j) = 1$ if $x\in C_j$ and $\sigma_1,\dots,\sigma_m$ are independent uniform random variables taking values in $\{-1,1\}$.
\end{theorem}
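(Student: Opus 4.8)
The plan is to derive the cluster-wise bound from the single-distribution bound of Lemma~\ref{lemma:1} by (i) decomposing the expected risk over the $K$ clusters via the law of total expectation, (ii) applying Lemma~\ref{lemma:1} separately inside each cluster, and (iii) combining the $K$ high-probability statements with a union bound. The central structural fact I would exploit is that the clusters $C_1,\dots,C_K$ partition the input space, so for any $f\in\Fcal$ and any $x\in C_j$ the definition $f(x)=\sum_{l=1}^K\one\{x\in C_l\}f_l(x)$ collapses to $f(x)=f_j(x)$ with $f_j\in\Fcal_j$. This means the behaviour of $f$ on $C_j$ is governed entirely by the single-cluster function class $\Fcal_j$, which is exactly the object Lemma~\ref{lemma:1} is stated for.

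First I would write
\begin{align}
\EE_{x,y}[\ell(f(x),y)] = \sum_{j=1}^K \Pr(x\in C_j)\,\EE_{x,y}\!\left[\ell(f_j(x),y)\mid x\in C_j\right],
\end{align}
using the partition property and conditioning on cluster membership. Then, treating the $m_j$ points of $S_j$ as an i.i.d.\ sample from the conditional distribution of $(x,y)$ given $x\in C_j$, I would invoke Lemma~\ref{lemma:1} with function class $\Fcal_j$, loss bound $\lambda_j$, sample size $m_j$, and confidence parameter $\delta/K$. This yields, for each fixed $j$, an event of probability at least $1-\delta/K$ on which the conditional risk is bounded by its empirical counterpart $\frac{1}{m_j}\sum_{i=1}^{m_j}\ell(f(x_i^j),y_i^j)$ plus $2\hat\Rcal_{m_j}(\ell\circ\Fcal_j)$ plus $3\lambda_j\sqrt{\ln(K/\delta)/(2m_j)}$, where the $\ln(K/\delta)$ arises precisely because $\ln(1/(\delta/K))=\ln(K/\delta)$. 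Crucially, Lemma~\ref{lemma:1} gives this uniformly over all $f_j\in\Fcal_j$, so it holds simultaneously for every choice of cluster-restricted map.

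Finally I would union-bound over the $K$ clusters: the probability that at least one of the $K$ events fails is at most $K\cdot(\delta/K)=\delta$, so with probability at least $1-\delta$ all $K$ per-cluster bounds hold at once. Since $\Fcal$ is the product $\Fcal_1\times\cdots\times\Fcal_K$ (each $f$ is determined by its tuple of restrictions $f_j$), the simultaneous uniformity over all $\Fcal_j$ delivers uniformity over $\Fcal$. Substituting the $K$ bounds into the decomposition and weighting by $\Pr(x\in C_j)$ gives the claimed inequality. The only genuinely delicate point, rather than an algebraic obstacle, is the i.i.d.\ justification in the conditional application of Lemma~\ref{lemma:1}: the cluster assignment is itself data-dependent, so I would state explicitly that the per-cluster samples are regarded as i.i.d.\ draws from the conditional law $\mathcal{D}\mid x\in C_j$, which is the modeling assumption under which the bound is meaningful; everything else is a routine combination of the decomposition and the union bound.
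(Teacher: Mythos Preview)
Your proposal is correct and follows essentially the same route as the paper: decompose the expected risk via the law of total expectation over the $K$ clusters, apply Lemma~\ref{lemma:1} to each conditional distribution with confidence $\delta/K$, take a union bound, and recombine with the weights $\Pr(x\in C_j)$. Your additional remark about the data-dependence of the cluster assignments is a valid caveat that the paper's proof does not make explicit.
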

\begin{proof}[Proof of Theorem \ref{thm:clustering_rademacher}]
We have the following.
\begin{align} \label{eq:1}
\EE_{x,y}[\ell(f(x),y)] &=\sum _{j=1}^K\Pr(x \in C_j)\EE_{x,y}[\ell(f(x),y) \mid x  \in C_j)]
\\ \nonumber & =\sum _{j=1}^K\Pr(x \in C_j)\EE_{x,y}[\ell(f_{j}(x),y) \mid x  \in C_j)].  
\end{align}
Since the conditional probability distribution is a probability distribution, we apply Lemma \ref{lemma:1} to each term and take a union bound to obtain the following: for any $\delta>0$, with probability at least $1-\delta$, for all $j \in \{1,\dots,K\}$ and all $f_j \in F_j$,
\begin{align*} 
&\EE_{x,y}[\ell(f_{j}(x),y) \mid x  \in C_j)]\\ &\le\frac{1}{m_{j}}\sum_{i=1}^{m_{j}} \ell(f_{j}(x_i^{j}),y_i^{j})+2\hat \Rcal_{m_{j}}(\ell \circ \Fcal_{j})+3\lambda_{j} \sqrt{\frac{\ln(K/\delta)}{2m}}
\\
& =\frac{1}{m_{j}}\sum_{i=1}^{m_{j}} \ell(f_{}(x_i^{j}),y_i^{j})+2\hat \Rcal_{m_{j}}(\ell \circ \Fcal_{j})+3\lambda_{j} \sqrt{\frac{\ln(K/\delta)}{2m}}. 
\end{align*}
Thus, using \eqref{eq:1}, we sum up both sides with factors $\Pr(x \in C_j)$ to yield:

\begin{align}
&\EE_{x,y}[\ell(f(x),y)] \nonumber\\
& =\sum _{j=1}^K\Pr(x \in C_j)\EE_{x,y}[\ell(f_{j}(x),y) \mid x  \in C_j)] \nonumber
\\
& \le \sum _{j=1}^K\Pr(x \in C_j) (\frac{1}{m_{j}}\sum_{i=1}^{m_{j}} \ell(f_{}(x_i^{j}),y_i^{j})+ \nonumber\\
& 2\hat \Rcal_{m_{j}}(\ell \circ \Fcal_{j})+3\lambda_{j} \sqrt{\frac{\ln(K/\delta)}{2m}})
\end{align}
\end{proof}

\subsection{Proof of Lemma \ref{lemma:neg_derivative}}
\label{app:lemma_neg_derivative}

\begin{lemma}
\label{lemma:neg_derivative}
    For $d$ dimensional vectors $\mathbf{u}, \mathbf{v}$ and $\boldsymbol{\beta}$, $\frac{\partial \Delta}{\partial \|\mathbf{u}\|} < 0$ for $\Delta = A - (B {\boldsymbol{\beta}} \mathbf{u} + C {\boldsymbol{\beta}} \mathbf{v})$ if $\beta_{i} u_{i} > 0, \forall i \in \{1, 2, \cdots, d\}$ and $A, B > 0$.
\end{lemma}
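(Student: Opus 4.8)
The plan is to first pin down what the derivative $\frac{\partial \Delta}{\partial \|\mathbf{u}\|}$ actually means, since $\Delta$ is written as a function of the vectors $\mathbf{u}, \mathbf{v}, \boldsymbol{\beta}$ and not of the scalar $\|\mathbf{u}\|$ alone. The natural reading, and the one consistent with the role of this lemma in Theorem~\ref{thm:bounding_log_loss}, is that we vary the \emph{magnitude} of $\mathbf{u}$ while holding its direction fixed. Concretely I would write $\mathbf{u} = \|\mathbf{u}\|\,\hat{\mathbf{u}}$ with $\hat{\mathbf{u}} = \mathbf{u}/\|\mathbf{u}\|$ held constant, so that each coordinate obeys $u_i = \|\mathbf{u}\|\,\hat u_i$ and therefore $\frac{\partial u_i}{\partial \|\mathbf{u}\|} = \hat u_i = u_i/\|\mathbf{u}\|$.

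Next I would expand the inner products, treating $B\boldsymbol{\beta}\mathbf{u}$ and $C\boldsymbol{\beta}\mathbf{v}$ as the scalars $B\sum_i \beta_i u_i$ and $C\sum_i \beta_i v_i$. Since $A$ is constant and the term $C\boldsymbol{\beta}\mathbf{v}$ has no dependence on $\mathbf{u}$, both differentiate to zero, and only the $B\boldsymbol{\beta}\mathbf{u}$ contribution survives. Applying the chain rule coordinate-wise then gives
\[
\frac{\partial \Delta}{\partial \|\mathbf{u}\|} = -B\sum_{i=1}^{d}\beta_i \frac{\partial u_i}{\partial \|\mathbf{u}\|} = -\frac{B}{\|\mathbf{u}\|}\sum_{i=1}^{d}\beta_i u_i .
\]
To finish, I would invoke the hypotheses to fix the sign: since $\beta_i u_i > 0$ for every $i$, the sum $\sum_{i=1}^{d}\beta_i u_i$ is strictly positive, and combined with $B > 0$ and $\|\mathbf{u}\| > 0$ the prefactor $B/\|\mathbf{u}\|$ is positive, so the whole expression is strictly negative. (The constant $A > 0$ plays no role beyond ensuring $\Delta$ is well posed.)

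I expect the only genuine obstacle to be the interpretive step at the start: making precise that the derivative is taken along the radial direction with $\hat{\mathbf{u}}$ fixed, rather than as an ill-defined derivative of a multivariate object with respect to one scalar functional of it. Once this convention is stated the computation is a one-liner, and I would emphasize that the hypothesis is deliberately stated \emph{coordinate-wise} as $\beta_i u_i > 0$ precisely so that no cancellation can occur in the dot product $\sum_i \beta_i u_i$; a weaker assumption such as $\boldsymbol{\beta}\cdot\mathbf{u} > 0$ at the current point would not survive the radial scaling cleanly, so the coordinate-wise form is what makes the sign conclusion robust.
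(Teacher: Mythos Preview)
Your argument is correct and, in fact, cleaner than the paper's. Both proofs pass through the chain rule coordinate-wise and then use the hypothesis $\beta_i u_i>0$ to fix the sign, but they diverge in how they interpret $\frac{\partial u_i}{\partial \|\mathbf{u}\|}$. You fix the direction $\hat{\mathbf u}$ and scale radially, obtaining $\frac{\partial u_i}{\partial \|\mathbf u\|}=u_i/\|\mathbf u\|$ and hence $\frac{\partial \Delta}{\partial \|\mathbf u\|}=-\tfrac{B}{\|\mathbf u\|}\sum_i \beta_i u_i$. The paper instead sets $\frac{\partial u_i}{\partial \|\mathbf u\|}=\bigl(\frac{\partial \|\mathbf u\|}{\partial u_i}\bigr)^{-1}=\|\mathbf u\|/u_i$, arriving at the different expression $-B\,\|\mathbf u\|\sum_i \beta_i/u_i$. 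Both expressions are negative under the coordinate-wise sign assumption, so the lemma's conclusion is unaffected, but your radial-scaling convention is the mathematically standard one; the paper's component-wise inversion of partial derivatives is not a valid chain-rule step in general and only incidentally yields the right sign here. Your explicit statement of what is held fixed is exactly the clarification the paper's version lacks.
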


\begin{proof}
Let $u_i$ be the $i^{th}$ element of $\mathbf{u}$. Expanding $\frac{\partial \Delta}{\partial \|\mathbf{u}\|}$ using chain rule, we get
\begin{align*}
    \frac{\partial \Delta}{\partial \|\mathbf{u}\|} = \frac{\partial \Delta}{\partial \mathbf{u}} \cdot \frac{\partial \mathbf{u}}{\partial \|\mathbf{u}\|} =  -B\boldsymbol{\beta} \cdot \frac{\partial \mathbf{u}}{\partial \|\mathbf{u}\|}\ \  \text{(Since $\frac{\partial \Delta}{\partial \mathbf{u}} = -B\boldsymbol{\beta} \mathbf{u}$)}
\end{align*}

Let $u_i$ denote the $i^{th}$ element of $\mathbf{u}$. Then, from the definition of vector derivatives we have,
\begin{align*}
\frac{\partial \mathbf{u}}{\partial \|\mathbf{u}\|} &= \left[\frac{\partial u_1}{\partial \|\mathbf{u}\|}, \frac{\partial u_2}{\partial \|\mathbf{u}\|}, \cdots, \frac{\partial u_d}{\partial \|\mathbf{u}\|} \right] \\
&= \left[\left(\frac{\partial \|\mathbf{u}\|}{\partial u_1} \right)^{-1}, \left(\frac{\partial \|\mathbf{u}\|}{\partial u_2} \right)^{-1}, \cdots, \left(\frac{\partial \|\mathbf{u}\|}{\partial u_d} \right)^{-1} \right] \\
&=\|\mathbf{u}\| \left[\frac{1}{u_1}, \frac{1}{u_2}, \cdots, \frac{1}{u_d} \right]
\end{align*}

Thus, $\frac{\partial \Delta}{\partial \|\mathbf{u}\|} = -B \cdot \|\mathbf{u}\| \cdot \mathlarger{\sum}_{i=1}^{d} \frac{\beta_i}{u_i}$.
where $u_i = \mu^{+}_i - \mu^{-}_i$ and $v_i = \mu^{+}_i + \mu^{-}_i$.

Since $\beta\cdot \mu^{+} > 0$ and $\beta\cdot \mu^{-} < 0$, it implies that $\beta \cdot \mathbf{u} > 0$. Assuming that every dimension of $\mathbf{u}$ lies on the same side of regression plane $\beta$ as the original point, i.e. $\beta_i u_i = \beta_{i} > 0, \forall i$, $\frac{\beta_i}{u_i} > 0\ \forall i$, then $\frac{\partial \Delta}{\partial \|\mathbf{u}\|} < 0$ as $B$ and $\|\mathbf{u}\|$. Thus under above assumption, we prove that $\Delta$ decreases as $\|\mathbf{u}\|$ increases.

\end{proof}

\subsection{Proof of Theorem \ref{thm:bounding_log_loss}}
\label{app:bounding_log_loss}

\begin{theorem}
Let $X = (x_1, x_2, \cdots, x_N)$ be a training dataset with $n$ records $x_i \in \mathbb{R}^{d}$ together with binary labels $y_i\in \left\{0,1\right\}$. Define the log-loss over the entire dataset as :
 $$\ell(X) = -\Sigma_{i} y_i \ln(p(x_i)) + (1-y_i) \ln(1-p(x_i))$$

where $p(x_i) = [1 + \exp(-\beta x_i)]^{-1}$ and $\beta = \argmin_{\beta} \ell(X)$. Then,
\[
C_1 - (C_3 \beta\mu^{+} - C_4\beta\mu^{-}) \leq \ell(X) \leq C_2 - (C_3\beta\mu^{+} - C_4\beta\mu^{-})
\]
where $C_1 = N\ln(2),\ C_2 = N\ln(1+\exp(c)) - \frac{Nc}{2},\ C_3 = \frac{N^{+}}{2},\ C_4 = \frac{N^{-}}{2},\ N^{+} = \sum_{x_i \in X} y_i$, $N^{-} = \sum_{x_i \in X} (1-y_i)$ and $c = \argmax_{i} \|\beta x_i\|$.
\end{theorem}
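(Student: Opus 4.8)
The plan is to rewrite the logistic log-loss in a symmetric form in which the part that is linear in the scores produces \emph{exactly} the term $C_3\beta\mu^+ - C_4\beta\mu^-$, leaving a residual that is an even function of the scores and can be sandwiched between two constants.

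First I would set $z_i := \beta\cdot x_i$ (so $p(x_i)$ is the logistic sigmoid of $z_i$) and use the elementary per-example identity $-y\ln p - (1-y)\ln(1-p) = \ln(1+e^{z}) - y z$ to obtain
\[
\ell(X) = \sum_{i=1}^{N}\bigl(\ln(1+e^{z_i}) - y_i z_i\bigr).
\]
Next I would symmetrize using $\ln(1+e^{z}) = \tfrac{z}{2} + \ln\!\bigl(e^{z/2}+e^{-z/2}\bigr)$. Collecting the resulting linear terms gives $\sum_i(\tfrac12 - y_i)z_i$, and because $\sum_i y_i z_i = N^{+}\,\beta\cdot\mu^{+}$ and $\sum_i (1-y_i) z_i = N^{-}\,\beta\cdot\mu^{-}$ by the definitions of the class centroids, this sum equals $-(C_3\beta\mu^{+} - C_4\beta\mu^{-})$ with $C_3 = N^{+}/2$, $C_4 = N^{-}/2$. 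Thus everything reduces to
\[
\ell(X) = \sum_{i=1}^{N}\ln\!\bigl(e^{z_i/2}+e^{-z_i/2}\bigr) \;-\; (C_3\beta\mu^{+} - C_4\beta\mu^{-}),
\]
so it remains only to bound the even, score-dependent sum $S := \sum_i \ln(e^{z_i/2}+e^{-z_i/2})$ by constants.

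I would bound $S$ termwise. By AM--GM, $e^{z/2}+e^{-z/2}\ge 2$, which gives the lower bound $S \ge N\ln 2 = C_1$. For the upper bound, the map $z\mapsto \ln(e^{z/2}+e^{-z/2})$ is even and increasing in $|z|$, so with $c := \max_i|\beta\cdot x_i|$ each summand is at most $\ln(e^{c/2}+e^{-c/2})$; summing and applying the identity $\ln(e^{c/2}+e^{-c/2}) = \ln(1+e^{c}) - c/2$ yields $S \le N\ln(1+e^{c}) - Nc/2 = C_2$. Substituting these two estimates for $S$ into the displayed expression for $\ell(X)$ gives the claimed two-sided inequality.

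The work here is essentially bookkeeping rather than a deep obstacle, and the main care is in the reading of the statement: $\beta x_i$ must be interpreted as the inner product $\beta\cdot x_i$, and the constant $c$ must be read as the maximal score magnitude $\max_i|\beta\cdot x_i|$ (the statement's ``$\argmax$'' and $\|\cdot\|$ are understood this way), since the upper bound on $S$ requires $|z_i|\le c$ for \emph{every} $i$. The one place a slip would silently propagate is the split of $\sum_i(\tfrac12 - y_i)z_i$ across the two classes, so I would double-check the signs there to confirm that the positive-class contribution carries coefficient $-N^{+}/2$ and the negative-class contribution carries $+N^{-}/2$, matching $C_3$ and $C_4$.
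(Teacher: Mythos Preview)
Your argument is correct, and it takes a genuinely different route from the paper. The paper treats the two classes separately, applies Jensen's inequality to the convex per-example losses to pull the class centroids out (lower bound), and then for the upper bound constructs secant lines through $(-c,\ell^{\pm}(-c))$ and $(c,\ell^{\pm}(c))$ that dominate $\ell^{\pm}$ on the score interval, using linearity so that Jensen is an equality. Your approach instead derives the exact identity $\ell(X)=\sum_i\ln(e^{z_i/2}+e^{-z_i/2})-(C_3\beta\mu^+-C_4\beta\mu^-)$ and then bounds the even residual $S$ pointwise by AM--GM below and by monotonicity in $|z|$ above. This is more elementary (no Jensen, no secant construction) and in fact slightly stronger, since it shows the centroid term is not merely a bound but the exact linear part of the loss; the only analytic content left is the trivial two-sided estimate $\ln 2\le \ln(2\cosh(z/2))\le \ln(1+e^{c})-c/2$ on $|z|\le c$. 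Your reading of $c$ as $\max_i|\beta\cdot x_i|$ (rather than an index) is also the correct interpretation and is exactly what the paper uses when it places the secant endpoints at $\pm c$.
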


\begin{proof}
%
\textbf{Lower Bound}
Let $\ell(X, y) = \Sigma_{y_i = 1} \ell^{+} (x_i) + \Sigma_{y_i = 0} \ell^{-}(x_i)$ s.t.
\begin{align}
\label{eq:log_loss}
\ell^{+}(x_i) = -\ln(p_i),\ \ell^{-}(x_i) = -\ln(1-p_i)
\end{align}

Applying Jensen's inequality on $\ell^{+}$ and $\ell^{-}$, 
\begin{align*}
    \ell(X) =& \Sigma_{x_i \in X^{+}} \ell^{+}(x_i) + \Sigma_{x_i \in X^{-}} \ell^{-}(x_i)\\
     =& -\Sigma_{y_i = 1} \ln[1+\exp(-\beta x_i)]^{-1} - \\
     & \Sigma_{y_i = 0} \ln[1-(1+\exp(-\beta x_i))^{-1}] && \text{(By eq. \ref{eq:log_loss})}
\end{align*}

\begin{align*}
    \geq& - N^{+} \ln\left[1+\exp\left(-\beta \cdot \left(\frac{\Sigma_{x_i \in X^{+}}x_i}{N^{+}}\right)\right)\right]^{-1} \\
    & - N^{-} \ln\left[1-\left(1+\exp\left(-\beta \cdot \left(\frac{\Sigma_{x_i \in X^{-}}x_i}{N^{-}}\right)\right)\right)^{-1}\right]\\
    & \text{(Jensen's inequality for convex functions)} \\
    \geq& N^{+} \ln[1+ \exp(-\beta \mu^{+})] - N^{-} \ln\left[\frac{\exp(-\beta \mu^{-})}{1 + \exp(-\beta \mu^{-})}\right] \\ & \text{(Substituting $\mu^{+}$ and $\mu^{-}$)} \\
    \geq& N^{+} \ln[1+ \exp(-\beta \mu^{+})] + N^{-} \ln[1+ \exp(\beta \mu^{-})]
\end{align*}


We observe that $\ln(1+\exp(x)) \geq \ln(2) + \frac{x}{2} \ \forall x\in \RR$ by considering the Taylor series expansion of $\ln(1+\exp(x))$ at $x=0$. $\ln(1+\exp(x)) = \ln(2) + \frac{x}{2} + \frac{x^2}{8} + \mathcal{O}(x^4)$. Then,
\begin{align*}
\ell(X) &\geq N^{+}\left(\ln(2) -\frac{\beta\mu^{+}}{2}\right) + N^{-}\left(\ln(2) + \frac{\beta\mu^{-}}{2}\right)\\
&\geq N \ln(2) - \frac{1}{2}\cdot (N^{+}\beta \mu^{+} - N^{-} \beta \mu^{-})
\end{align*}
where $C_1 = N\ln(2), C_3 = \frac{N^{+}}{2}, C_4 = \frac{N^{-}}{2}$.

\textbf{Upper Bound}

Since all points $x_i$ are constrained within their respective clusters, we note that the maximum log-loss for a point $x_i$ labelled as $y_i$ is not unbounded. Let $c = \argmax_i\|\beta x_{i}\|$. Since $\ell^{+}$ is a monotonically decreasing function and $\ell^{-}$ is monotonically increasing, we can bound them using linear functions $s^{+}$ and $s^{-}$ respectively. $s^{+} \geq \ell^{+}$ and $s^{-} \geq \ell^{-}$ for all points $X^{+}$ and $X^{-}$ respectively. $s^{+} (x_i)$ passes through $(-c, \ln(p(-c)))$ and $(c, \ln(p(c)))$. Hence $s^{+}(x) = K_1 - K_2\cdot \beta x$ where $K_1, K_2 > 0$. Similarly, $s^{-}(x) = K_3 + K_4\cdot \beta x$ where $K_3, K_4 > 0$.


Solving for $K_1 \cdots K_4$, we get
\begin{align*}
    K_1 = K_3 = \ln(1+\exp(c)) - \frac{c}{2},\ K_2 = K_4 = \frac{1}{2}
\end{align*}
Then,
\begin{align*}
    \ell(X) &= \Sigma_{x_i \in X^{+}} \ell^{+}(x_i) + \Sigma_{x_i \in X^{-}} \ell^{-}(x_i)\\
    &< \Sigma_{x_i \in X^{+}}s^{+}(x_i) + \Sigma_{x_i \in X^{-}} s^{-}(x_i)\\
    &\leq N^{+} s^{+}\left(\frac{\Sigma_{x_i \in X^{+}}x_i}{N^{+}}\right) + N^{-} s^{-}\left(\frac{\Sigma_{x_i \in X^{-}}x_i}{N^{-}}\right) \\
    & \text{(Jensen's inequality for a linear function)}\\
    &\leq N^{+} s^{+}(\mu^{+}) + N^{-} s^{-}(\mu^{-}) \\
    &\leq N^{+}(K_1 - K_2\cdot \beta \mu^{+}) + N^{-}(K_3 + K_4\cdot \beta \mu^{-})\\
    &\leq C_2 - (C_3 \beta \mu^{+} - C_4 \beta \mu^{-}) \ \text{(Rearranging terms)}
\end{align*}


where $C_2 = N\ln(1+\exp(c)) - \frac{Nc}{2}$ and $C_3 = \frac{N^{+}}{2}, C_4 = \frac{N^{-}}{2}$.
\end{proof}

\subsection{Proof of Theorem \ref{thm:ams_cac}}
\label{app:bounding_ams_loss}

\begin{theorem}
Let $X = (x_1, x_2, \cdots, x_N)$ be a training dataset with $n$ records $x_i \in \mathbb{R}^{d}$ together with binary labels $y_i\in \left\{0,1\right\}$. Define the AM Softmax loss over the entire dataset as : $-\sum_{i}^{} \log \frac{e^{s \cdot\left(W_{y_{i}}^{T} z_{i}-m\right)}}{e^{s \cdot\left(W_{y_{i}}^{T} z_{i}-m\right)}+\sum_{j=1, j \neq y_{i}}^{c} e^{s W_{j}^{T} z_{i}}}$. Then,
\[
K_1 - \Gamma(K_3 \mu^{+} - K_4\mu^{-}) \leq \ell(X) \leq K_2 - \Gamma(K_3\mu^{+} - K_4\mu^{-})
\]
where $K_1 = N\left(\log2 + \frac{sm}{2}\right),\ K_2 = N(1 + sm),\ K_3 = sN^{+},\ K_4 = sN^{-},\ N^{+} = \sum_{x_i \in X} y_i$, $N^{-} = \sum_{x_i \in X} (1-y_i)$ and $\Gamma = W^T_{+} - W^T_{-}$.
\end{theorem}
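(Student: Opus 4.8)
The plan is to mirror the argument used for Theorem~\ref{thm:bounding_log_loss}, since the two statements are structurally identical; the only genuinely new work is an algebraic reduction that exposes this parallel. First I would specialise the AM~Softmax loss to the binary case ($c=2$) and split the sum over samples according to the label. For a positive sample ($y_i=1$) the per-point loss is $-\log\frac{e^{s(W_+^{T}z_i-m)}}{e^{s(W_+^{T}z_i-m)}+e^{sW_-^{T}z_i}}$; dividing numerator and denominator by $e^{s(W_+^{T}z_i-m)}$ collapses this to $\ell^{+}(z_i)=\log\bigl(1+e^{-s\Gamma z_i + sm}\bigr)$, and symmetrically $\ell^{-}(z_i)=\log\bigl(1+e^{s\Gamma z_i + sm}\bigr)$ for a negative sample, where $\Gamma=W_+^{T}-W_-^{T}$. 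This is the crucial step: it shows that each term is a softplus of an affine function of $z_i$, exactly the form $\log(1+e^{\mp\beta x_i})$ met in Theorem~\ref{thm:bounding_log_loss}, with $\beta x_i$ replaced by $s\Gamma z_i$ and a fixed offset $sm$. Once this is in hand, the rest of the log-loss proof transfers essentially verbatim.

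For the lower bound I would exploit the convexity of the softplus map $u\mapsto\log(1+e^{u})$. Applying Jensen's inequality to the positive and negative sums separately pushes the averaging inside, replacing $\frac{1}{N^{+}}\sum_{x_i\in X^{+}} z_i$ by $\mu^{+}$ and $\frac{1}{N^{-}}\sum_{x_i\in X^{-}} z_i$ by $\mu^{-}$, giving $L_{AMS}(X)\ge N^{+}\log(1+e^{-s\Gamma\mu^{+}+sm})+N^{-}\log(1+e^{s\Gamma\mu^{-}+sm})$. I would then invoke the same Taylor-based linear bound used earlier, $\log(1+e^{u})\ge\log 2 + \tfrac{u}{2}$, on each term and collect the constant contributions into $K_1=N\bigl(\log 2+\tfrac{sm}{2}\bigr)$ and the separation-dependent contributions into the term $\Gamma(K_3\mu^{+}-K_4\mu^{-})$, yielding the claimed lower bound.

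For the upper bound I would use that AM~Softmax normalises both the embeddings and the agent vectors to the unit hypersphere, so $s\Gamma z_i$ ranges over a bounded interval; let $c$ denote the extreme value. Since $\ell^{+}$ is convex and decreasing and $\ell^{-}$ is convex and increasing in $\Gamma z_i$, each can be dominated on this interval by the secant (chord) line through its endpoints, giving affine upper bounds $s^{+}$ and $s^{-}$. Because these are linear, Jensen's inequality now holds with equality under averaging, so substituting $\mu^{+}$ and $\mu^{-}$ and solving for the chord coefficients produces $K_2$ and the identical $\Gamma(K_3\mu^{+}-K_4\mu^{-})$ term, completing the bound.

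I expect the main obstacle to be the reduction in the first step, namely recognising that the multiclass-looking AM~Softmax collapses in the binary case to a single softplus governed by the difference vector $\Gamma=W_+-W_-$, which plays precisely the role of $\beta$ in Theorem~\ref{thm:bounding_log_loss}. A secondary subtlety is justifying the bounded domain required for the upper bound, which rests on the normalisation intrinsic to AM~Softmax and should be stated explicitly; the remaining bookkeeping to pin down the exact constants $K_1,\dots,K_4$ is mechanical.
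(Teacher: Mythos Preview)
Your reduction to the softplus form and your lower-bound argument are exactly what the paper does: split by label, apply Jensen's inequality to $u\mapsto\log(1+e^{u})$, then use $\log(1+e^{u})\ge\log 2+\tfrac{u}{2}$ to extract $K_1$ and the $\Gamma(K_3\mu^{+}-K_4\mu^{-})$ term.

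For the upper bound, however, the paper does \emph{not} repeat the chord/secant construction from Theorem~\ref{thm:bounding_log_loss}. Instead it uses the one-line inequality $\log(1+e^{u})<1+u$ for $u>0$, applied termwise after arguing (by reference to the AM~Softmax paper) that the softplus arguments $sm\pm s\Gamma z_i$ are positive on the correctly-labelled side. Because $1+u$ is already affine, summing over each class immediately produces the centroids and the constant $K_2=N(1+sm)$. Your secant approach is sound in spirit and would give \emph{an} affine upper bound, but its intercept depends on the domain radius $c$ (as in Theorem~\ref{thm:bounding_log_loss}, where $C_2=N\ln(1+e^{c})-\tfrac{Nc}{2}$), not on $m$; it will not reproduce the specific $K_2=N(1+sm)$ stated here. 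So the missing ingredient is the positivity of the exponents together with the elementary bound $\log(1+e^{u})<1+u$, which is what pins down $K_2$.
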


\begin{proof}
Rewriting $L_{AM}$ by collecting all the terms for the positive and negative classes, we get
\begin{align*}
    L_{AM} = -&\sum_{i:y_i=0}^{} \log \frac{e^{s \cdot\left(W_{-}^{T} z_{i}-m\right)}}{e^{s \cdot\left(W_{-}^{T} z_{i}-m\right)}+ e^{s W_{+}^{T} x_{i}}} + \\
    -&\sum_{i:y_i=1}^{} \log \frac{e^{s \cdot\left(W_{+}^{T} z_{i}-m\right)}}{e^{s \cdot\left(W_{+}^{T} z_{i}-m\right)}+ e^{s W_{-}^{T} x_{i}}} \\
    = &\sum_{i:y_i=0}^{} \log 1+{e^{sm+sz_i(W_{+}-W_{-})}} + \\
    &\sum_{i:y_i=1}^{} \log 1+{e^{sm+sz_i(W_{-}-W_{+})}}
\end{align*}

\paragraph{Lower Bound} Let $g^+ = \log [1+{e^{sm+sz_i(W_{-}-W_{+})}}]$ and $g^- = \log [1+{e^{sm+sz_i(W_{+}-W_{-})}}]$. Note that $g^{+}$ and $g^{-}$ are convex functions. Applying Jensen's inequality on $g^{+}$ and $g^{-}$ we get:

\begin{align*}
    L_{AM} = &\sum_{i:y_i=0}^{} \log [1+{e^{sm+sz_i(W_{+}-W_{-})}}] + \\
    &\sum_{i:y_i=1}^{} \log[1+{e^{sm+sz_i(W_{-}-W_{+})}}]\\
    &> N^{+}\log[1+{e^{sm+sz\mu^{+}(W_{+}-W_{-})}}] +\\
    & N^{-}\log[1+{e^{sm+sz\mu^{-}(W_{-}-W_{+})}}]
\end{align*}

Since $\log(1+e^x) > \log2 + \frac{x}{2}\ \forall x$, we can further simplify the above expression.

\begin{align*}
    L_{AM} > &N^{-}\log[1+{e^{sm+sz\mu^{+}(W_{+}-W_{-})}}] +\\
    & N^{+}\log[1+{e^{sm+sz\mu^{-}(W_{-}-W_{+})}}] \\
    & > N^{-}[\log2 + \frac{1}{2}(sm + s\mu^{-}(W_{+}-W_{-}))] + \\
    & N^{+}[\log2 + \frac{1}{2}(sm + s\mu^{+}(W_{-}-W_{+}))] \\
    & > N\left(\log2 + \frac{sm}{2}\right) - s(W_{+}-W_{-})[N^{+}\mu^{+} - N^{-}\mu^{-}]
\end{align*}

\paragraph{Upper Bound} For all $x>0$, $\log(1+ e^x) < 1 + x$. Since $sm+sz_i(W_{+}-W_{-}) > 0$ for all $z_i$ s.t. $y_i=0$ and $sm+sz_i(W_{-}-W_{+}) > 0$ for all $z_i$ s.t. $y_i=1$ (see \cite{wang2018additive}), we can safely apply the above mentioned inequality to get an upper bound for $L_{AM}$.

\begin{align*}
    L_{AM} = &\sum_{i:y_i=0}^{} \log [1+{e^{sm+sz_i(W_{+}-W_{-})}}] + \\
    &\sum_{i:y_i=1}^{} \log[1+{e^{sm+sz_i(W_{-}-W_{+})}}] \\
    L_{AM} < &N^{+}[1+sm+sz_i(W_{-}-W_{+})] + \\ &N^{-}[1+sm+sz_i(W_{-}-W_{+})] \\
    < &(N^{+} + N^{-})(1+sm) + sN^{-}\mu^{-}(W_{+}-W_{-}) + \\
    & sN^{+}\mu^{+}(W_{-}-W_{+}) \\
    < &N(1+sm) - s(W_{+}-W_{-})(N^{+}\mu^{+} - N^{-}\mu^{-})
\end{align*}

\end{proof}



\subsection{Theorem on Convergence of CAC}
\label{app:cac_convergence}
\setcounter{theorem}{1}

\begin{theorem}
Algorithm~\ref{algo:cac} converges to a local minimum.
\end{theorem}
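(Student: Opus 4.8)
The plan is to treat Algorithm~\ref{algo:cac} as a discrete monotone-descent procedure over the finite set of feasible clusterings and then invoke a finiteness argument. First I would fix the configuration space $\mathcal{P}$ to be the set of all assignments of the $N$ points to $k$ clusters that satisfy the feasibility constraint enforced in the inner loop, namely that a reassignment is disallowed whenever removing $x_i$ from $C_p$ would leave $C_p$ containing points of only one class. Since there are finitely many ways to assign $N$ points to $k$ clusters, $\mathcal{P}$ is finite, and the cost $\phi(\{C_j\}_1^k)$ is a well-defined real-valued function on $\mathcal{P}$.

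The key step is a strict-descent lemma: every reassignment accepted by the algorithm strictly decreases the global cost $\phi$. This holds essentially by construction, because $\Phi(x_i; C_p, C_q)$ is defined (Appendix~\ref{app:cac_details}) as the exact change $\Gamma^+(C_q, x_i) + \Gamma^-(C_p, x_i)$ in the total cost incurred by moving $x_i$ from $C_p$ to $C_q$, \emph{including} the recomputation of the affected centroids $\mu$, $\mu^+$, $\mu^-$. A move is executed only when $\Phi(x_i; C_p, C_q) < 0$ for the minimizing target, so the post-move value of $\phi$ is strictly smaller than its pre-move value. This is exactly the property that fails for Lloyd's method in our setting (see the Remark on Hartigan vs.\ Lloyd) and is the reason Hartigan's update preserves monotonicity even with the separability term present.

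Combining these two observations finishes the argument. Because the cost strictly decreases at each accepted move while $\phi$ takes values in a finite set, no configuration in $\mathcal{P}$ can be revisited; hence only finitely many accepted moves can occur and the algorithm terminates after finitely many rounds. At termination every point has been checked and none can be moved to reduce the cost, i.e.\ $\Phi(x_i; C_p, C_q) \ge 0$ for all points $x_i$ and all feasible targets $C_q$. This is precisely the first-order optimality condition for the single-point-reassignment neighborhood, so the terminal clustering is a local minimum of $\phi$ over $\mathcal{P}$.

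The main obstacle I anticipate is not the descent or finiteness steps themselves but making the notion of ``local minimum'' precise and confirming that the finite-value claim is rigorous. The subtraction of $\alpha\,\|\mu^+(C_j)-\mu^-(C_j)\|^2$ raises the worry that $\phi$ might be driven arbitrarily negative; however, each centroid is a convex combination of the fixed, finite data set, so $\|\mu^+(C_j)-\mu^-(C_j)\|$ is uniformly bounded and the separability term cannot cause unboundedness. I would therefore state carefully that ``local minimum'' is meant with respect to the neighborhood generated by single-point reassignments within the feasible set $\mathcal{P}$, and verify that the no-single-class-cluster constraint is respected at every step, so that the terminal configuration is optimal within $\mathcal{P}$ rather than merely in an unconstrained relaxation.
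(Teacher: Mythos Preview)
Your proposal is correct and follows essentially the same approach as the paper: both arguments rest on the observation that the Hartigan update condition $\Phi(x_i;C_p,C_q)<0$ is, by definition, the exact change in the global cost, so every accepted move strictly decreases $\phi$. Your treatment is in fact more complete than the paper's, which omits the explicit finiteness/termination step and justifies ``local minimum'' only by noting that perturbing a binary cluster indicator by $\epsilon<1$ leaves the feasible set; your characterization via the single-point-reassignment neighborhood is the same in spirit but more precise.
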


\begin{proof}
To prove convergence, it suffices to verify that the cost function, that re-assigns point $x$ from cluster $C_p$ to cluster $C_q$, monotonically decreases at each iteration.
This is equivalent to showing
$
\phi(C_p\setminus\left\{x_i\right\}) + \phi(C_q\cup \left\{x_i\right\}) < \phi(C_p) + \phi(C_q) 
\implies (\phi(C_p\setminus\left\{x_i\right\}) - \phi(C_p))  + (\phi(C_q\cup \left\{x_i\right\}) - \phi(C_q)) < 0 
\implies \Gamma^+(C_q ,x) + \Gamma^-(C_p , x) < 0 
\implies \Phi(x_i, C_p, C_q) < 0.
$

$\Phi(x_i, C_p, C_q) < 0$ is exactly the update condition for the algorithm that ensures that the total cost function is always decreasing with every point update and in every iteration.
If we move $\epsilon<1$ away from our binary cluster indicator then we do not get a binary result anymore, which is not in the feasible set and which is hence not minimizing the objective. Hence the clustering derived is trivially a local minimum.
\end{proof}

\begin{figure*}[!htb]
\begin{tabular}{cccc}
\subfloat[Effect of Inter Class Separation (ICS) on F1 and AUC.]{\includegraphics[width = 0.24\textwidth]{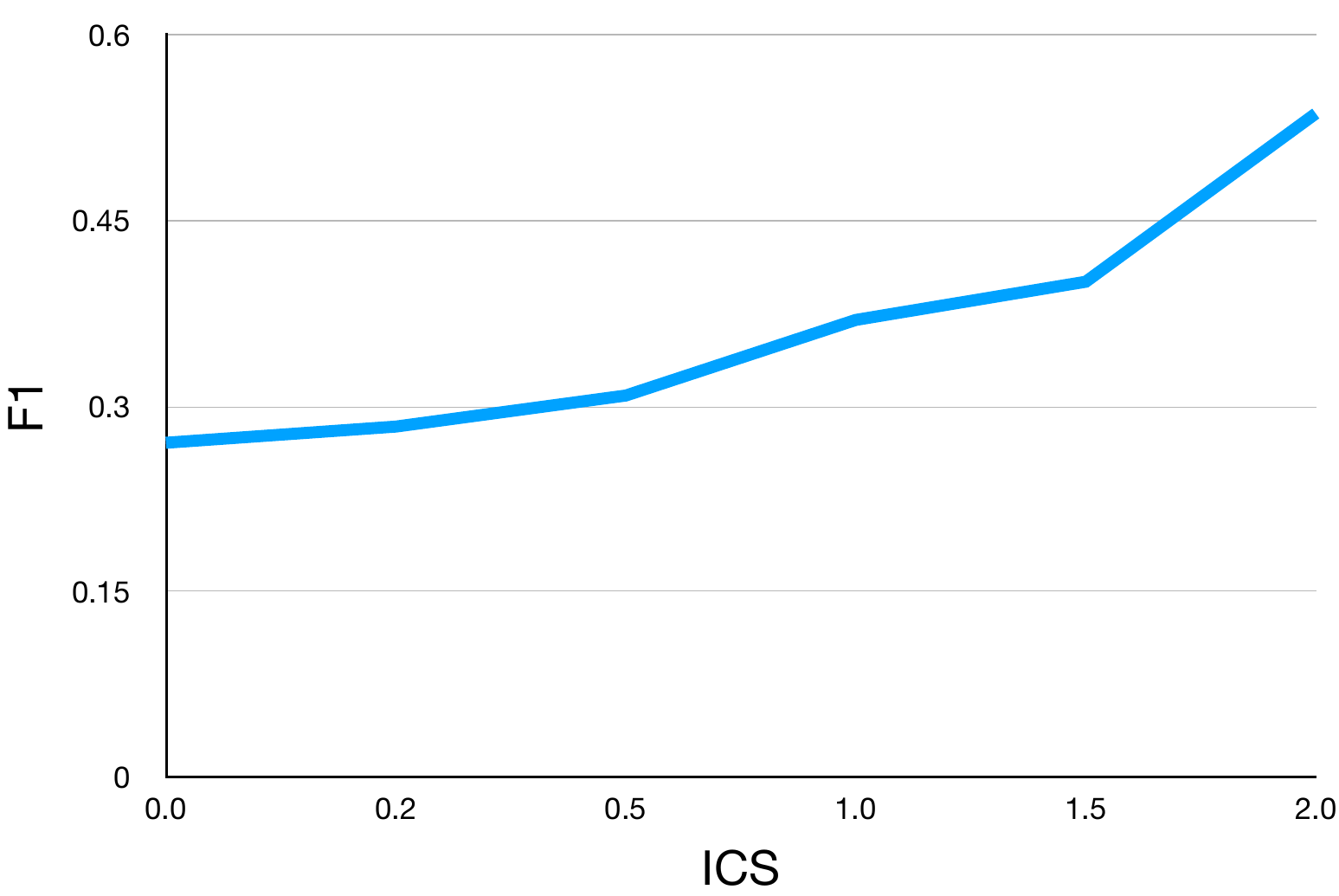}} &

\subfloat[Effect of Outer Cluster Separation (OCS) on F1 and AUC.]{\includegraphics[width = 0.24\textwidth]{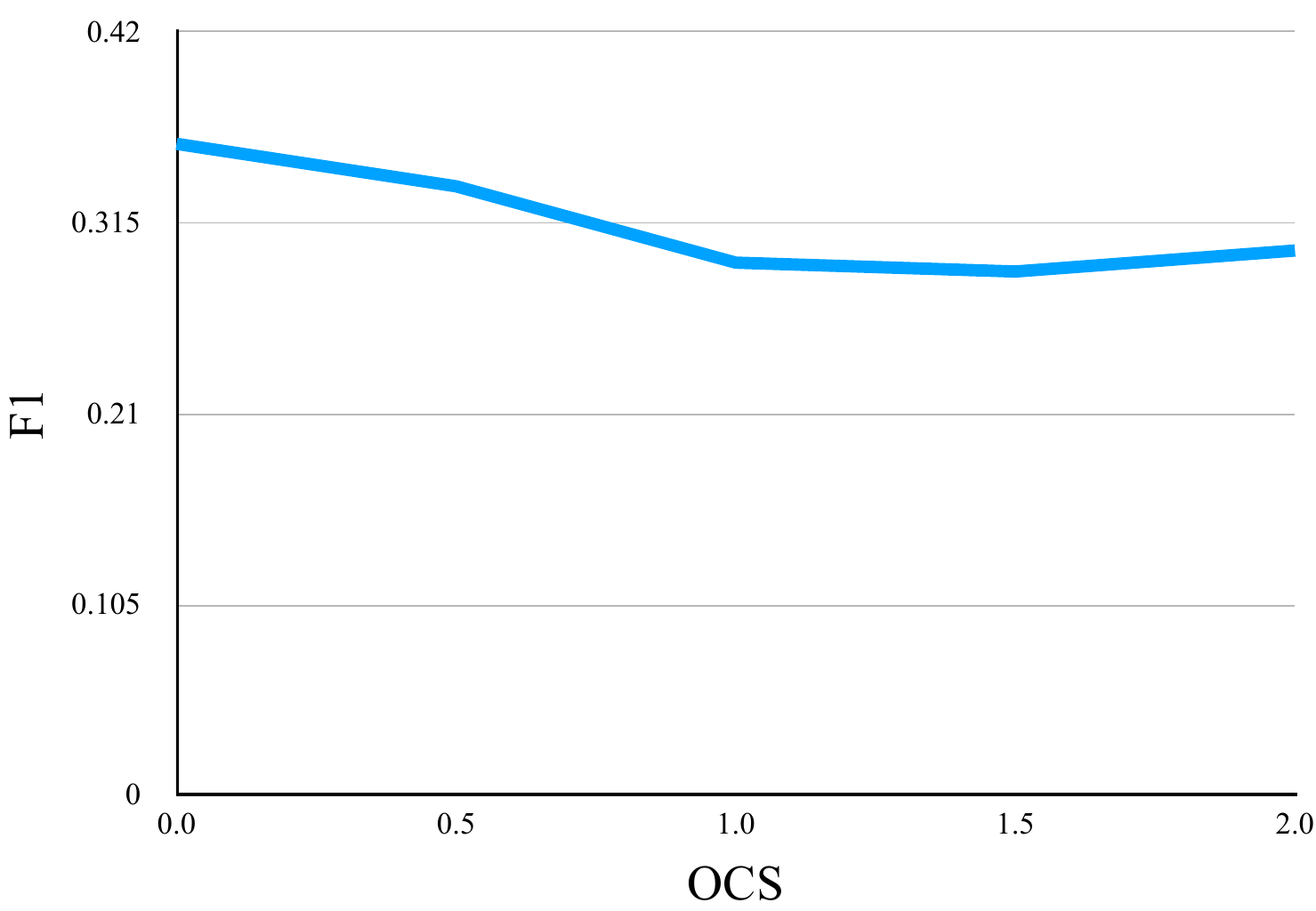}} &

\subfloat[Effect of number of natural clusters ($K$) on F1 and AUC.]{\includegraphics[width = 0.24\textwidth]{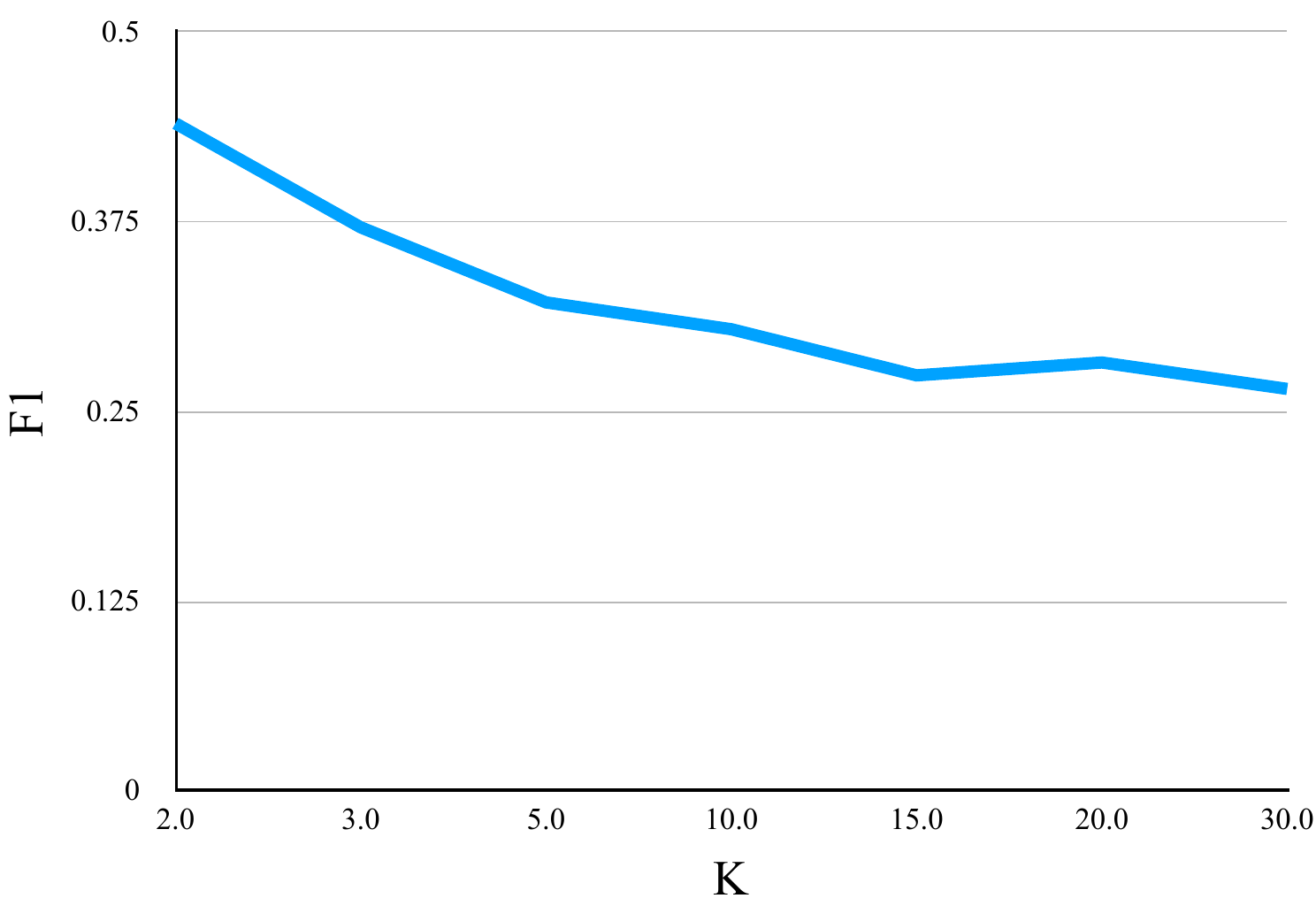}} &
\subfloat[AUC with varying \#classifiers ($k$) and Natural \#Clusters ($K$).]{\includegraphics[width = 0.24\textwidth]{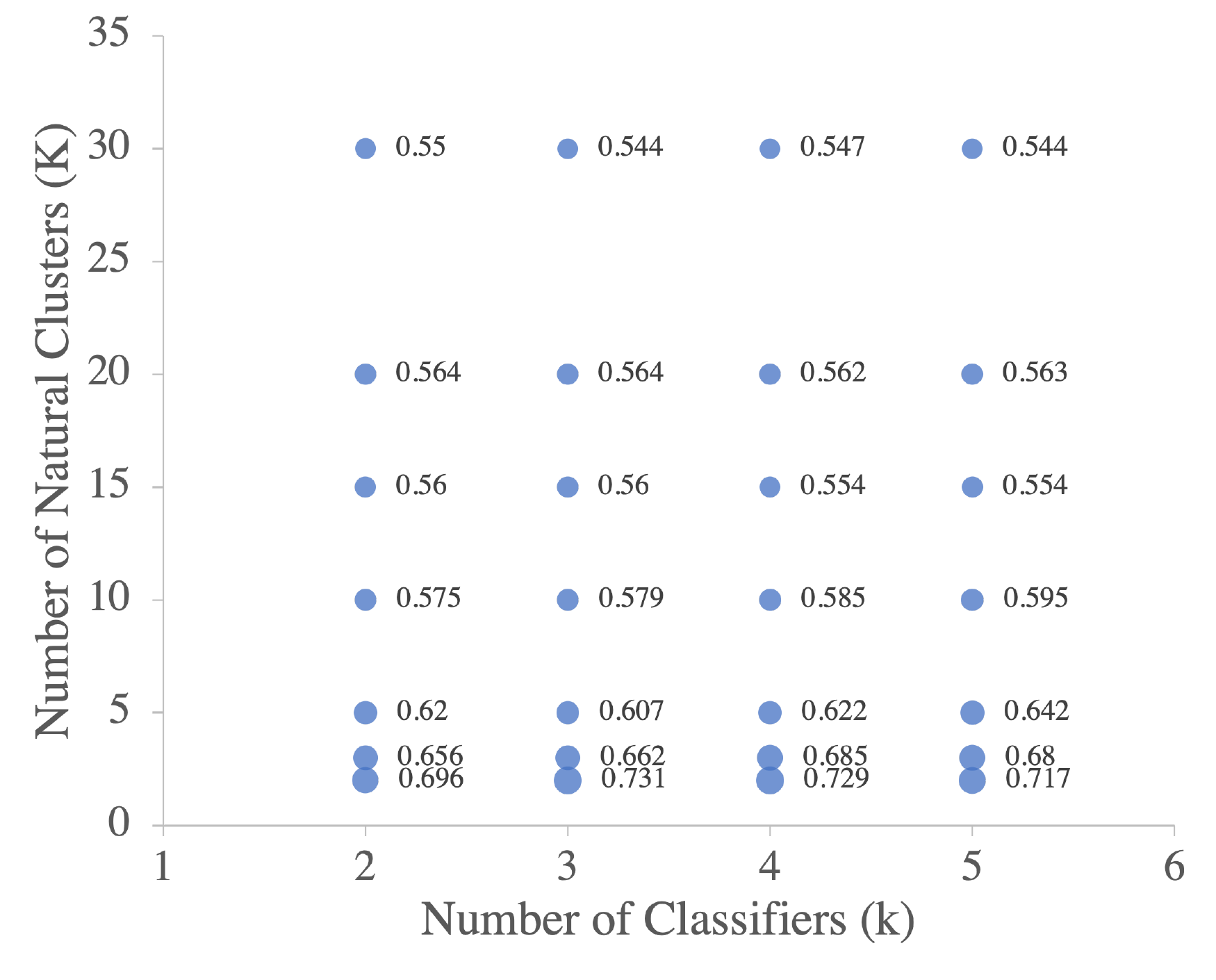}}
\end{tabular}
\caption{Evaluating the effect of different parameters on the performance of CAC}
\label{fig:cac_simulations}
\end{figure*}

\subsection{Theorem on Time Complexity of CAC}
\label{app:cac_time_complexity}

\begin{theorem}
The time complexity for each round of CAC is $O(ndk)$.
\end{theorem}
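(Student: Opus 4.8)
The plan is to charge the running time of a single round---one full sweep of the outer \textbf{for} loop over the $n$ points in Algorithm~\ref{algo:cac}---against the work done per point, and then against the work done per candidate cluster within each point. First I would observe that a round processes each of the $n$ data points exactly once, so it suffices to show that handling a single point $x_i$ costs $O(dk)$; multiplying by $n$ then yields the claim.

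For a fixed point $x_i$ currently residing in cluster $C_p$, the per-point work decomposes into three pieces: (i) the test of whether removing $x_i$ would make $C_p$ single-class; (ii) the computation of $q:=\argmin_j \Phi(x_i;C_p,C_j)$; and (iii) the actual reassignment together with the centroid updates. Piece (i) is $O(1)$ provided we maintain, for every cluster, the counts of positive and negative labels, which lets us detect in constant time whether removing $x_i$ would empty a class. Piece (iii) is $O(d)$: only the two clusters $C_p$ and $C_q$ change, and their centroids $\mu$, $\mu^+$, $\mu^-$ admit incremental rank-one updates---adjusting a running sum of $d$-dimensional vectors and the associated counts---rather than a fresh $O(|C|d)$ recomputation.

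The heart of the argument, and the step I expect to be the main obstacle, is piece (ii): bounding by $O(dk)$ the cost of evaluating $\Phi(x_i;C_p,C_j)=\Gamma^+(C_j,x_i)+\Gamma^-(C_p,x_i)$ across all $k$ clusters. The key is to certify that each single evaluation of $\Gamma^+$ (and symmetrically $\Gamma^-$) takes only $O(d)$ time. This requires exhibiting closed-form expressions for the change in the CAC cost, $\phi(C_j\cup\{x_i\})-\phi(C_j)$, purely in terms of the maintained quantities $\mu(C_j)$, $\mu^+(C_j)$, $\mu^-(C_j)$, the cluster and class sizes, and $x_i$ itself. Both the variance term $\sum_{x\in C_j}\|x-\mu(C_j)\|^2$ and the class-separability term $\alpha\|\mu^+(C_j)-\mu^-(C_j)\|^2$ must be shown to update under a hypothetical single-point insertion using only $O(d)$ arithmetic on $d$-dimensional vectors, with no summation over the (possibly large) membership of $C_j$. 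For the variance term this follows from the standard Hartigan identity, under which insertion of $x_i$ raises the intra-cluster sum of squares by $\frac{|C_j|}{|C_j|+1}\|x_i-\mu(C_j)\|^2$, an $O(d)$ quantity. For the separability term, the new positive or negative centroid is a convex-combination update of the old one, so the squared-norm difference $\|\mu^+-\mu^-\|^2$ reduces to an $O(d)$ dot-product computation.

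Once each of the $k$ cluster evaluations is certified as $O(d)$, the $\argmin$ scan in piece (ii) costs $O(dk)$ per point. Summing the three pieces gives $O(dk)$ per point, and multiplying by the $n$ points swept in a round yields the claimed $O(ndk)$ bound, completing the argument. The only delicate point throughout is maintaining the running sums and counts consistently so that none of the incremental formulas secretly hides an $O(|C_j|)$ re-summation.
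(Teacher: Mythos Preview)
Your proposal is correct and follows essentially the same route as the paper: reduce to $O(dk)$ per point by showing each $\Gamma^+$ and $\Gamma^-$ admits an $O(d)$ closed-form via the Hartigan/bias--variance identity for the variance part and the rank-one centroid update for the separability part, with incremental maintenance of $\mu,\mu^+,\mu^-$ and class counts. The paper's proof differs only cosmetically, writing the variance increment as $\|\tilde\mu(C_j)-x_i\|^2+|C_j|\cdot\|\tilde\mu(C_j)-\mu(C_j)\|^2$ rather than your equivalent $\frac{|C_j|}{|C_j|+1}\|x_i-\mu(C_j)\|^2$, and spelling out the explicit case split on $y_i$ for the $\tilde\mu^\pm$ updates.
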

\begin{proof}
For the time complexity of each round, it suffices to prove the running time for each point $x_i$ is $O(dk)$. 
%
For each $j\in [k]$, we denote the centroid, positive centroid and negative centroid of $C_j\cup \left\{x_i\right\}$ as $\tilde{\mu}(C_j)$, $\tilde{\mu}^+(C_j)$ and $\tilde{\mu}^-(C_j)$ respectively.
Given a cluster $C_j$ and a point $z\in \mathbb{R}^d$, we denote the clustering cost of $C$ w.r.t. $z$ to be
\[
g(C_j,z) := \sum_{x\in C_j} \|x-z\|^2.
\]

A standard result from the $k$-means literature \cite{dasgupta2008hardness} is the following bias-variance decomposition of the $k$-means cost function:
\begin{equation*}
    g(C_j, z) = \phi(C_j,\mu(C_j)) + |C_j| \cdot \|\mu(C_j) - z\|^2.
\end{equation*}
Note that the above result holds for CAC cluster cost function also as can be shown easily by a little algebra. Based on this property, we have

\begin{align*}
    &\Gamma^+(C_j,x_i) \\
    = & \|\tilde{\mu} - x_i\|^2 + |C_j|\cdot \|\tilde{\mu}(C_j) - \mu(C_j)\|^2 \\
    & +\alpha |C_j|\cdot \|\mu^{+}(C_j)- \mu^{-}(C_j)\|^2 \\
    &- \alpha (|C_j|+1)\cdot \|\tilde{\mu}^{+}(C_j) - \tilde{\mu}^{-}(C_j)\|^2 \\
    &\Gamma^-(C_j, x_i) \\
    = & -\|\mu(C_j) - x_i\|^2 - (|C_j|-1) \|\tilde{\mu}(C_j) - \mu(C_j)\|^2 \\
    & + \alpha|C_j|\cdot \|\mu^{+}(C_j) -\mu^{-}(C_j)\|^2 \\
    & - \alpha (|C_j|-1)\cdot \|\tilde{\mu}^{+}(C_j) - \tilde{\mu}^{-}(C_j)\|^2
\end{align*}

Then it suffices to prove that both $\Gamma^+(C_j,x_i)$ and $\Gamma^-(C_j, x_i)$ can be computed in $O(d)$ time, which implies $O(dk)$ time for computing $q:= \argmin_{j} {\Phi(x_i; C_p, C_j)}$.
Moreover, by the above formulations, it suffices to prove that $\tilde{\mu}(C_j)$, $\tilde{\mu}^+(C_j)$ and $\tilde{\mu}^-(C_j)$ can be computed in $O(d)$ time.
Note that $\tilde{\mu}(C_p) = \frac{|C_p|\cdot \mu(C_p) - x_i}{|C_p|-1}$ and $\tilde{\mu}(C_j) = \frac{|C_j|\cdot \mu(C_j) + x_i}{|C_j|+1}$ for $j\neq p$.
Hence, each $\tilde{\mu}(C_j)$ can be computed in $O(d)$ time.
We discuss the following two cases for the computation of $\tilde{\mu}^+(C_j)$ and $\tilde{\mu}^-(C_j)$.

\begin{itemize}
        \item Case $y_i = 1$. We have that $\tilde{\mu}^+(C_p) = \frac{\left(\sum_{x_l \in \mathcal{C}_p} y_l \right)\cdot \mu_p^+ - x_i}{\sum_{x_l \in \mathcal{C}_p} y_l - 1}$ and $\tilde{\mu}^-(C_p) = \mu^-(C_p)$. 
        For $j\neq p$, we have that $\tilde{\mu}^+(C_j) \leftarrow \frac{\left(\sum_{x_l \in \mathcal{C}_j} y_l\right)\cdot \mu_j^+ + x_i}{\sum_{x_l \in \mathcal{C}_j} y_l + 1}$ and $\tilde{\mu}^-(C_j) = \mu^-(C_j)$. 
        \item Case $y_i = 0$. We have that $\tilde{\mu}_p^- \leftarrow \frac{\left(\sum_{x_l \in \mathcal{C}_p} (1-y_l)\right)\cdot \mu_p^- - x_i}{\sum_{x_l \in \mathcal{C}_p} (1-y_l) - 1}$ and $\tilde{\mu}^+(C_p) = \mu^+(C_p)$. 
        For $j\neq p$, we have that $\tilde{\mu}_j^- \leftarrow \frac{\left(\sum_{x_l \in \mathcal{C}_j} (1-y_l)\right)\cdot \mu_j^- + x_i}{\sum_{x_l \in \mathcal{C}_j} (1-y_l) + 1}$ and $\tilde{\mu}^+(C_j) = \mu^+(C_j)$.  
    \end{itemize}
    
By the above update rules, both $\tilde{\mu}^+(C_j)$ and $\tilde{\mu}^-(C_j)$ can be computed in $O(d)$ time, which completes the proof.
\end{proof}

\subsection{Hyperparameter Details}
\label{app:cac_hyperparameters}

CAC's $\alpha$ values are found by performing a $5$-fold CV Grid search (see App. \ref{app:cac_hyperparameters}). Table \ref{tab:alpha} shows the tuned values of $\alpha$ used by different variants of CAC on different datasets. We find these values by performing a $5$-fold Grid search cross validation. Figure \ref{fig:cac_loss} shows how the CAC cost function decreases monotonically (as proved in Theorem \ref{app:cac_convergence}) over iterations for different values of $\alpha$.

\begin{figure*}[t]
    \centering
    \makebox[\textwidth][c]{\includegraphics[width=\linewidth]{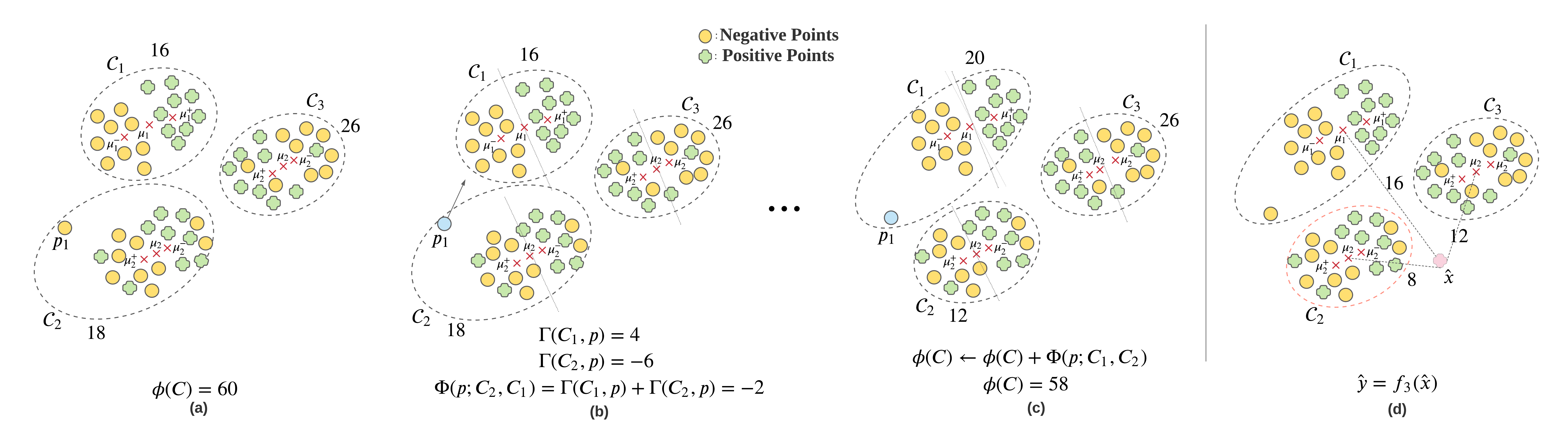}}%
    \caption{CAC problem setting (with dummy values for illustrative purposes): (a) The total CAC cost of this clustering is $60$; (b) Point $p_1$ is selected to be reassigned to cluster $C_1$ based on the cluster update equations; (c) $p_1$ is assigned to $C_1$ and the cost functions of $C_1$ and $C_2$ are updated; (d) At testing time, $\hat{x}$ is assigned to cluster $C_2$ as it lies nearest to $\hat{x}$.}
    \label{fig:concept}
\end{figure*}

\begin{table*}[!htb]
\centering
\caption{Tuned $\alpha$ values used for CAC in experiments found after 5-fold GridCV search}
\begin{tabular}{llllllllll}
\toprule
Classifier & LR & SVM & LDA & Perceptron & RF & KNN & SGD & Ridge & XGB \\
\midrule
Diabetes & 2.5 & 2.5 & 2.5 & 2.5 & 2.5 & 2.5 & 2.5 & 2.5 & 3 \\
CIC & 0.05 & 0.5 & 0.05 & 0.5 & 0.5 & 0.5 & 0.01 & 0.5 & 0.01 \\
WID-M & 0.05 & 0.5 & 0.05 & 0.5 & 0.5 & 0.5 & 0.01 & 0.5 & 0.01 \\
\bottomrule
\end{tabular}
\label{tab:alpha}
\end{table*}

\begin{table}[!htb]
\centering
\footnotesize
\captionsetup{font=footnotesize}

\caption{F1 scores of CAC compared against baseline and $k$-means+X for different base classifiers on different datasets evaluated on a separate held out testing dataset.
Highlighted values signify the best result in that category.
}
\resizebox{\columnwidth}{!}{%
  \begin{tabular}{lccc}
    \toprule
    \textbf{Dataset/Classifier} & Diabetes & CIC & WID-M \\
      \midrule
        LR & {$0.111$} & {$0.443$} & {$0.398$} \\
        $KM$ + LR & {$0.503$} & {$0.492$} & {$0.41$} \\
        CAC + LR & {$\mathbf{0.505}$} & {$\mathbf{0.492}$} & {$\mathbf{0.41}$} \\
        \midrule
        
        SVM & {$\mathbf{0.523}$} & {$0.255$} & {$0.152$} \\
        $KM$ + SVM & {$0.116$} & {$0.426$} & {$\mathbf{0.354}$} \\
        CAC + SVM & {$0.506$} & {$\mathbf{0.452}$} & {$0.332$} \\
        \midrule
        
        XGB (n\_est=10) & {$0.236$} & {$0.454$} & {$0.41$} \\
        $KM$ + XGB & {$0.358$} & {$0.486$} & {$0.453$} \\
        CAC + XGB & {$\mathbf{0.506}$} & {$\mathbf{0.49}$} & {$\mathbf{0.457}$} \\
        \midrule
        
        LDA & {$0.112$} & {$0.479$} & {$\mathbf{0.46}$} \\
        $KM$ + LDA & {$0.24$} & {$\mathbf{0.497}$} & {$0.427$} \\
        CAC + LDA & {$\mathbf{0.506}$} & {$0.492$} & {$0.419$} \\
        \midrule
        
        Perceptron& {$0.349$} & {$0.421$} & {$\mathbf{0.346}$} \\
        $KM$ + Perceptron & {$0.333$} & {$0.375$} & {$0.342$} \\
        CAC + Perceptron & {$\mathbf{0.506}$} & {$\mathbf{0.45}$} & {$0.331$} \\
        \midrule
        
        RF (n\_tree=10) & {$0.406$} & {$0.308$} & {$0.345$} \\
        $KM$ + RF & {$0.407$} & {$0.313$} & {$\mathbf{0.347}$} \\
        CAC + RF & {$\mathbf{0.505}$} & {$\mathbf{0.449}$} & {$0.332$} \\
        \midrule
        
        KNN & {$0.406$} & {$0.218$} & {$0.219$} \\
        $KM$ + KNN & {$0.407$} & {$0.235$} & {$0.237$} \\
        CAC + KNN & {$\mathbf{0.506}$} & {$\mathbf{0.45}$} & {$\mathbf{0.331}$} \\
        \midrule
        
        SGD & {$0.204$} & {$0.433$} & {$0.387$} \\
        $KM$ + SGD& {$0.198$} & {$\mathbf{0.444}$} & {$0.383$} \\
        CAC + SGD & {$\mathbf{0.506}$} & {$0.432$} & {$\mathbf{0.393}$} \\
        \midrule
        
        Ridge & {$0.104$} & {$0.347$} & {$0.285$} \\
        $KM$ + Ridge & {$0.116$} & {$0.372$} & {$0.294$} \\
        CAC + Ridge & {$\mathbf{0.506}$} & {$\mathbf{0.45}$} & {$\mathbf{0.331}$} \\
        \bottomrule
  \end{tabular}
  }
  \label{tab:cac_f1}
\end{table}

\subsection{Results on Synthetic Data}
\label{app:synth_results_cac}

We analyse how the performance of CAC depends on the following data/algorithm parameters:
\begin{enumerate}[noitemsep,topsep=0pt,labelindent=0em,leftmargin=*]
    \item Inner Class Separation ($ICS$): Distance between class centroids in an individual cluster.
    \item Outer Cluster Separation ($OCS$): Distance between cluster centroids.
    \item Number of natural clusters in the dataset ($K$). 
    \item Number of classifiers used in CAC ($k$), i.e., the number of clusters used as input.
\end{enumerate}

To study the performance of CAC as a function of these $4$ parameters, we modify the \texttt{make classification} function of \textit{sklearn} library \cite{scikit-learn}, to generate custom datasets with varying ICS, OCS, and $K$. 
We choose Logistic Regression (LR) as the base classifier for CAC while testing its performance on the synthetic dataset.
The range of these parameters in our simulations is as follows: $ICS \in \{0, 0.2, 0.5, 1, 1.5, 2\}$, $OCS \in \{1, 1.5, 2\}$, $K \in \{2, 3, 5, 10, 15, 20, 30\}$ and $k \in \{2, 3, 4, 5\}$.
We run CAC+LR for all combinations of parameters.

Figure \ref{fig:cac_simulations} presents the results of the experiments. In each subfigure, the results are averaged over all the parameters not shown in the axes. In Figure \ref{fig:cac_simulations}a, note that the performance of CAC improves as Inner Class Separation within the clusters increases. This is in line with our hypothesis that if the classes are well separated within the clusters, then it will improve the performance of the classifiers. The performance of CAC does not depend much on the Outer Cluster Separation (Figure \ref{fig:cac_simulations}b). This is also expected as a classifier will be relatively unaffected by the presence of another cluster, irrespective of its distance as the data in that cluster is not a part of its training set.

As the number of natural clusters in the dataset increases, the performance of CAC decreases (Figure \ref{fig:cac_simulations}c).
This phenomenon is expected from linear classifiers as they cannot handle nonlinearities in data. Hence, dividing the data into multiple groups will decrease the piece-wise nonlinearity and the overall algorithm will work better \cite{gu2013clustered}. In Figure \ref{fig:cac_simulations}d, we observe that the results are better for the cases when $k \ge K$ (i.e., closer to the points near the x-axis). This shows that the performance is better when the number of clusters given as input is not less than the natural clusters in the data, compared to the case when the input number of clusters is less than the natural number of clusters.
In the latter case, each (linear) classifier has to learn a non-linear boundary due to the intrinsic cluster structure that leads to a decrease in the performance.

\begin{table*}[htb!]
\centering
\footnotesize
\captionsetup{font=footnotesize}

\caption{AUC of \deepmethod\ and other baselines. Best values in \textbf{bold}. KM, DCN, and IDEC use the same local classifier as used by \deepmethod.}
\label{tab:results_auc}
\resizebox{2\columnwidth}{!}{%
\begin{tabular}{ccccccccc}
\toprule
{\textbf{Dataset}} & \textbf{k} & \textbf{KM-Z} & \textbf{DCN-Z} & \textbf{IDEC-Z} & \textbf{DMNN} & \textbf{AC-TPC} & \textbf{GRASP} & \textbf{\deepmethod} \\
 & & $-$ & (2017) & (2017) & (2020) & (2020) & (2021) & (Ours) \\

\midrule

Diabetes & 2 & {$0.534 \pm 0.012$} & {$0.571 \pm 0.004$} & {$0.565 \pm 0.007$} & {$0.536 \pm 0.005$} & {$0.0 \pm 0.0$} & {$0.569 \pm 0.006$} & {$0.57 \pm 0.006$} \\ 
& 3 & {$0.517 \pm 0.02$} & {$0.569 \pm 0.003$} & {$0.558 \pm 0.012$} & {$0.546 \pm 0.007$} & {$0.540 \pm 0.0$} & {$0.565 \pm 0.005$} & {$\mathbf{0.573 \pm 0.004}$} \\
& 4 & {$0.508 \pm 0.027$} & {$0.569 \pm 0.003$} & {$0.564 \pm 0.007$} & {$0.545 \pm 0.008$} & {$0.5 \pm 0.0$} & {$0.561 \pm 0.004$} & {$0.568 \pm 0.005$} \\ 
\midrule 

CIC & 2 & {$0.606 \pm 0.036$} & {$0.7 \pm 0.054$} & {$0.766 \pm 0.053$} & {$0.737 \pm 0.039$} & {$0.0 \pm 0.0$} & {$\mathbf{0.827 \pm 0.015}$} & {$0.82 \pm 0.021$} \\ 
& 3 & {$0.535 \pm 0.059$} & {$0.705 \pm 0.033$} & {$0.657 \pm 0.039$} & {$0.762 \pm 0.026$} & {$0.757 \pm 0.0$} & {$0.81 \pm 0.022$} & {$0.815 \pm 0.027$} \\
& 4 & {$0.549 \pm 0.052$} & {$0.681 \pm 0.043$} & {$0.687 \pm 0.029$} & {$0.731 \pm 0.039$} & {$0.782 \pm 0.0$} & {$0.799 \pm 0.023$} & {$0.811 \pm 0.013$} \\ 
\midrule 

CIC-LoS & 2 & {$0.576 \pm 0.007$} & {$0.625 \pm 0.017$} & {$0.638 \pm 0.012$} & {$0.61 \pm 0.007$} & {$0.0 \pm 0.0$} & {$0.654 \pm 0.02$} & {$\mathbf{0.664 \pm 0.005}$} \\ 
& 3 & {$0.524 \pm 0.026$} & {$0.611 \pm 0.022$} & {$0.596 \pm 0.028$} & {$0.617 \pm 0.012$} & {$0.630 \pm 0.0$} & {$0.65 \pm 0.026$} & {$0.663 \pm 0.01$} \\
& 4 & {$0.512 \pm 0.02$} & {$0.625 \pm 0.025$} & {$0.623 \pm 0.012$} & {$0.615 \pm 0.009$} & {$0.601 \pm 0.0$} & {$0.639 \pm 0.025$} & {$0.663 \pm 0.005$} \\ 
\midrule 

ARDS & 2 & {$0.605 \pm 0.044$} & {$0.658 \pm 0.052$} & {$0.705 \pm 0.01$} & {$0.584 \pm 0.041$} & $-$ & {$0.457 \pm 0.03$} & {$\mathbf{0.716 \pm 0.022}$}\\ 
& 3 & {$0.546 \pm 0.044$} & {$0.704 \pm 0.022$} & {$0.648 \pm 0.077$} & {$0.571 \pm 0.036$} & {$0.655 \pm 0.019$} & {$0.443 \pm 0.029$} & {$0.705 \pm 0.019$}\\ 
& 4 & {$0.483 \pm 0.055$} & {$0.593 \pm 0.095$} & {$0.697 \pm 0.019$} & {$0.629 \pm 0.024$} & {$0.647 \pm 0.022$} & {$0.564 \pm 0.119$} & {$0.704 \pm 0.039$}\\ 
\midrule 

Sepsis & 2 & {$0.621 \pm 0.033$} & {$0.795 \pm 0.011$} & {$0.792 \pm 0.013$} & {$0.656 \pm 0.021$} & $-$ & {$0.495 \pm 0.029$} & {$\mathbf{0.803 \pm 0.014}$}\\ 
& 3 & {$0.55 \pm 0.09$} & {$0.782 \pm 0.014$} & {$0.781 \pm 0.007$} & {$0.603 \pm 0.055$} & {$0.714 \pm 0.034$} & {$0.474 \pm 0.008$} & {$0.793 \pm 0.034$}\\ 
& 4 & {$0.549 \pm 0.054$} & {$0.691 \pm 0.077$} & {$0.773 \pm 0.014$} & {$0.631 \pm 0.066$} & {$0.684 \pm 0.05$} & {$0.488 \pm 0.01$} & {$0.794 \pm 0.022$}\\ 
\midrule

WID-M & 2 & {$0.783 \pm 0.035$} & {$0.843 \pm 0.044$} & {$0.835 \pm 0.052$} & {$0.722 \pm 0.023$} & {$0.0 \pm 0.0$} & {$0.788 \pm 0.022$} & {$\mathbf{0.854 \pm 0.01}$} \\
& 3 & {$0.669 \pm 0.068$} & {$0.814 \pm 0.055$} & {$0.799 \pm 0.08$} & {$0.708 \pm 0.022$} & {$0.722 \pm 0.0$} & {$0.762 \pm 0.024$} & {$0.851 \pm 0.006$} \\
& 4 & {$0.584 \pm 0.105$} & {$0.795 \pm 0.059$} & {$0.811 \pm 0.053$} & {$0.716 \pm 0.019$} & {$0.734 \pm 0.0$} & {$0.79 \pm 0.015$} & {$0.847 \pm 0.01$} \\ 

\bottomrule
\end{tabular}
}
\end{table*}

\subsection{Results of CAC evaluated on Real Datasets}
\label{app:real_results_cac}

We use 9 different base classifiers, implemented by \textit{sklearn} library (BSD licenced): LR, Linear SVM, XGBoost (\#estimators = 10), Linear Discriminant Analysis (LDA), single Perceptron, Random Forest (\#trees = 10), $k$ Nearest Neighbors ($k=5$), SGD classifier and Ridge classifier (Ridge regressor predicting over the range $[-1,1]$) as a baseline directly and with $k$-means (KM+X) in a `cluster-then-predict' approach. We evaluate CAC on the Diabetes, WID-M and CIC datasets due to computational considerations related to large sizes of the ARDS and Sepsis datasets.

\section{\deepmethod\ Training Details}
\label{app:deepcac_training}

\textbf{Updating Clustering Parameters}
For optimizing the clustering loss, we follow the procedure defined in \cite{yang2017towards}. For fixed network parameters and cluster assignment matrix $M$, the assignment vectors of the current sample, i.e., $s_i$, are updated in an online fashion. Specifically, $s_i$ is updated as follows:

\[
    s_{i,j} = 
\begin{cases}
    1, & \text{if } j = \argmin_{l} \|f(x_i) - \mu_l\|^2 \\
    0, & \text{otherwise}
\end{cases}
\]

In \cite{yang2017towards}, the authors update the cluster centroids in an online manner instead of simply taking the average of every mini-batch as the current minibatch might not be representative of the global cluster structure. So the cluster centroids are updated as follows by a simple gradient step:
\[
\mu_j \xleftarrow{} \mu_j - \frac{1}{c_k^i} (\mu_j - f(x_i))s_{k,i}
\]

where $c_k^i$ is the count of the number of times algorithm assigned a sample to cluster $k$ before handling the incoming sample $x_i$. The class cluster centroids $\mu_k^b$ are also updated in a similar, online manner due to the above mentioned reasons.

\textbf{Updating Autoencoder's weights.}
For fixed $(M, {s_i})$, the subproblem of optimizing $(\Wcal, \Ucal)$ is similar to training an SAE – but with an additional loss term on clustering performance. Modern deep learning libraries like PyTorch allow us to easily backpropagate both the losses simultaneously. To implement SGD for updating the network parameters, we look at the problem w.r.t. the incoming data $x_i$:

\[
\min_{\Ucal, \Wcal} L^{i} = \ell(g(f(x_i)), x_i) +  \beta \|f(x_i) - Ms_i\| + \alpha \sum_{j=1}^{k} L_{AM}
\]

The gradient of the above function over the network parameters is easily computable, i.e., $\nabla X_{\Jcal} L_i = \frac{\partial \ell(g(f(x_i; \Wcal); \Ucal)}{\partial \Jcal} + \alpha \frac{\partial f(x_i)}{\partial \Jcal} (f(x_i) - Ms_i)$, where $\Jcal=(\Wcal, \Ucal, \Vcal)$ is a collection of network parameters and the gradients $\frac{\partial l}{\partial \Jcal}$ can be calculated by back-propagation. Then, the network parameters are updated by 

\[
\Jcal \xleftarrow{} \Jcal - \tau \nabla_{\Jcal} L^i
\]

where $\tau$ is the diminishing learning rate.

\begin{table}[]
\centering
\footnotesize
\captionsetup{font=footnotesize}
\caption{Most distinguishing features amongst clusters}
\begin{tabular}{llll}
\toprule
\textbf{Feature} & {\textbf{C1 Mean}} & {\textbf{C2 mean}} & {\textbf{C3 mean}} \\
\midrule
UrineOutputSum & 10.18 & 11.221 & 14.81 \\
MechVentLast8Hour & 0.424 & 0.524 & 0.903 \\
MechVentDuration & 1568.97 & 1762.353 & 2314.388 \\
WBC\_last & 10.355 & 12.284 & 13.734 \\
GCS\_median & 14.368 & 12.732 & 9.162 \\
Age & 62.202 & 67.46 & 62.134 \\
Creatinine\_last & 0.947 & 1.32 & 1.657 \\
BUN\_last & 17.159 & 25.647 & 30.013 \\
Albumin\_last & 3.091 & 2.984 & 2.756 \\
WBC\_first & 10.65 & 12.509 & 15.313 \\
\bottomrule
\label{tab:cluster_features}
\end{tabular}
\end{table}

\begin{table}[htb]
    \caption{Important Features for mortality prediction in respective clusters. Features common in all three clusters are highlighted in yellow while those common in two are highlighted in purple. The rest are unique to their respective clusters. Class ratios are depicted under cluster sizes.}
    \begin{tabular}{@{}lll@{}}
    \toprule
    $|C_1|=1856$ & $|C_2|=2845$ & $|C_3|=2049$ \\
    $6.25:2.07:1$ & $1.27:1.15:1$ & $1:1.14:3.07$ \\ \midrule
    WBC\_last & HCT\_last & SysABP\_first \\
    SOFA & BUN\_first & GCS\_last \\
    HCT\_first & \cellcolor{yellow!25}{PaO2\_last} & Age \\
    Platelets\_first & Creatinine\_first & \cellcolor{yellow!25}{PaO2\_last} \\
    Na\_last & NIMAP\_lowest & \cellcolor{blue!25}{Glucose\_last} \\
    HR\_highest & MechVentDuration & HR\_last \\
    \cellcolor{blue!25}{HR\_lowest} & Glucose\_highest & Na\_first \\
    NISysABP\_highest & GCS\_highest & PaCO2\_first \\
    \cellcolor{yellow!25}{PaO2\_last} & \cellcolor{blue!25}{HR\_lowest} & DiasABP\_lowest \\
    PaO2\_first & \cellcolor{blue!25}{Glucose\_last} & Weight \\
    \bottomrule
    \end{tabular}
    \label{tab:important_features}
\end{table}

\section{Extended Results}
\label{app:extended_results}

Table \ref{tab:results_auc}
present the AUC score (5-fold averaged on test dataset) of \deepmethod\ and baselines. \deepmethod\ beats GRASP in $5$ out of $6$ experiments.

\section{Case Study: ICU Length of Stay Prediction}
\label{app:case_study}
\looseness=-1

As a case study, we illustrate the use of \deepmethod\ in the CIC-LOS ICU Length of Stay (LoS) prediction data for $k=3$ clusters.
Since we are clustering with a view to induce class separability by making use of training labels, the clusters are influenced by the target label, i.e., LoS indicator, and thus by design \deepmethod\ is expected to find LoS subtypes.
In other words, we expect the inferred clusters to have different risk factors tailored to each underlying subpopulation.
To evaluate this, we examine feature importances for each cluster's local risk model.
We distill the knowledge of local networks 
into a simpler student model \cite{gou2021knowledge},
a Gradient Boosting Classifier in our case,
which provides importance values of characteristics. Note that the number of clusters need not be the same as number of classes in the target variable as is imposed by the cluster assumption.
In Table \ref{tab:important_features}, we list the cluster sizes $C_i$, the proportion of patients in the three quartiles of Length of Stay ($7, 13, 295$ hours) and the top 10 most important features in each cluster.
We see that, as expected, the risk models created by \deepmethod\ deem different sets of features as important 
for predicting mortality. Out of the $10$ most important features, there is 1 feature (PaO2\_last) that is common across all 3 clusters and 2 features are common across the two models. Otherwise, each model has $8$, $7$, and $8$ features unique to the respective clusters.

For the above case study, \deepmethod\ attains an AUPRC score of $0.495$ while the student model trained on the prediction scores of \deepmethod\ attains an AUPRC of $0.358$ (both evaluated on the same training and test datasets). $C_3$ has a majority of patients who spend a large time in the ICU while $C_1$'s majority population spent a relatively small time in the ICU.
In Table \ref{tab:cluster_features} we report the features that are most significant across the 3 clusters (determined using p-values). This indicates that the three clusters are most directly distinguished by the Mechanical Ventilation parameters and the UrineOutput. But within those clusters, the patient LoS depends on variables presented in Table \ref{tab:important_features}. We can cluster the dataset with any other variable also (say Age) and thus discover how LoS depends on different variables for various age groups.


\begin{table}[htb!]
\caption{AUC Scores for Time series datasets. Row-wise best result in bold. $^{\ast}p < 0.05$, $^{\ast\ast} p < 0.01$ indicates significantly better performance of \deepmethod-TS compared to the baseline.}
\begin{tabular}{ccccc}
\toprule
AUC & & ACTPC-TS & GRASP-TS & \deepmethod-TS \\
\midrule
ARDS-TS & 2 & - & $0.493$ & $\mathbf{0.615}^{\ast\ast}$ \\
ARDS-TS & 3 & - & $0.519$ & $\mathbf{0.605}^{\ast}$ \\
ARDS-TS & 4 & - & $0.505$ & $\mathbf{0.565}^{\ast}$ \\
\midrule
Sepsis-TS & 2 & - & $0.491$ & $\mathbf{0.76}^{\ast\ast}$ \\
Sepsis-TS & 3 & - & $0.528$ & $\mathbf{0.769}^{\ast}$ \\
Sepsis-TS & 4 & - & $0.465$ & $\mathbf{0.752}^{\ast\ast}$ \\
\bottomrule
\end{tabular}
\label{tab:auc_ts}
\end{table}



\section{\deepmethod-TS}
\label{app:deepcac_ts}

We extend \deepmethod, ACTPC and GRASP to handle time series data by replacing the Stacked Autoencoder with LSTM based autoencoder. $\alpha$ and $\beta$ for \deepmethod-TS are set to 20 and 5 respectively. For GRASP and ACTPC, we use the hyperparameters recommended in their original papers.

We evaluate all algorithms on the ARDS and Sepsis datasets as mentioned in table \ref{tab:datasets}. We use the first 24 hours of data to predict risk (of each condition, separately) in the remaining ICU stay, which is aligned with a hospital-centric schedule for prediction.
Risk prediction is formulated as a binary classification problem. The prediction task is to predict whether a patient will develop the condition at point in time after the 24 hours of stay in the ICU.
All patients whose length of ICU stay is $<24$ hours and those who develop the condition within 24 hours of their ICU stay are excluded. 

The AUC and AUPRC scores are mentioned in tables \ref{tab:auc_ts} and \ref{tab:auprc_ts}. We note that \deepmethod-TS performs better than GRASP for both the datasets and for all $k=2,3,4$. ACTPC-TS did not finish in time for both the datasets. All results are 3-fold averaged over the test dataset. The AUC and AUPRC scores for \deepmethod-TS are better than those obtained by \deepmethod\ on the static ARDS and Sepsis datasets as expected.

\end{document}